\icmltitlerunning{Breaking the gridlock in MoE: Consistent and Efficient Algorithms}
\newcommand{\xmark}{\text{\ding{55}}}
\newcommand{\reals}{\mathbb{R}}
\newcommand{\Expect}{\mathbb{E}}
\newcommand{\prob}[1]{\mathbb{P}\left[#1\right]}
\newcommand{\inner}[2]{\langle {#1}, {#2}  \rangle}
\newcommand{\calB}{\mathcal{B}}
\newcommand{\calE}{\mathcal{E}}
\newcommand{\calN}{\mathcal{N}}
\newcommand{\calP}{\mathcal{P}}
\newcommand{\calS}{\mathcal{S}}
\newcommand{\calT}{\mathcal{T}}
\newcommand{\vect}[1]{\boldsymbol{#1}}
\newcommand{\ba}{\vect{a}}
\newcommand{\bb}{\vect{b}}
\newcommand{\be}{\vect{e}}
\newcommand{\bv}{\vect{v}}
\newcommand{\bw}{\vect{w}}
\newcommand{\bx}{\vect{x}}
\newcommand{\by}{\vect{y}}
\newcommand{\bz}{\vect{z}}
\newcommand{\bA}{\vect{A}}
\newcommand{\bW}{\vect{W}}
\newcommand{\norm}[1]{\left \|#1\right \|}
\newcommand{\add}[3]{\sum_{#1=#2}^{#3}}
\newcommand{\sym}{\mathrm{sym}}
\newcommand{\define}{\triangleq}
\newcommand{\pth}[1]{\left( #1 \right)}
\newcommand{\qth}[1]{\left[ #1 \right]}
\newcommand{\sth}[1]{\left\{ #1 \right\}}
\newcommand{\ie}{i.e.\xspace}
\newcommand{\iid}{i.i.d.\xspace}
\DeclareMathOperator*{\argmin}{\arg\!\min}
\newtheorem{theorem}{Theorem}
\newtheorem{lemma}{Lemma}
\newtheorem{assumption}{Assumption}
\newtheorem{remark}{Remark}
\newtheorem{condition}{Condition}
\newtheorem{definition}{Definition}
\newtheorem{example}{Example}
\begin{document}

\twocolumn[
\icmltitle{Breaking the gridlock in Mixture-of-Experts:\\ Consistent and Efficient Algorithms}




\begin{icmlauthorlist}
\icmlauthor{Ashok Vardhan Makkuva}{uiuc_ece}
\icmlauthor{Sewoong Oh}{uw_cs}
\icmlauthor{Sreeram Kannan}{uw}
\icmlauthor{Pramod Viswanath}{uiuc_ece}
\end{icmlauthorlist}

\icmlaffiliation{uiuc_ece}{Department of Electrical and Computer  Engineering, Coordinated Science Laboratory, University of Illinois at Urbana-Champaign, IL, USA}

\icmlaffiliation{uw_cs}{Allen School of Computer Science \& Engineering, University of Washington, Seattle, USA}

\icmlaffiliation{uw}{Department of Electrical Engineering, University of Washington, Seattle, USA}

%
\icmlcorrespondingauthor{Ashok Vardhan Makkuva}{makkuva2@illinois.edu}

\icmlkeywords{Machine Learning, ICML}

\vskip 0.3in
]



\printAffiliationsAndNotice{}  

\begin{abstract}
Mixture-of-Experts (MoE) is a widely popular model for ensemble learning and is a basic building block of highly successful modern neural networks as well as a component in Gated Recurrent Units (GRU) and Attention networks. However, present algorithms for learning MoE, including the EM algorithm and gradient descent, are known to get stuck in local optima. From a theoretical viewpoint, finding an efficient and provably consistent algorithm to learn the parameters remains a long standing open problem for more than two decades. In this paper, we introduce the first algorithm that learns the true parameters of a MoE model for a wide class of non-linearities with global consistency guarantees. While existing algorithms jointly or iteratively estimate the expert parameters and the gating parameters in the MoE, we propose a novel algorithm that breaks the deadlock and can directly estimate the expert parameters by sensing its echo in a  carefully designed cross-moment tensor between the inputs and the output. Once the experts are known, the recovery of gating parameters still requires an EM algorithm; however, we show that the EM algorithm for this simplified problem, unlike the joint EM algorithm, converges to the true parameters. We empirically validate our algorithm on both the synthetic and real data sets in a variety of settings, and show superior performance to standard baselines.

\end{abstract}

\section{Introduction}

In this paper, we study a popular gated neural network architecture known as Mixture-of-Experts (MoE). MoE is a basic building block of highly successful modern neural networks like Gated Recurrent Units (GRU) and Attention networks. A key interesting feature of MoE is the presence of a gating mechanism that allows for specialization of experts in their respective domains. MoE allows for the underlying expert models to be simple while allowing to capture complex non-linear relations between the data. Ever since their inception more than two decades ago \citep{JacJor}, they have been a subject of great research interest \citep{gpmoe, svmmoe,hmegp,  theis,le2016lstm, gross2017hard,sun2017human, wang2018deep} across multiple domains such as computer vision, natural language processing, speech recognition, finance, and forecasting. 

The basic MoE model is the following: let $\bx \in \reals^d$ be the input feature vector and $y \in \reals$ be the corresponding label. Then the discriminative model $P_{y|\bx}$ for the $k$-mixture of experts ($k$-MoE) in the regression setting is:
\begin{align}
P_{y|\bx} &= \sum_{i=1}^k P_{i|\bx} P_{y|\bx,i} \nonumber \\
&=\sum_{i=1}^k \frac{e^{\inner{\bw_i^\ast}{\bx}}}{\sum_{j}e^{\inner{\bw_j^\ast}{\bx}}}  \calN(y|g(\inner{\ba_i^\ast}{\bx}),\sigma^2).
 \label{eq:kmoe}
\end{align}
\prettyref{fig:kmoe} details the architecture for $k$-MoE. 

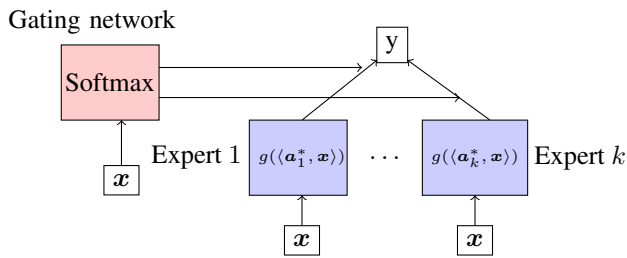
\begin{figure}[H]
\centering
\begin{tikzpicture}[domain=0:1.5,xscale=1,yscale=1]

\filldraw[fill=blue!20!white, draw=black] (-2.3,0) -- (-1.0,0) -- (-1.0,1) -- (-2.3,1) -- cycle;
\node at (-1.6,0.5) {$\tiny g(\langle \ba_1^*, \bx \rangle )$};
\node at (-3.0,0.5) {Expert $1$};

\node at (-0.5,0.5) {$\ldots$};

\filldraw[fill=blue!20!white, draw=black] (0.0,0) -- (1.4,0) -- (1.4,1) -- (0.0,1) -- cycle;
\node at (0.7,0.5) {\tiny $g( \langle \ba_k^*, \bx\rangle )$};
\node at (2.1,0.5) {Expert $k$};

\node[draw] at (0.7,-0.6) {$\bx$};
\draw[->] (0.7,-0.4)  -- (0.7,0);

\node[draw] at (-1.6,-0.6) {$\bx$};
\draw[->] (-1.6,-0.4)  -- (-1.6,0);

\draw[->] (0.9,1)  -- (-0.2,1.8);
\draw[->] (-1.6,1)  -- (-0.6,1.8);
\node[draw] at (-0.4,2) {y};

\filldraw[fill=red!20!white, draw=black] (-3.5,1.0) -- (-4.8,1.0) -- (-4.8,2.0) -- (-3.5,2.0) -- cycle;
\node[text width=2cm] at (-3.75,1.5) {Softmax};
\node[text width=3cm] at (-4.0,2.3) {Gating network};

\draw[->] (-4.0,0.4)  -- (-4.0,1.0);
\node[draw] at (-4.0,0.2) {$\bx$};

\draw[->](-3.5,1.3) -- (0.5,1.3);
\draw[->] (-3.5,1.7) -- (-0.8,1.7);

\end{tikzpicture}
\caption{Architecture for $k$-MoE}
\label{fig:kmoe}
\end{figure}

The interpretation behind \prettyref{eq:kmoe} is that for each input $\bx$, the gating network chooses an expert based on the outcome of a multinomial random variable $z \in [k]$, whose probability depends on $\bx$ in a parametric way, \ie $z|\bx \sim \mathrm{softmax}(\inner{\bw_1^\ast}{\bx}, \ldots, \inner{\bw_k^\ast}{\bx})$. The chosen expert then generates the output $y$ from a Gaussian distribution centred at a non-linear activation of $\bx$, \ie $g(\inner{\ba_z^\ast}{ \bx})$, with variance $\sigma^2$. 
We want to learn the  expert parameters $\ba_i^\ast \in \reals^d$ (also referred to as the regressors) and 
the gating parameters $\bw_i^\ast \in \reals^d$, 
assuming we know the non-linear activation $g:\reals \to \reals$.

This problem of learning MoE 
has been a long standing open problem for more than two decades, 
even though it is a fundamental building block of several state-of-the-art gated neural network architectures.  
Gated neural networks such as GRUs and Sparsely-gated-MoEs 
 have been widely successful in challenging tasks like machine translation \citep{empiricalgru, SMMD+17, vaswani2017attention}. 
Parameters are typically learnt through (stochastic) gradient descent on a non-convex  loss function. 
However, these methods do not possess any theoretical guarantees, even for the simplest gated neural network, which is the MoE. 

On the other hand, existing guarantees for simpler models without gating units do not extend to MoEs. 
Consider the mixture of generalized linear models (M-GLMs) \citep{SedghiA14a,SIM14,YCS16,zhong16glm}, 
which is a strict simplification of 
the $k$-MoE model in \prettyref{eq:kmoe},  
where  $\bw_i^\ast=0$ for all $i\in\{1,\ldots, k\}$. 
 The learning in  M-GLMs is usually done through a combination of spectral methods and greedy methods such as EM. A major limitation of these  methods is that they rely  critically on the fact that the mixing probability is a constant and hence they do not generalize to MoEs (see \prettyref{sec:algorithmsmoe}). In addition, the EM algorithm, which is the workhorse for learning in parametric mixture models, is prone to bad local minima \citep{SedghiA14a,bala17,zhong16glm} (we independently verify this for MoEs in \prettyref{sec:experiments}). 
These theoretical shortcomings and practical relevance of the MoE models lead to the  following fundamental question:

Can we find an efficient and a consistent algorithm (with global initializations) that recovers the true parameters of the model with theoretical guarantees?

In this paper, we address this question precisely and make the following contributions:


\textbf{1) First theoretical guarantees:} We provide the first (poly-time) efficient algorithm  that recovers the true parameters of a MoE model with global initializations 
 (\prettyref{thm:twoexpertspopulation} and \prettyref{thm:twoexpertsEM}). We allow for a wide class of non-linearities which includes the popular choices of identity, sigmoid, and ReLU. To the best of our knowledge, ours is the first work to give global convergence guarantees for MoE.

\textbf{2) Algorithmic innovations:} 
Existing algorithms jointly or iteratively estimate the expert parameters and the gating paramters in the MoE and can get stuck in local minima. In this paper, we propose a novel algorithm that breaks the gridlock and can directly estimate the expert parameters by sensing its echo in a  cross-moment tensor between the inputs and the output (\prettyref{algo:regressrecov} and \prettyref{algo:gatingrecov}). Once the experts are known, the recovery of gating parameters still requires an EM algorithm; however, we show that the EM algorithm for this simplified problem, unlike the joint EM algorithm, converges to the true parameters. 
 The proofs of global convergence of EM as well as the design of the cross-moment tensor are of independent mathematical interest.

\textbf{3) Novel transformations:} In this paper, we introduce the novel notion of ``Cubic and Quadratic Transform (CQT)". These are polynomial transformations on the output labels tailored to specific non-linear activation functions and the noise variance. The key utility of these transforms is to equip MoEs with a supersymmetric tensor structure in a principled way (\prettyref{thm:twoexpertspopulation}). 
\begin{figure*}[h]
\centering
\begin{tikzpicture}

\filldraw[fill=red!20!white, draw=black] (-2.5,1.0) -- (-4.8,1.0) -- (-4.8,2.0) -- (-2.5,2.0) -- cycle;
\node[text width=2cm] at (-3.2 ,1.7) {Tensor};
\node[text width=2cm] at (-3.7 ,1.3) {decomposition};
\draw[->] (-2.5,1.5) -- (-1.5,1.5) ;
\node at (0.4,1.5) {Regressors $\{\hat{\ba}_1,\ldots,\hat{\ba}_k \}$};

\filldraw[fill=red!20!white, draw=black] (-2.5,-1.0) -- (-4.8,-1.0) -- (-4.8,-2.0) -- (-2.5,-2.0) -- cycle;
\node[text width=2cm] at (-3.0,-1.3) {EM};
\node[text width=2cm] at (-3.5,-1.7) {algorithm};
\draw[->] (-2.5,-1.5) -- (-1.5,-1.5) ;
\node at (1.1,-1.5) {Gating parameters $\{\hat{\bw}_1,\ldots,\hat{\bw}_{k-1} \}$};

\draw[->] (0.8,1.2) -- (-3.5,-1.0);

\filldraw[fill=blue!20!white, draw=black] (-6.0,1.2) -- (-9.0,1.2) -- (-9.0,0.0) -- (-6.0,0.0) -- cycle;
\node at (-7.5,0.8) {Cubic \& Quadratic };
\node at (-7.5,0.4) {Transform };
\draw[->] (-6.0,0.6) -- (-4.8,1.4) ;

\filldraw[fill=blue!20!white, draw=black] (-6.0,3.2) -- (-9.0,3.2) -- (-9.0,2.0) -- (-6.0,2.0) -- cycle;
\node at (-7.5,2.6) {Score function};
\draw[->] (-6.0,2.6) -- (-4.8,1.6) ;

\filldraw[fill=white, draw=black] (-10.0,3.2) -- (-11.0,3.2) -- (-11.0,0.0) -- (-10.0,0.0) -- cycle;
\node at (-10.5,3.5) {Samples};
\node at (-10.5,2.6) {$\bx$};
\draw[->] (-10.0,2.6) -- (-9.0,2.6);

\node at (-12.7,1.6) {\bf {Algorithm $1$}};
\draw[->] (-11.6,1.6) -- (-11.0,1.6);

\node at (-12.7,-0.8) {\bf { Algorithm $2$} };
\draw[->] (-11.5,-0.8) -- (-10.5,-0.8);


\node at (-10.5,0.6) {$y$};
\draw[->] (-10.0,0.6) -- (-9.0,0.6);

\draw (-10.5,0.0) -- (-10.5,-1.5);
\draw[->] (-10.5,-1.5) -- (-4.8,-1.5);

%
%
%
%

\end{tikzpicture}
\caption{Algorithm to learn the MoE parameters. \textbf{Algorithm $1$}: First we take non-linear transformations on the samples $(\bx_i,y_i)$ to compute the tensors $\calT_2,\calT_3$. Spectral decomposition on $\calT_2,\calT_3$ recovers the regressors. \textbf{Algorithm $2$}: EM uses the learnt regressors and samples to learn the gating parameters with random initializations}
\label{fig:scheme}
\end{figure*}
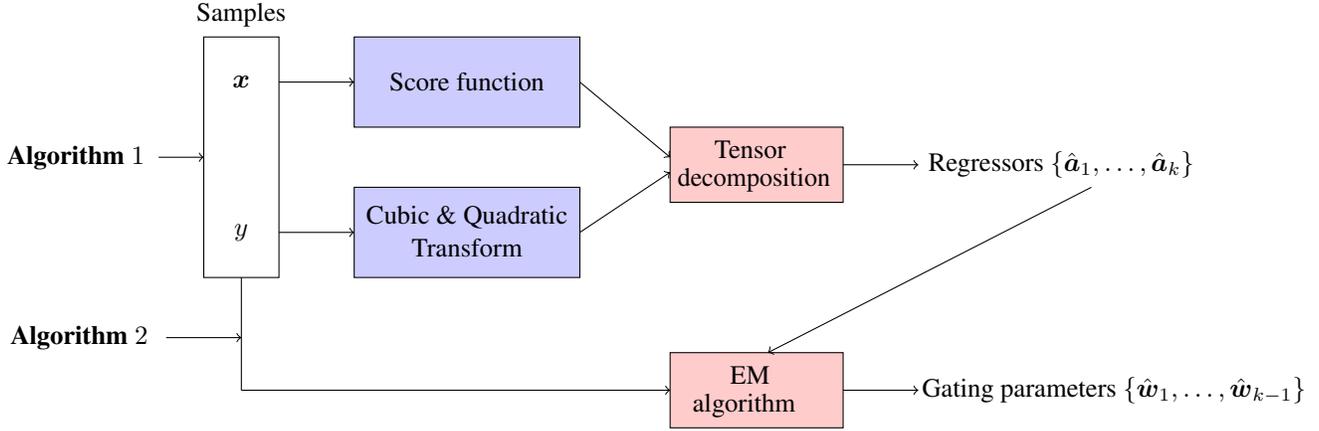

\textbf{Related work.} While there is a huge literature on MoEs (\cite{YWG12,MaEb14} are detailed surveys), there are relatively few works on its learning guarantees. \cite{emmoeanalysis} is the first work to analyze the local convergence of joint-EM for both the gating and the expert parameters. As noted earlier, however, EM is prone to bad local minima. In contrast, our algorithms have \emph{global convergence} guarantees. It is important to note that even for the simpler problem of mixtures of Gaussians, it is known that EM gets stuck in local minima, whenever number of mixtures, $k$, is at least $3$ \citep{localem}, whereas we can handle $2k-1<d$ with global convergence.

The simplified versions of MoE, M-GLMs, are widely studied in the literature. The key techiques for parameter inference in M-GLMs include EM algorithm, spectral methods, convex relaxations, and their variants.  \cite{yi2014alternating, bala17} prove convergence of EM for $2$-mixtures of linear regressions; in contrast, we handle $k\geq 2$ mixtures for a wide class of non-linearities and provide global convergence. \cite{SedghiA14a} construct a $3^{\mathrm{rd}}$-order supersymmetric tensor containing the regressors as its rank-$1$ components. However, this approach fails to generalize for MoE. \cite{zhong16glm} use a similar tensor construction followed by EM to learn the parameters; however, they can only handle linear noiseless mixtures and no gating parameters. In contrast, our algorithms can handle non-linearities and the gating parameters. \cite{chen2014convex} use a convex objective to learn the regressors for a special setting of $2$-mixtures of linear regressions. Similar to earlier approaches, this relaxation too does not generalize to $k >2$. 

\textbf{Notation.} In this paper, we denote Euclidean vectors by bold face 
lowercase letters $\ba, \bb$, etc., and scalars by plain lowercase letters 
$y,z$, etc. We use $\calN(y|\mu,\sigma^2)$ either to denote the density or the distribution of a Gaussian random variable $y$ with mean $\mu$ and variance $\sigma^2$, depending on the context. $[d] \define \{1,\ldots,d\} $. $\mathrm{Perm}[d]$ denotes the set of all permutations on $[d]$. We use $\otimes$ to denote the tensor outer product of vectors in $\reals^d$. $\bx^{\otimes 3}$ denotes $\bx \otimes \bx \otimes \bx$, where $(\bx \otimes \bx \otimes \bx)_{ijk}=x_i x_j x_k $. $\sym(\bx \otimes \by \otimes \bz) $ denotes the symmetrized version of  $\bx \otimes \by \otimes \bz$, \ie $\sym(\bx \otimes \by \otimes \bz)_{ijk} =  \sum_{\sigma \in \mathrm{Perm}[d]} x_{\sigma(i)} y_{\sigma(j)} z_{\sigma(k)}  $.  $\be_i, i \in [d]$ denotes the standard basis vectors for $\reals^d$.
 Through out the paper, we assume that $\bw_k^\ast=0$, without loss of generality. 

\section{Algorithms}
\label{sec:algorithmsmoe}
In this section, we present our algorithms to learn the regression and gating parameters \emph{separately}. \prettyref{fig:scheme} summarizes our algorithm. First we take a moment to highlight the issues of the existing approaches.

For illustration purposes, we suppose that $k=2$ in \prettyref{eq:kmoe}. 
We assume without loss of generality that $\bw^*_k=\bw^*_2=0$ and denote $\bw_1^\ast=\bw^\ast$.  Thus the $2$-MoE model is given by $P_{y|\bx}$: 
\begin{align}
 \hspace{-0.3cm} \frac{e^{\inner{\bw^\ast}{\bx}} \, \calN(y|g(\inner{\ba_1^\ast}{\bx}),\sigma^2)}{1+e^{\inner{\bw^\ast}{\bx}}}
 + \frac{\calN(y|g(\inner{\ba_2^\ast}{\bx}),\sigma^2)}{1+e^{\inner{\bw^\ast}{\bx}}}
 \label{eq:2MoE}
\end{align}

\textbf{Issues with traditional tensor methods.} 
In the far simplified setting of the absence of the gating parameter, \ie $\bw^\ast=0 \in \reals^d$, we see that $2$-MoE reduces to $2$-uniform mixture of GLMs. In this case, for $\bx \sim \calN(0,I_d)$, the standard approach is to construct a $3^{\mathrm{rd}}$-order tensor $\calT$ by regressing the output $y$ on the score transformation $\calS_3(\bx) \triangleq \bx \otimes \bx \otimes \bx - \sum_{i \in [d]} \mathrm{sym}\pth{\bx \otimes \be_i \otimes \be_i }$, 
\ie
\begin{align}
\calT  \define \Expect[y \cdot \calS_3(\bx)] \nonumber &= \frac{1}{2}\Expect[g'''(\inner{\ba_1^\ast}{\bx})] \cdot (\ba_1^\ast)^{\otimes 3} \nonumber \\
& \hspace{-1em} + \frac{1}{2} \Expect[g'''(\inner{\ba_2^\ast}{\bx})] \cdot (\ba_2^\ast)^{\otimes 3}\;. 
\label{eq:tensorpass}
\end{align}
Here the second equality follows from the generalized Stein's lemma that $\Expect[f(\bx)\cdot \calS_3(\bx)]=\Expect[\nabla_{\bx}^{(3)} f(\bx)  ] $ under some regularity conditions on $f:\reals^d \mapsto \reals$ (see \prettyref{lmm:highstein} in \prettyref{app:reviewtensor}). Then the regressors can be learned through spectral decomposition on $\calT$, where the uniqueness of decomposition follows from \cite{kruskal1977three}. If we apply a similar technique for $2$-MoE in \prettyref{eq:2MoE}, we obtain that
\begin{align}
\Expect[y \cdot \calS_3(\bx)] &= \sum_{i=1,2} \alpha_i  (\ba_i^\ast)^{\otimes 3}+\beta_i \, \sym(\ba_i^\ast \otimes \ba_i^\ast \otimes \bw^\ast)\nonumber \\
& \hspace{-1em} +\gamma_i \, \sym(\ba_i^\ast \otimes \bw^\ast \otimes \bw^\ast) + \delta (\bw^\ast)^{\otimes 3}, 
\label{eq:tensorfail}
\end{align} 
where $\alpha_i,\beta_i,\gamma_i, \delta$ are some scalar constants depending on the parameters $\ba_1^\ast, \ba_2^\ast, \bw^\ast$ and $g$ (see \prettyref{app:regressorlinear} for the proof). Thus \prettyref{eq:tensorfail} reveals that traditional spectral methods do not yield a supersymmetric tensor of the desired parameters for MoEs. In fact, \prettyref{eq:tensorfail} contains all the $3^{\mathrm{rd}}$-order rank-$1$ terms formed by $\ba_1^\ast,\ba_2^\ast$ and $\bw^\ast$. Hence we cannot recover these parameters uniquely. Note that the inherent coupling between the regressors $\ba_1^\ast,\ba_2^\ast$ and the gating parameter $\bw^\ast$ in \prettyref{eq:2MoE} manifests as a cross tensor in \prettyref{eq:tensorfail}. This coupling serves as a key limitation for the traditional methods which critically rely on the fact that the mixing probability $p=\frac{1}{2}$ in \prettyref{eq:tensorfail} is a constant. In fact, we recover \prettyref{eq:tensorpass} by letting $\bw^\ast=0$ in \prettyref{eq:tensorfail}.

\textbf{Issues with EM algorithm.} EM algorithm is the workhorse for parameter learning in both the $k$-MoE and HME models \citep{JJ94}. However, it is well known that EM is prone to spurious minima and existing theoretical results only establish local convergence for the regressors and the gating parameters. Indeed, our numerical experiments in \prettyref{sec:compjoint-EM} verify this fact. \prettyref{fig:k_3} and \prettyref{fig:k_4} highlight that joint-EM often gets stuck in bad local minima.

%

\subsection{The proposed algorithm for learning MoE}
\label{sec:algoMoE}

In order to tackle these challenges, we take a different route and propose to estimate the regressors and gating parameters \emph{separately}. 
To gain intuition about our approach, let us consider 2-MoE model in \prettyref{eq:2MoE} with $\sigma=0$ and linear $g$. Then we  have that $y$ either equals $\inner{\ba_1^\ast}{\bx}$ with probability $\sigma(\inner{\bw^\ast}{\bx})$ or equals $\inner{\ba_2^\ast}{\bx}$ with probability $1-\sigma(\inner{\bw^\ast}{\bx})$, where $\sigma(\cdot)$ is the sigmoid function. If we exactly know $\bw^\ast$, we can recover $\ba_1^\ast$ and $\ba_2^\ast$ by solving a simple linear regression problem since we can recover the true latent variable $\bz \in \{1,2\}$ with high probability. Similarly, if we know $\ba_1^\ast$ and $\ba_2^\ast$, it is easy to see that we can recover $\bw^\ast$ by solving a binary linear classification problem. Thus knowing either the regressors or the gating parameters makes the estimation of other parameters easier. However, how do we first obtain one set of parameters without any knowledge about the other?

Our approach precisely addresses this question and breaks the \emph{grid lock}. We show that we can extract the regressors $\ba_1^\ast$ and $\ba_2^\ast$ without knowing $\bw^\ast$ at all, just using the samples. 
Although we explain our approach with two mixtures, 
all claims are made precise for general $k$ in Theorems \ref{thm:twoexpertspopulation} and \ref{thm:twoexpertsEM}, 
and the algorithms are written for general $k$ as well in Algorithms \ref{algo:regressrecov} and \ref{algo:gatingrecov}. 

\subsubsection*{Step 1: Estimation of regressors}
To learn the regressors, we first pre-process $\bx \sim \calN(0,I_d)$ using the score transformations $\calS_3$ and $\calS_2$, \ie
\begin{align}
\calS_3(\bx) & \triangleq \bx \otimes \bx \otimes \bx - \sum_{i \in [d]} \mathrm{sym}\pth{\bx \otimes \be_i \otimes \be_i }, \\
 \calS_2(\bx) & \triangleq  \bx \otimes \bx - I.
\label{eq:thirdscore}
\end{align}
These score functions can be viewed as higher-order feature extractors from the inputs. As we have seen in \prettyref{eq:tensorpass}, these transformations suffice to learn the parameters in M-GLMs. However this approach fails in the context of MoE, as highlighted in \prettyref{eq:tensorfail}. Can we still construct a supersymmetric tensor for MoE?

To answer this question in a principled way, we introduce the notion of ``Cubic and Quadratic Transform (CQT)" for the labels, \ie
\begin{align*}
\calP_3(y) \triangleq y^3+\alpha y^2+ \beta y, \quad \calP_2(y) \triangleq y^2+\gamma y.
\end{align*} 
The coefficients $(\alpha,\beta,\gamma)$ in these polynomial transforms are obtained by solving a linear system of equations (see \prettyref{app:nonlinear}). For the special case of $g=$linear, we obtain $\calP_3(y)=y^3-3(1+\sigma^2)\,y$ and $\calP_2(y)=y^2$. These special transformations are specific to the choice of non-linearity $g$ and the noise variance $\sigma$. The key intuition behind the design of these transforms is that we can nullify the cross moments and obtain supersymmetric tensor in \prettyref{eq:tensorpass} if we regress $\calP_3(y)$ instead of $y$, for properly chosen constants $\alpha$ and $\beta$. This is made mathematically precise in \prettyref{thm:twoexpertspopulation}. A similar argument holds for $\calP_2(y)$ too.
In addition, the choice of these polynomials is unique in the sense that any other polynomial transformations fail to yield the desired tensor structure. Using these transforms, we construct two special tensors $\hat{\calT_3} \in (\reals^{d})^{\otimes 3}$ and $\hat{\calT_2} \in (\reals^{d})^{\otimes 2} $. Later we use the robust tensor power method \citep{AFHK+12} on these tensors to learn the regressors. \prettyref{algo:regressrecov} details our learning procedure. \prettyref{thm:twoexpertspopulation} establishes the theoretical justification for our algorithm. 

\begin{algorithm}
\caption{Learning the regressors}\label{algo:regressrecov}
\begin{algorithmic}[1]
\STATE {\bfseries Input:} Samples $(\bx_i,y_i), i \in [n]$ \STATE Compute $\hat{\calT_3}=\frac{1}{n}\sum_{i=1}^n \calP_3(y_i) \cdot \calS_3(\bx_i)$ and $\hat{\calT_2}=\frac{1}{n}\sum_{i=1}^n \calP_2(y_i) \cdot \calS_2(\bx_i)$ 
\STATE $\hat{\ba}_1,\ldots,\hat{\ba}_k=$ Rank-$k$ tensor decomposition on $\hat{\calT_3}$ using $\hat{\calT_2}$
\end{algorithmic}
\end{algorithm}

\subsubsection*{Step 2: Estimation of gating parameters}
To gain intuition for estimating the gating parameters, let $g=\mathrm{linear}$ in \prettyref{eq:2MoE} for simplicity. Moreover, assume that we know both $\ba_1^\ast$ and $\ba_2^\ast$.  Then taking conditional expectation on $y$, we obtain from \prettyref{eq:2MoE} that
\begin{align}
\Expect [ y | \bx] &= f(\inner{\bw^\ast}{\bx}) \cdot \inner{\ba_1^\ast}{\bx} + (1-f(\inner{\bw^\ast}{\bx})) \cdot \inner{\ba_2^\ast}{\bx}, \nonumber\\
&=\inner{\ba_2^\ast}{\bx} + f(\inner{\bw^\ast}{\bx}) \cdot \inner{\ba_1^\ast-\ba_2^\ast}{\bx},
\end{align}
where $f$ is the sigmoid function. 
Thus,
\begin{align*}
\Expect \qth{ \frac{  y  - \inner{\ba_2^\ast}{\bx}}{\inner{\ba_1^\ast-\ba_2^\ast}{\bx}} | \bx } = \frac{ \Expect [ y | \bx] - \inner{\ba_2^\ast}{\bx}}{\inner{\ba_1^\ast -\ba_2^\ast }{\bx}} &= f(\inner{\bw^\ast}{\bx}).
\end{align*}
Note that since $\bx$ is Gaussian, $\inner{\ba_1^\ast- \ba_2^\ast}{\bx}$ is non-zero with probability $1$. Hence, to recover $\bw^\ast$, in view of Stein's lemma, we may write
\begin{align*}
\Expect \qth{ \pth{\frac{  y  - \inner{\ba_2^\ast}{\bx}}{\inner{\ba_1^\ast-\ba_2^\ast}{\bx}} } \cdot \bx  } &\stackrel{\xmark} =  \Expect\qth{ f(\inner{\bw^\ast}{\bx}) \cdot \bx } \\
&= \Expect\qth{f'(\inner{\bw^\ast}{\bx})} \cdot \bw^\ast \\
&= \Expect_{Z \sim \calN(0,1)} f'(\norm{\bw^\ast} Z) \cdot \bw^\ast\\
& \propto \bw^\ast.
\end{align*}
However, it turns out that the above chain of equalities does not hold. Surprisingly, the first equality, which essentially is the law of iterated expectations, is not valid in this case as $\frac{  y  - \inner{\ba_2^\ast}{\bx}}{\inner{\ba_1^\ast-\ba_2^\ast}{\bx}}$ is not integrable since it is a mixture of two Cauchy distributions, as proved in \prettyref{app:gatingspectral}. Thus the above analysis highlights the difficulty of learning the gating parameters even in the simplest setting of two linear mixtures. Can we still learn $\bw^\ast$ using method of moments (MoM)? In \prettyref{thm:MoM_gating}, we precisely address this question and show that we can still provably recover the gating parameters using MoM, by designing clever transformations on the data to infer the parameters of a Cauchy mixture distribution. 

While \prettyref{thm:MoM_gating} highlights that gating parameters can be learnt using the method of moments for $2$-MoE, we still need a principled approach to learn these parameters for a more generic setting of $k$-MoE. Recall that the traditional joint-EM algorithm randomly initializes both the regressors and the gating parameters and updates them iteratively. \prettyref{fig:k_3} and \prettyref{fig:k_4} highlight that this procedure is prone to spurious minima. Can we still learn the gating parameters with \emph{global initializations}? To address this question, we utilize the regressors learnt from \prettyref{algo:regressrecov}. In particular, we use EM algorithm to update \emph{only} the gating parameters, while fixing the regressors $\hat{\ba}_1,\ldots,\hat{\ba}_k$. We show in \prettyref{thm:twoexpertsEM} that, with \emph{global/random} initializations, this variant of EM algorithm learns the true parameters. To the best of our knowledge, this is the first global convergence result for EM for $k>2$ mixtures. This motivates the following algorithm ($\varepsilon>0$ is some error tolerance):

\begin{algorithm}
\caption{Learning the gating parameter}\label{algo:gatingrecov}
\begin{algorithmic}[1]
\STATE {\bfseries Input:} Samples $(\bx_i,y_i), i \in [n]$ and regressors $\hat{\ba_1},\ldots,\hat{\ba_k}$ from \prettyref{algo:regressrecov}
\STATE $t \gets 0$
\STATE Initialize $\bw_0$ uniformly randomly in its domain $\Omega$
\WHILE{(Estimation error $< \varepsilon$ )}
\STATE Compute the posterior $p_{\bw_t}^{(i)}$ according to \prettyref{eq:computep} for each $j \in [k]$ and $i \in [n]$
\STATE Compute $Q(\bw|\bw_t)$ according to \prettyref{eq:Qdef} using empirical expectation
\STATE Set $\bw_{t+1}= \mathrm{argmax}_{\bw \in \Omega }Q(\bw|\bw_t) $
\STATE $t \gets t+1$
\STATE Estimation error $=\norm{\bw_t-\bw_{t-1}}$
\ENDWHILE

\end{algorithmic}
\end{algorithm}

\section{Theoretical analysis}
\label{sec:guaranteestwomoe}
In this section, we provide the theoretical guarantees for our algorithms in the population setting. We first formally state our assumptions and justify the rationale behind them:
\begin{enumerate}
\item $\bx$ follows standard Gaussian distribution, \ie $\bx \sim \calN(0,I_d)$.
\item $\norm{\ba_i^\ast}_2 =1$ for $i \in [k]$ and $\norm{\bw_i^\ast}_2 \leq R$ for $i \in [k-1]$, with some $R>0$.
\item $\ba_i^\ast, i \in [k]$ are linearly independent and $\bw_i^\ast$ is orthogonal to $\mathrm{span}\{\ba_1^\ast,\ldots,\ba_k^\ast \}$ for $i \in [k-1]$.
\item The non-linearity $g:\reals \to \reals$ is $(\alpha,\beta,\gamma)$-valid, which we define in \prettyref{app:nonlinear}. For example, this class includes $g=$linear, sigmoid and ReLU.
\end{enumerate}

\textbf{Remark.} We note that the Gaussianity of the input distribution and norm constraints on the parameters are standard assumptions in the learning of neural networks literature \citep{janzamin2015beating, li2017convergence,ge2017learning, 2017recovery, du2017gradient, safran2017spurious} and also that of M-GLMs \citep{SedghiA14a,YCS16,zhong16glm,bala17}. An interpretation behind Assumption $3$ is that if we think of $\bx$ as a
high-dimensional feature vector, distinct sub-features of $\bx$ are used to perform the two distinct tasks of classification (using $\bw_i^\ast$'s) and regression (using $\ba_i^\ast$'s). We note that we need the above assumptions only for the technical analysis. In \prettyref{sec:non-gaussian} and \prettyref{sec:non-orthogonal}, we empirically verify that our algorithms work well in practice even under the relaxation of these assumptions. Thus we believe that the assumptions are merely technical artifacts.

We are now ready to state our results. 

\begin{theorem}[Recovery of regression parameters]
\label{thm:twoexpertspopulation}
Let $(\bx,y)$ be generated according the true model \prettyref{eq:kmoe}. Under the above assumptions, we have that
\begin{align*}
\calT_2  \triangleq \Expect[\calP_2(y) \cdot \calS_2(\bx)] & = \add{i}{1}{k} c'_{g} \Expect[P_{i|\bx}] \cdot \ba_i^\ast \otimes \ba_i^\ast,\\
\calT_3  \triangleq \Expect[\calP_3(y) \cdot \calS_3(\bx)] & = \add{i}{1}{k} c_{g,\sigma}  \Expect[P_{i|\bx}] \cdot \ba_i^\ast \otimes \ba_i^\ast \otimes \ba_i^\ast,
\end{align*}
where $c'_{g}$ and $c_{g,\sigma}$ are two non-zero constants depending on $g$ and $\sigma$. Hence the regressors $\ba_i^\ast$'s can be learnt through tensor decomposition on $\calT_2$ and $\calT_3$. 
\end{theorem}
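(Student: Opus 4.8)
The plan is to reduce both identities to the generalized Stein's lemma (\prettyref{lmm:highstein}) applied to the conditional mean of the transformed label, and then to show that the orthogonality between gating and regressor directions (Assumption 3) together with the designed choice of CQT coefficients annihilates every contribution except the super-symmetric one. First I would compute the conditional expectations $F_m(\bx)\define\Expect[\calP_m(y)\mid\bx]$ for $m\in\{2,3\}$. Writing $u_i\define\inner{\ba_i^\ast}{\bx}$ and using the Gaussian moments $\Expect[y\mid\bx,z{=}i]=g(u_i)$, $\Expect[y^2\mid\bx,z{=}i]=g(u_i)^2+\sigma^2$, $\Expect[y^3\mid\bx,z{=}i]=g(u_i)^3+3\sigma^2 g(u_i)$, the tower rule over the latent expert $z$ gives
$$F_3(\bx)=\sum_{i=1}^k P_{i\mid\bx}\, h(u_i), \qquad F_2(\bx)=\sum_{i=1}^k P_{i\mid\bx}\,\tilde h(u_i),$$
where $h(u)=g(u)^3+\alpha g(u)^2+(3\sigma^2+\beta)g(u)+\alpha\sigma^2$ and $\tilde h$ is the analogous degree-two combination depending on $\gamma$. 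Since $\calT_m=\Expect[\calP_m(y)\calS_m(\bx)]=\Expect[F_m(\bx)\calS_m(\bx)]$, \prettyref{lmm:highstein} rewrites each tensor as $\calT_m=\Expect[\nabla^{(m)}_\bx F_m(\bx)]$, provided the regularity hypotheses hold; these are mild since $y\mid\bx$ is Gaussian (so $\calP_m(y)$ has all moments) and, for non-smooth $g$ such as ReLU, the high-order derivatives are read in the weak sense.

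The decoupling step is where Assumption 3 is essential. Because $\bx\sim\calN(0,I_d)$ and each $\bw_j^\ast$ is orthogonal to $\mathrm{span}\{\ba_1^\ast,\dots,\ba_k^\ast\}$, the projection $\bx_\parallel$ onto the regressor span and the projection $\bx_\perp$ onto its complement are independent Gaussians; moreover $P_{i\mid\bx}$ is a function of $\bx_\perp$ alone, while each $h(u_i),\tilde h(u_i)$ is a function of $\bx_\parallel$ alone. Expanding $\nabla^{(m)}_\bx\bigl(P_{i\mid\bx}\,h(u_i)\bigr)$ by the Leibniz rule and taking expectations, independence factorizes every summand into a $\bx_\perp$-expectation (a tensor supported on $\mathrm{span}\{\bw_j^\ast\}$) times a $\bx_\parallel$-expectation. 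Since $\nabla^{(p)}h(u_i)=h^{(p)}(u_i)(\ba_i^\ast)^{\otimes p}$, the $\bx_\parallel$-factors are exactly $\Expect[h(u_i)],\Expect[h'(u_i)],\Expect[h''(u_i)],\Expect[h'''(u_i)]$ with $u_i\sim\calN(0,1)$ (using $\norm{\ba_i^\ast}=1$), and likewise for $\tilde h$.

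The final step is to eliminate the unwanted terms. Every term other than the pure $(\ba_i^\ast)^{\otimes m}$ one carries a scalar factor $\Expect[h],\Expect[h'],\Expect[h'']$ (resp. $\Expect[\tilde h],\Expect[\tilde h']$), which are precisely the leading Hermite coefficients of $h$ (resp. $\tilde h$) under $\calN(0,1)$: $\Expect[h]=c_0$, $\Expect[h']=c_1$, $\Expect[h'']=2c_2$. The defining feature of an $(\alpha,\beta,\gamma)$-valid nonlinearity (\prettyref{app:nonlinear}) is exactly that the CQT coefficients solve the linear system $c_0=c_1=c_2=0$ for $h$ and $\tilde c_0=\tilde c_1=0$ for $\tilde h$, with nonzero leading coefficients. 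Hence only the terms $\Expect[P_{i\mid\bx}]\,\Expect[h'''(u_i)]\,(\ba_i^\ast)^{\otimes 3}$ and $\Expect[P_{i\mid\bx}]\,\Expect[\tilde h''(u_i)]\,(\ba_i^\ast)^{\otimes 2}$ survive, with $c_{g,\sigma}=\Expect[h''']$ and $c'_g=\Expect[\tilde h'']$ constants independent of $i$ (same $\calN(0,1)$ law for every $u_i$). Super-symmetry, linear independence of the $\ba_i^\ast$, and positivity of the weights $\Expect[P_{i\mid\bx}]$ then give a unique decomposition, so the $\ba_i^\ast$ are recoverable.

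I expect the main obstacle to be the vanishing of the cross terms, i.e. verifying that for the target class of activations the CQT system $c_0=c_1=c_2=0$ (and $\tilde c_0=\tilde c_1=0$) is simultaneously solvable with nonzero leading coefficients, and that Stein's lemma remains valid in the weak-derivative sense for non-smooth $g$ such as ReLU, where $h'''$ is a distribution. The orthogonality-induced independence is the conceptual key: it forces the entire gating-dependent part of $\nabla^{(m)}F_m$ into a subspace disjoint from the regressors, so that killing only the three low-order Hermite components of $h$ suffices to remove every coupling term that plagued the naive construction in \prettyref{eq:tensorfail}.
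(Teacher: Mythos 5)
Your route is the same as the paper's proof for general $k$ (\prettyref{app:generalregressorproof}): tower rule over the latent expert, the score identity of \prettyref{lmm:highstein}, Assumption 3 to make $P_{i|\bx}$ and $h(u_i)$ functions of independent Gaussian projections, and a Leibniz expansion whose terms factor into gating expectations times Hermite-type coefficients of $h$. The genuine gap is in your final annihilation step, precisely at the point you flagged as the main obstacle. You claim that $(\alpha,\beta,\gamma)$-validity means the CQT coefficients solve $c_0=c_1=c_2=0$ for $h$ (and $\tilde c_0=\tilde c_1=0$ for $\tilde h$). The paper's \prettyref{def:valid} requires no such thing: \prettyref{cond:cond1} and \prettyref{cond:cond2} impose only $\Expect[h'(Z)]=\Expect[h''(Z)]=0$ and $\Expect[\tilde h'(Z)]=0$, together with nonvanishing of the leading coefficients. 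Moreover, your stronger system is generically unsolvable: $(\alpha,\beta)$ are two unknowns already pinned down by the $2\times 2$ system $c_1=c_2=0$, so also demanding $c_0=\Expect[h(Z)]=0$ is overdetermined. For linear $g$ it happens to hold (there $h$ is exactly the third Hermite polynomial, $\Expect[h]=\Expect[h']=\Expect[h'']=0$), which is why the issue is invisible in that case; but for ReLU and sigmoid one checks that the valid $(\alpha,\beta)$ leave $\Expect[h(Z)]\neq 0$. Consequently, for the very nonlinearities the theorem covers, your expansion still contains the uncancelled term $\sum_{i\in[k]}\Expect[\nabla^{(3)}_{\bx}P_{i|\bx}]\cdot\Expect[h(u_i)]$ (a tensor supported on the gating directions, proportional to $(\bw^\ast)^{\otimes 3}$ when $k=2$), and similarly $\sum_{i\in[k]}\Expect[\nabla^{(2)}_{\bx}P_{i|\bx}]\cdot\Expect[\tilde h(u_i)]$ for $\calT_2$; as written, you have no argument that these vanish.

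The missing idea is that these zeroth-order terms cancel \emph{across experts}, not term by term. Since $\sum_{i\in[k]}P_{i|\bx}\equiv 1$, we have $\sum_{i\in[k]}\nabla^{(m)}_{\bx}P_{i|\bx}=0$ pointwise; and since $\norm{\ba_i^\ast}_2=1$ for every $i$ (Assumption 2), all the $u_i$ share the same $\calN(0,1)$ law, so $\Expect[h(u_i)]$ is one and the same constant for every $i$. Hence
\begin{align*}
\sum_{i\in[k]}\Expect\qth{\nabla^{(m)}_{\bx}P_{i|\bx}}\,\Expect[h(u_i)]
=\Expect[h(Z)]\cdot\Expect\qth{\nabla^{(m)}_{\bx}\pth{\textstyle\sum_{i\in[k]}P_{i|\bx}}}=0,
\end{align*}
and the same holds with $\tilde h$ for $m=2$. (The paper itself is terse here: its $k=2$ linear computation kills everything by parity, and for general $g$ it asserts step $(a)$ ``by the choice of $\alpha$ and $\beta$ and orthogonality''; the cross-expert cancellation above is what makes that assertion true, since the two pure-gating contributions enter with weights whose derivatives are negatives of each other.) With this cancellation inserted, the remainder of your argument --- the surviving terms $\Expect[P_{i|\bx}]\Expect[h'''(Z)](\ba_i^\ast)^{\otimes 3}$ and $\Expect[P_{i|\bx}]\Expect[\tilde h''(Z)](\ba_i^\ast)^{\otimes 2}$, the nonvanishing of $c_{g,\sigma}$ and $c'_g$ guaranteed by \prettyref{cond:cond1} and \prettyref{cond:cond2}, and uniqueness of the resulting decomposition --- is correct and coincides with the paper's.
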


\begin{proof}(Sketch)
To highlight the central ideas behind the proof, first let $g=$linear. From \prettyref{eq:kmoe} we get that
\begin{align*}
\Expect[y|\bx] &= \sum_{i \in [k]} p_i^\ast(\bx) \inner{\ba_i^\ast}{\bx},
\end{align*}
where $p_i^\ast(\bx)\define P_{i|\bx}$ for $i \in [k]$. Taking the cross moment of $y$ with $\calS_3(\bx)$ and using \prettyref{lmm:highstein} we obtain that
\begin{align*}
\Expect[y \cdot \calS_3(\bx)]&= \sum_{i \in [k]}\Expect[p_i^\ast(\bx) \inner{\ba_i^\ast}{\bx} \cdot \calS_3(\bx)]\\
&=\sum_{i \in [k]}\Expect[\nabla_{\bx}^{(3)}(p_i^\ast(\bx) \inner{\ba_i^\ast}{\bx})].
\end{align*}
Notice that had $p_i^\ast(\bx)$ been a constant in the above equation, we would obtain a supersymmetric tensor easily as is the case with M-GLMs. However, $\Expect[\nabla_{\bx}^{(3)}(p_i^\ast(\bx) \inner{\ba_i^\ast}{\bx})]$ now contains all the third-order rank-$1$ terms involving the tensor product of $\bw_1^\ast,\ldots,\bw_{k-1}^\ast$ and $\ba_i^\ast$ for any fixed $i$. Our key insight is that this issue can be avoided if we cleverly transform $y$. In particular, we consider a cubic transformation $\calP_3(y)=y^3-3y(1+\sigma^2)$ and obtain that
\begin{align*}
\Expect[\calP_3(y)|\bx]=\sum_{i \in [k]} p_i^\ast(\bx) (\inner{\ba_i^\ast}{\bx}^3-3\inner{\ba_i^\ast}{\bx})
\end{align*}
Now it turns out that after using the orthogonality of $\bw_i^\ast$ and $\ba_i^\ast$, and the fact $\Expect[p(Z)]=\Expect[p'(Z)]=\Expect[p''(Z)]=0$ for $3^{\mathrm{rd}}$-Hermite polynomial $p(z)=z^3-3z$ and $Z \sim \calN(0,1)$, we can nullify the cross-moments between $\bw_i^\ast$'s and $\ba_i^\ast$'s to obtain that
\begin{align*}
\Expect[\calP_3(y)\cdot \calS_3(\bx)]=6 \sum_{i \in [k]}\Expect[p_i^\ast(\bx)]. (\ba_i^\ast)^{\otimes 3}.
\end{align*}
Similarly, we can show that $\Expect[\calP_2(y)\cdot \calS_2(\bx)]=2 \sum_{i \in [k]}\Expect[p_i^\ast(\bx)]. (\ba_i^\ast)^{\otimes 2}$. For a general non-linearity $g:\reals \to \reals$, we can similarly design cubic and quadratic polynomials $\calP_3=y^3+\alpha y^2+\beta y $ and $\calP_2=y^2+\gamma y$ such that we can still construct supersymmetric tensors involving the regressors. In order to obtain the unique set of coefficients $(\alpha,\beta, \gamma)$, we need to solve a linear system of equations, which we describe in \prettyref{app:nonlinear}.
\end{proof}

Once we obtain $\calT_2$ and $\calT_3$, the recovery gurantees for the regressors $\ba_i^\ast$ follow from the standard tensor decomposition guarantees, for example, Theorem 4.3 and Theorem 5 of \cite{AFHK+12}. We assume that the learnt regressors $\ba_i$ are such that $\max_{i \in [k]}\norm{\ba_i-\ba_i^\ast}_2=\sigma^2 \varepsilon$ for some $\varepsilon>0$. Now we present our theoretical results for global convergence of EM. First we briefly recall the algorithm. Let $\Omega$ denote the domain of our gating parameters, defined as
\begin{align*}
\Omega = \sth{\bw=(\bw_1,\ldots,\bw_{k-1}): \norm{\bw_i}_2 \leq R, \forall i \in [k-1]}.
\end{align*}
Then the population EM for the mixture of experts consists of the following two steps:
\begin{itemize}
\item \textbf{E-step:} Using the current estimate $\bw_t$
 to compute the function $Q(\cdot|\bw_t)$,
\item \textbf{M-step:} $\bw_{t+1}= \mathrm{argmax}_{\bw \in \Omega} Q(\bw|\bw_t)$,
\end{itemize}
where the function $Q(.|\bw_t)$ is the expected log-likelihood of the complete data distribution with respect to current posterior distribution. Mathematically,
\begin{align}
Q(\bw|\bw_t) &\define \Expect_{(\bx,y)} \Expect_{P_{z|\bx,y,\bw_t}} [ \log P_{\bw} (\bx,z,y)  ] \nonumber \\
&=\Expect_{(\bx,y)} \Expect_{P_{z|\bx,y,\bw_t}} [ \log P(\bx)P_{\bw}(z|\bx)P(y|\bx,z)  ] \nonumber \\
&=\Expect_{(\bx,y)} \Expect_{P_{z|\bx,y,\bw_t}}[\log P_{\bw}(z|x)]+ \mathrm{const.} \nonumber \\
&= \Expect[ \sum_{i \in [k-1]} p_{\bw_t}^{(i)} (\bw_i^\top \bx) -  (1+\sum_{i \in [k-1]} e^{\bw_i^\top \bx})]\nonumber\\
&\hspace{5em}+ \mathrm{const.} \label{eq:Qdef}
\end{align}
where $\mathrm{const}$ refers to terms not depending on $\bw$, $P_{\bw}(z=i|x)=\exp(\bw_i^\top \bx)/\sum_j \exp(\bw_j^\top \bx)$ and $p_{\bw_t}^{(i)} \triangleq \prob{z=i|\bx,y,\bw_t}$ corresponds to the posterior probability for the $i^{\text{th}}$ expert, given by
\begin{align}
p_{\bw_t}^{(i)} = \frac{ p_{i,t}(\bx) \calN(y|g(\ba_i^\top \bx),\sigma^2)}{\sum_{j \in [k] } p_{j,t}(\bx) \calN(y|g(\ba_j^\top \bx),\sigma^2)}, \label{eq:computep}\\
 p_{i,t}(\bx)= \frac{e^{(\bw_t)_i^\top \bx}}{1+\sum_{ j \in [k-1]}e^{(\bw_t)_j^\top \bx}}. \nonumber
\end{align}

%
In \prettyref{eq:Qdef}, the expectation is with respect to the true distribution of $(\bx,y)$, given by \prettyref{eq:kmoe}. Thus the EM can be viewed as a deterministic procedure which maps $\bw_t \mapsto M(\bw_t)$ where
\begin{align*}
M(\bw)=\mathrm{argmax}_{ \bw' \in \Omega} Q(\bw'|\bw).
\end{align*}
When the estimated regressors $\ba_i$ equal the true parameters $\ba_i^\ast$, it follows from the self-consistency property of the EM that the true parameter $\bw^\ast$ is a fixed-point for the EM operator $M$, \ie $M(\bw^\ast)=\bw^\ast$ \citep{mclachlan2007algorithm}. However, this does not guarantee that EM converges to $\bw^\ast$. In the following theorem, we show that even when the regressors are known approximately, EM algorithm converges to the true gating parameters at a geometric rate upto an additive error, under \emph{global} initializations. For the error metric, we define $\norm{\bw-\bw'} \define \max_{i \in [k-1]}\norm{\bw_i-\bw'_i}_2$ for any $\bw,\bw' \in \Omega$. We assume that $R=1$ for simplicity. (Our results extend straightforwardly to general $R$).
\begin{theorem}
\label{thm:twoexpertsEM}
Let $\varepsilon>0$ be such that $\max_{i}\|\ba_i-\ba_i^\ast\|_2=\sigma^2 \varepsilon$. There exists a constant $\sigma_0 >0$ such that whenever $0< \sigma<\sigma_0$, for any random initialization $\bw_0 \in \Omega$, the population-level EM updates on the gating parameter $\{\bw\}_{t \geq 0}$ converge almost geometrically to the true parameter $\bw^\ast$ upto an additive error, \ie
\begin{align*}
\norm{\bw_t-\bw^\ast} \leq \pth{\kappa_\sigma}^t \norm{\bw_0 -\bw^\ast} + \kappa\varepsilon \sum_{i=0}^{t-1}\kappa_\sigma^i ,
\end{align*}
where $\kappa_\sigma, \kappa$ are dimension-independent constant depending on $g$ and $\sigma$ such that $\kappa_\sigma \xrightarrow{\sigma\rightarrow 0} 0$ and $\kappa \leq  (k-1)\frac{\sqrt{6(2+\sigma^2)}}{2}$ for $g=$linear, sigmoid and ReLU.
\end{theorem}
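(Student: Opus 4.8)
The plan is to reduce the theorem to a single global one-step contraction estimate of the form $\norm{M(\bw)-\bw^\ast}\le \kappa_\sigma\norm{\bw-\bw^\ast}+\kappa\varepsilon$ for every $\bw\in\Omega$, and then to unroll it. The starting point is that $Q(\cdot|\bw_t)$ in \prettyref{eq:Qdef} is strongly concave on the convex set $\Omega$: its linear part is $\sum_{i\in[k-1]}\inner{\Expect[p_{\bw_t}^{(i)}\bx]}{\bw_i}=\inner{F(\bw_t)}{\bw}$, where $F(\bw_t)\define(\Expect[p_{\bw_t}^{(i)}\bx])_{i\in[k-1]}$ collects the posterior-weighted first moments, while its curvature comes entirely from the gating log-partition $A(\bw)\define\Expect[\log(1+\sum_i e^{\bw_i^\top\bx})]$, whose Hessian is an expected multinomial covariance weighted by $\bx\bx^\top$. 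Since $A$ does not involve $\sigma$, and since $\bx\sim\calN(0,I_d)$ with $\norm{\bw_i}_2\le R=1$ makes everything depend only on the one-dimensional Gaussians $\inner{\bw_i}{\bx}$, this Hessian is bounded below by a positive dimension-independent modulus $\mu$ on $\Omega$. Consequently the M-step map $M(\bw)=\mathrm{argmax}_{\bw'\in\Omega}\qth{\inner{F(\bw)}{\bw'}-A(\bw')}$ is $(1/\mu)$-Lipschitz in its argument $F(\bw)$, giving $\norm{M(\bw)-M(\bw^\ast)}\le\tfrac{1}{\mu}\norm{F(\bw)-F(\bw^\ast)}$.

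The crux is then the sensitivity lemma $\norm{F(\bw)-F(\bw^\ast)}\le L(\sigma)\norm{\bw-\bw^\ast}$ with $L(\sigma)\xrightarrow{\sigma\to0}0$. First I would record self-consistency: with exact regressors $\Expect[p_{\bw^\ast}^{(i)}\bx]=\Expect[p_i^\ast(\bx)\bx]$, so $\bw^\ast$ is a fixed point, $M(\bw^\ast)=\bw^\ast$. For the Lipschitz bound I would differentiate the posterior in \prettyref{eq:computep} in $\bw$; because $p_{\bw}^{(i)}$ depends on $\bw$ only through the softmax prior $p_{i,t}(\bx)$, the Jacobian $\partial_{\bw}F$ reduces to expectations of factors of the form $p_{\bw}^{(i)}\pth{1-p_{\bw}^{(i)}}$ against polynomially growing functions of $\bx$. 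The key mechanism is that as $\sigma\to0$ the likelihoods $\calN(y|g(\inner{\ba_j}{\bx}),\sigma^2)$ concentrate, so for a sample generated by expert $z$ the posterior $p_{\bw}^{(i)}$ converges to the hard assignment $\mathbbm{1}\{i=z\}$, which is \emph{independent} of $\bw$; hence $p_{\bw}^{(i)}(1-p_{\bw}^{(i)})\to0$ almost surely and $\partial_{\bw}F\to0$. The orthogonality in Assumption $3$ is what makes $\inner{\bw_i}{\bx}$ independent of the regression coordinates $\inner{\ba_j}{\bx}$, allowing these moments to be evaluated and bounded uniformly over $\Omega$.

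The delicate point, and the main obstacle, is quantifying this convergence uniformly rather than pointwise. Near the decision boundary the factor $p_{\bw}^{(i)}(1-p_{\bw}^{(i)})$ is $\Theta(1)$, and differentiating the likelihood ratio introduces factors scaling like $1/\sigma^2$; the argument closes only because the probability that the two largest likelihoods lie within $O(\sigma)$ of one another shrinks fast enough to dominate these factors after integration. Executing this tradeoff is precisely what furnishes the threshold $\sigma_0$ and the rate $\kappa_\sigma\define L(\sigma)/\mu$ with $\kappa_\sigma\to0$, and it is the technical heart of the proof; the regime $\sigma<\sigma_0$ is exactly what guarantees $\kappa_\sigma<1$.

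Finally I would account for the inexact regressors. With $\max_i\norm{\ba_i-\ba_i^\ast}_2=\sigma^2\varepsilon$, $\bw^\ast$ is only an approximate fixed point, so I compare $M$ to the exact-regressor map and bound $\norm{M(\bw^\ast)-\bw^\ast}\le\tfrac1\mu\norm{F(\bw^\ast)-F_{\mathrm{exact}}(\bw^\ast)}$. A parallel sensitivity computation, now in the regressor argument, shows the $1/\sigma^2$ arising from the Gaussian score cancels against the $\sigma^2$ in $\norm{\ba_i-\ba_i^\ast}_2$, and evaluating the resulting Gaussian moments for $g=$ linear, sigmoid, and ReLU yields $\norm{M(\bw^\ast)-\bw^\ast}\le\kappa\varepsilon$ with $\kappa\le(k-1)\tfrac{\sqrt{6(2+\sigma^2)}}{2}$. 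The triangle inequality then gives the one-step bound $\norm{M(\bw)-\bw^\ast}\le\kappa_\sigma\norm{\bw-\bw^\ast}+\kappa\varepsilon$ for every $\bw\in\Omega$, and since this holds globally on the convex domain, unrolling it over $t$ iterations from any initialization $\bw_0$ produces exactly $\norm{\bw_t-\bw^\ast}\le\kappa_\sigma^t\norm{\bw_0-\bw^\ast}+\kappa\varepsilon\sum_{i=0}^{t-1}\kappa_\sigma^i$.
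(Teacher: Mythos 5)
Your proposal is correct and follows essentially the same route as the paper's proof: strong concavity of the $Q$-function plus a Lipschitz-argmax stability argument (the paper packages this via \cite{bala17} and a stability-of-minimizers lemma, \prettyref{lmm:stabilityconvexity}), a $\bw$-sensitivity bound on the posterior moments with constant vanishing as $\sigma\to 0$ (proved, as you anticipate, by dominated convergence plus Gaussian integration over the ambiguous-assignment region), and a regressor-sensitivity bound in which the Gaussian score's $1/\sigma^2$ cancels the $\sigma^2\varepsilon$ perturbation to give $\kappa \le (k-1)\frac{\sqrt{6(2+\sigma^2)}}{2}$, after which the one-step bound is unrolled exactly as you do. The only differences are cosmetic and harmless: you apply the triangle inequality through $M(\bw^\ast,\bA)$ whereas the paper goes through $M(\bw_{t-1},\bA^\ast)$ (so your contraction step must be stated for the inexact-regressor map, which the same computation delivers), and your remark that differentiating in $\bw$ introduces $1/\sigma^2$ factors is misplaced --- those factors arise only in the regressor derivative, while the $\bw$-derivative of the posterior involves only prior-times-likelihood ratios bounded by $1/4$, which is precisely why the paper's integration argument closes.
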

\begin{proof}(Sketch) One can show that the $Q(\cdot|\bw_t)$ defined in \prettyref{eq:Qdef} is a strongly concave function. Moreover, if we let $\varepsilon=0$ and $\bw_t=\bw^\ast$, we have from the self-consistency of EM that $\mathrm{argmax} Q(\cdot|\bw^\ast)=\bw^\ast$. Thus if we can show that the functions are $Q(\cdot|\bw_t)$ and $Q(\cdot|\bw^\ast)$ ``sufficiently close" whenever $\bw_t$ and $\bw^\ast$ are close, we can use the EM convergence analysis tools from \cite{bala17} to show that their corresponding maximizers also stay close upto a scaling factor determined by $\kappa_\sigma$ above. Then it follows that the EM updates converge geometrically. 
\end{proof}
\textbf{Remark.} In the M-step of the EM algorithm, the next iterate is chosen so that the function $Q(\cdot|\bw_t)$ is maximized. Instead we can perform an ascent step in the direction of the gradient of $Q(\cdot|\bw_t)$ to produce the next iterate, i.e. $\bw_{t+1}=\Pi_{\Omega}(\bw_t+\alpha\nabla Q(\bw_t|\bw_t) )$, where $\Pi_{\Omega}(\cdot)$ is the projection operator. This variant of EM algorithm is known as \emph{Gradient EM}. In \prettyref{app:twoexpertsgradEM}, we show that Gradient EM also enjoys similar convergence guarantees.


\textbf{MoM to learn gating parameters.} In \prettyref{thm:twoexpertsEM}, we proved that EM algorithm provably recovers the true gating parameters for any $k\geq 2$ mixtures. In this section, we show that for the special case of $k=2$, we can learn $\bw^\ast$ (upto the unit direction) using an alternative procedure involving MoM. First we define
\begin{align}
\mathrm{Ratio}(\bx,y) \define  \frac{  y  - \inner{\ba_2}{\bx}}{\inner{\ba_1-\ba_2}{\bx}}
\end{align} 

The following theorem establishes that the the CDF of the random variable $\mathrm{Ratio}(\bx,y)$, when regressed on input $\bx$, is proportional to $\bw^\ast$.

\begin{theorem}
Suppose that $(\ba_1,\ba_2)=(\ba_1^\ast,\ba_2^\ast)$. Then we have that 
\begin{align*}
\Expect[\mathds{1}\{\mathrm{Ratio}(\bx,y) \leq 0.5\}\cdot \bx]= \alpha \bw^\ast,
\end{align*}
where $\alpha \in \reals$ is a scalar given by $\alpha=\Expect[f'(\inner{\bw^\ast}{\bx}) \cdot \pth{1-2\Phi \pth{\frac{|\inner{\ba_1-\ba_2}{\bx}|}{2\sigma}}} ]  $.
\label{thm:MoM_gating}
\end{theorem}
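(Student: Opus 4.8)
The plan is to reduce this vector identity to a one–dimensional conditional–CDF computation and then apply the first–order Stein's lemma, letting the orthogonality in Assumption~3 isolate the $\bw^\ast$ direction. Throughout I work in the $k=2$, $g=\mathrm{linear}$, Gaussian-noise model that underlies the definition of $\mathrm{Ratio}$, and I use $\ba_i=\ba_i^\ast$ as hypothesized. Write $s \define \inner{\bw^\ast}{\bx}$ and $\Delta \define \inner{\ba_1^\ast-\ba_2^\ast}{\bx}$, and let $\Phi,\phi$ be the standard normal CDF and density.

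\emph{Step 1 (conditional CDF).} First I would condition on $\bx$ and on the latent expert index $z\in\{1,2\}$. On $\{z=1\}$ one has $y-\inner{\ba_2^\ast}{\bx}=\Delta+\sigma\xi$ and on $\{z=2\}$ one has $y-\inner{\ba_2^\ast}{\bx}=\sigma\xi$ with $\xi\sim\calN(0,1)$. Dividing by $\Delta$ and tracking the sign of $\Delta$ carefully (the two sign cases collapse into the same expression), this yields $\prob{\mathrm{Ratio}(\bx,y)\le 0.5\mid\bx,z=1}=\Phi(-\tfrac{|\Delta|}{2\sigma})$ and $\prob{\mathrm{Ratio}(\bx,y)\le 0.5\mid\bx,z=2}=\Phi(\tfrac{|\Delta|}{2\sigma})$. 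Averaging over $z$ with gating weight $f(s)=P_{1|\bx}$ gives the clean form
\[
h(\bx) \define \prob{\mathrm{Ratio}(\bx,y)\le 0.5\mid\bx} = \pth{1-f(s)} + \pth{2f(s)-1}\,\Phi\!\pth{-\tfrac{|\Delta|}{2\sigma}},
\]
and by the tower property $\Expect[\mathbbm{1}\{\mathrm{Ratio}(\bx,y)\le 0.5\}\cdot\bx]=\Expect[h(\bx)\,\bx]$.

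\emph{Step 2 (Stein plus gradient).} Since $h$ is bounded and Lipschitz, the first–order Stein's lemma gives $\Expect[h(\bx)\,\bx]=\Expect[\nabla_\bx h(\bx)]$. Differentiating, the $(1-f(s))$ and $(2f(s)-1)$ factors produce gradients along $\bw^\ast$, while $\Phi(-|\Delta|/(2\sigma))$ produces a gradient along $\ba_1^\ast-\ba_2^\ast$ through $\tfrac{\partial}{\partial\bx}|\Delta|=\mathrm{sign}(\Delta)(\ba_1^\ast-\ba_2^\ast)$. Collecting terms and using $\Phi(-t)=1-\Phi(t)$, I expect
\[
\nabla_\bx h(\bx) = f'(s)\pth{1 - 2\Phi\!\pth{\tfrac{|\Delta|}{2\sigma}}}\bw^\ast \;-\; \frac{2f(s)-1}{2\sigma}\,\phi\!\pth{\tfrac{|\Delta|}{2\sigma}}\,\mathrm{sign}(\Delta)\,(\ba_1^\ast-\ba_2^\ast).
\]

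\emph{Step 3 (killing the cross term).} Taking expectations, the first summand is exactly $\alpha\bw^\ast$ with $\alpha$ as in the statement, so it remains to show the second summand vanishes. This is where Assumption~3 is essential: because $\bw^\ast\perp\mathrm{span}\{\ba_1^\ast,\ba_2^\ast\}$, the jointly Gaussian pair $(s,\Delta)$ is uncorrelated, hence independent, so the expectation factorizes and $\Expect[\phi(\tfrac{|\Delta|}{2\sigma})\mathrm{sign}(\Delta)]=0$ because the integrand is an odd function of the symmetric Gaussian $\Delta$. This forces the $\ba_1^\ast-\ba_2^\ast$ component to drop out, leaving $\Expect[h(\bx)\,\bx]=\alpha\bw^\ast$. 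The main obstacle is the non-smoothness of $h$ at $\Delta=0$ (the kink in $|\Delta|$), which I must handle by justifying the Stein identity for the Lipschitz map $h$ via its a.e.\ derivative or a mollification argument, together with the sign bookkeeping in Step~1; the conceptual crux, however, is the independence of $s$ and $\Delta$ coming from orthogonality, which is precisely what annihilates the spurious $\ba_1^\ast-\ba_2^\ast$ direction — without Assumption~3 the cross term would survive and the estimator would fail to be proportional to $\bw^\ast$.
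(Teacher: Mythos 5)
Your proof is correct and takes essentially the same route as the paper's: both derive the conditional CDF $\prob{\mathrm{Ratio}(\bx,y)\le 0.5\mid \bx}$ by conditioning on the latent expert, both apply Stein's lemma, and both use the orthogonality of $\bw^\ast$ to $\mathrm{span}\{\ba_1^\ast,\ba_2^\ast\}$ (hence independence of $\inner{\bw^\ast}{\bx}$ and $\inner{\ba_1^\ast-\ba_2^\ast}{\bx}$) to annihilate the spurious component along $\ba_1^\ast-\ba_2^\ast$. The only cosmetic difference is that you apply Stein to the full conditional CDF at once, whereas the paper first discards the even term $\Phi\pth{|\inner{\ba_1-\ba_2}{\bx}|/(2\sigma)}$ by symmetry and then uses Stein with the product rule on the remainder; if anything, you are more careful than the paper in flagging the justification of Stein's identity at the kink $\inner{\ba_1^\ast-\ba_2^\ast}{\bx}=0$.
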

\begin{proof}(Sketch)
We first show that $\mathrm{Ratio}(\bx,y)$ is a mixture of Cauchy distributions. Then we show that $\Expect[\mathds{1}\{\mathrm{Ratio}(\bx,y) \leq z \}|\bx]= \prob{\mathrm{Ratio} \leq z|\bx} = f(\bw^\top\bx) \Phi \pth{(z-1) \frac{|\Delta_x|}{\sigma}}+(1-f(\bw^\top\bx)) \Phi \pth{z \frac{|\Delta_x|}{\sigma}}$ where $\Delta_x=(\ba_1-\ba_2)^\top \bx$. Then our result follows from taking the first moment of the indicator random variable with $\bx$ and Stein's lemma.
\end{proof}

\section{Experiments}
   \label{sec:experiments}
In this section, we empirically validate our algorithm in various settings and compare its performance to that of EM on both synthetic and real world datasets \footnote{Codes are available at this repository \href{https://github.com/Ashokvardhan/Breaking-the-gridlock-in-MoE-Consistent-and-Efficient-Algorithms}{MoE codes}.}. In both the scenarios, we found that our algorithm consistently outperforms the existing approaches. For the tensor decomposition in our \prettyref{algo:regressrecov}, we use the Orth-ALS package by \cite{orthals}. In all the synthetic experiments, we first draw the regressors $\{\ba_{i}^\ast\}_{i=1}^k$ i.i.d uniformly from the unit sphere $\mathbb{S}^{d-1}$. The input distribution $P_{\bx}$ and the generation of $\bw_i^\ast$'s are detailed for each experiment. Then the labels $y_i$ are generated according to the true $k$-MoE model in \prettyref{eq:kmoe} for linear activation. Additional experiments in this setting with non-linear activations are detailed in \prettyref{app:app_synthetic_data}. Experiments with real world data are provided in \prettyref{sec:real_data}.

\subsection{Non-gaussian inputs}
\label{sec:non-gaussian}
In this section we let the input distribution to be mixtures of Gaussians (GMM). We let $k=2, d=10$ and $\sigma=0.1$. The gating parameter $\bw^\ast \in \reals^{10}$ is uniformly chosen from the unit sphere $\mathbb{S}^9$. To generate the input features, we first randomly draw $\mu_1,\mu_2 \in \mathbb{S}^9$, and generate $n$ \iid samples $\bx_i \sim p \calN(\mu_1,I_d)+(1-p)\calN(\mu_2,I_d)$, where $p\in \{0.1,0.3,0.5,0.7,0.9 \}$. Here $n=2000$. Since $\bx$ is a $2$-GMM, its score functions $\calS_3(\bx),\calS_2(\bx)$ are computed using the densities of Gaussian mixtures \citep{scorebusiness}. To gauge the performance of our algorithm, we measure the correlation of our learned parameters $\ba_1,\ba_2$ and $\bw$ with the ground truth, \ie
\begin{align}
\mathsf{Regressor \ Fit}(\ba_1,\ba_2) = \max_{\pi} \min_{i \in \{1,2\}} |\inner{\ba_{\pi(i)}}{\ba_i^\ast}|,
\label{eq:regressorfit}
\end{align}
where $\pi:\{1,2\}\to \{1,2\}$ is a permutation. Similarly, for the gating parameter, we define
\begin{align}
\mathsf{Gating \ Fit}(\bw) = |\inner{\bw}{\bw^\ast}|.
\label{eq:gatingfit}
\end{align}
Here we assume that all the parameters are unit-normalized. The closer the values of fit are to $1$, the closer the learnt parameters are to the ground truth. As shown in \prettyref{tab:hello}, our algorithms are able to learn the ground truth very accurately in a variety of settings, as indicated by the measured fit. This highlights the fact that our algorithms are robust to the input distributions.

\subsection{Non-orthogonal parameters}
\label{sec:non-orthogonal}
In this section we verify that our algorithms still work well in practice even under the relaxation of Assumption $3$. For the experiments, we consider the similar setting as before with $k=2,d=10,\sigma=0.1$ and the gating parameter $\bw^\ast$ is drawn uniformly from $\mathbb{S}^9$ without the orthogonality restriction. We let $\bx_i \stackrel{\iid}\sim \calN(0,I_d)$. We choose $n=2000$. We use $\mathsf{RegressorFit}$ and  $\mathsf{GatingFit}$ defined in \prettyref{eq:regressorfit} and \prettyref{eq:gatingfit} respectively, as our performance metrics. From \prettyref{tab:non_orthogonal}, we can see that the performance of our algorithms is almost the same across both the settings. In both the scenarios, our fit is consistently greater than $0.9$. 

In \prettyref{fig:nonorthogonalEMw}, we plotted $\mathsf{GatingFit}(\bw_t)$ vs. the number of iterations $t$, as $\bw_t$ is updated according to \prettyref{algo:gatingrecov}, over $10$ independent trials. We observe that the learned parameters converge to the true parameters in less than $5$ iterations.

\begin{figure*}
    \centering
    \begin{subfigure}[b]{0.32\textwidth}
        \includegraphics[width=\textwidth]{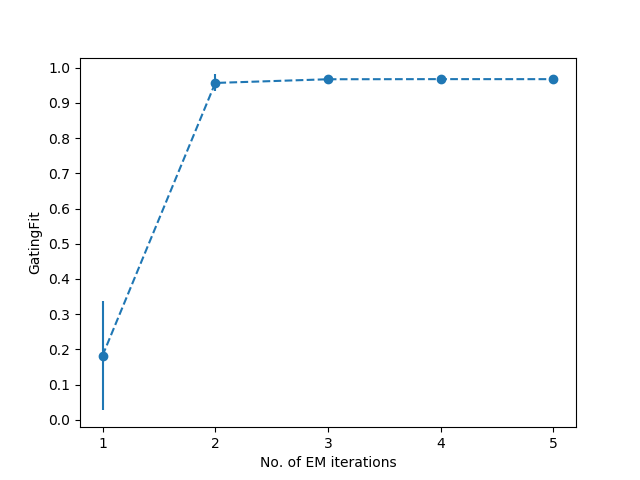}
        \caption{Non-orthogonality}
        \label{fig:nonorthogonalEMw}
    \end{subfigure}
    \begin{subfigure}[b]{0.32\textwidth}
        \includegraphics[width=\textwidth]{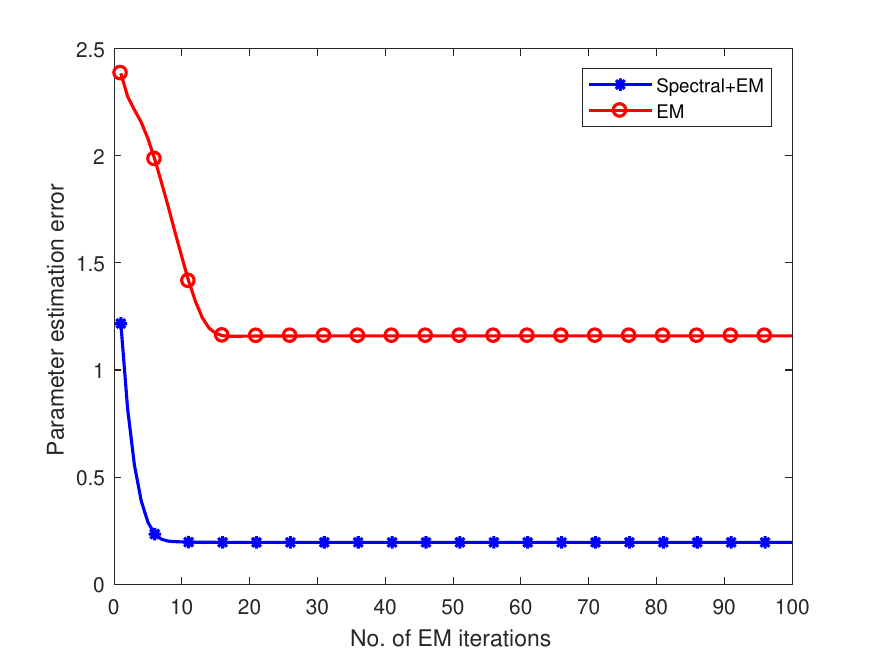}
        \caption{$k=3$}
        \label{fig:k_3}
    \end{subfigure}
    ~ 
    \begin{subfigure}[b]{0.32\textwidth}
        \includegraphics[width=\textwidth]{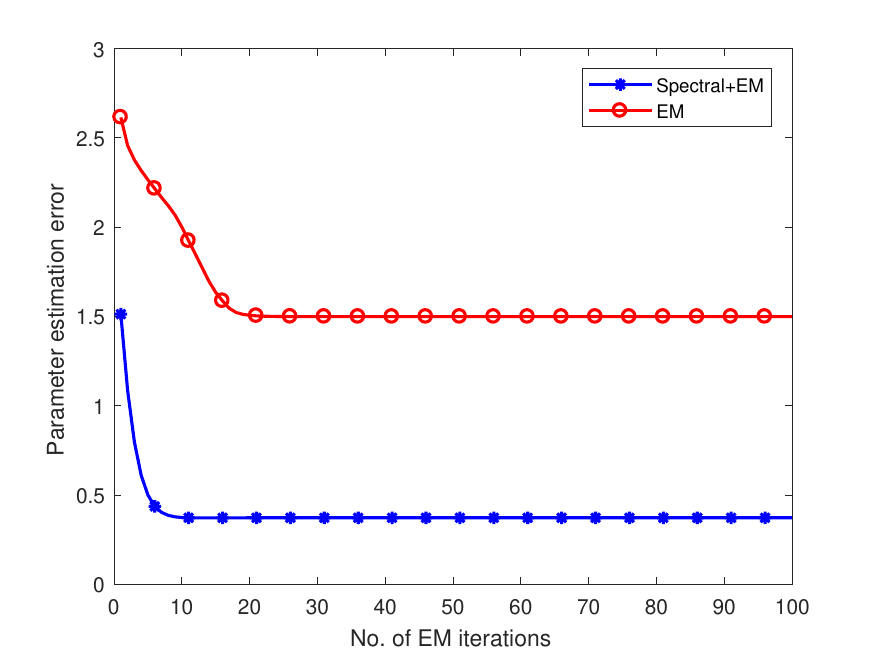}
        \caption{$k=4$}
        \label{fig:k_4}
    \end{subfigure}
    \caption{(a): $\mathsf{GatingFit}$ for our algorithm under non-orthogonality setting. (b),(c): Estimation error $\calE(\bA,\bW)$ of our algorithm vs. joint-EM algorithm. Our algorithm is significantly better than the joint-EM under random initializations.}\label{fig:animals}
\end{figure*}

\begin{table*}[ht]
\caption{Fit of our learned parameters for non-Gaussian inputs}
\centering
\begin{tabular}{cccccc}
 \hline 
 & $p=0.1$ & $p=0.3$  & $p=0.5$ & $p=0.7$ & $p=0.9$ \\
 \hline 
$\mathsf{Regressor \ Fit}$ & $0.93 \pm 0.06$ & $0.94 \pm 0.02$  & $0.92 \pm 0.04$ & $0.92 \pm 0.02$ & $0.91 \pm 0.06$ \\
 \hline 
$\mathsf{Gating \ Fit}$& $0.9 \pm 0.1$ & $0.97 \pm 0.01$  & $0.93 \pm 0.04$ & $0.96 \pm 0.03$ & $0.97 \pm 0.01$ \\
 \hline 
 \end{tabular} 

\label{tab:hello}

\bigskip

\caption{Performance of our algorithm under orthogonal and non-orthogonal settings}
\centering
\begin{tabular}{ccc}
 \hline 
 & $\mathsf{Regressor \ Fit}$ & $\mathsf{Gating \ Fit}$  \\
 \hline 
Non-orthogonal & $0.9 \pm 0.08 $ & $0.96\pm 0.02$  \\
 \hline 
Orthogonal & $0.93 \pm 0.03$ & $0.96 \pm 0.03$  \\
 \hline 
 \end{tabular} 

\label{tab:non_orthogonal}

\end{table*}

\subsection{Comparison to joint-EM}
\label{sec:compjoint-EM}
Here we compare the performance of our algorithm with that of the joint-EM. We let the number of mixture components be $k=3$ and $k=4$. We let $\bx \sim \calN(0,I_d)$ and the gating parameters are drawn uniformly from $\mathbb{S}^9$. If $\bA=[\ba_1 \ldots \ba_k]$ and $\bW=[\bw_1 \ldots \bw_{k-1} \ 0]$ denote the estimated expert and gating parameters respectively, our evaluation metric is $\calE$, the Frobenious norm of the parameter error accounting for the best possible permutation $\pi:[k] \to [k]$, \ie $\calE(\bA,\bW) =\inf_{\pi} \| \bA-\bA_{\pi}^\ast \|_F+\| \bW -\bW_{\pi}^\ast \|_F$, where $\bA_\pi^\ast=[\ba^\ast_{\pi(1)} \ldots \ba^\ast_{\pi(k)}]$ denotes the permuted regression parameter matrix and similarly for $\bW_\pi^\ast$. In \prettyref{fig:k_3} and \prettyref{fig:k_4}, we compare the performance of our algorithm with the joint-EM algorithm for $n=8000, d=10, \sigma=0.5$. The plotted estimation error $\calE(\bA,\bW)$ is averaged for $10$ trials. It is clear that our algorithm is able to recover the true parameters thus resulting in much smaller parameter error than the joint-EM which often gets stuck in local optima. In addition, our algorithm is able to learn these parameters in very few iterations, often less than $10$ iterations. We also find that our algorithm consistently outperforms the joint-EM for different choices of non-linearities, number of samples, number of mixtures, etc. (details provided in \prettyref{app:additionalexp}). Note that the above error metric $\calE(\bA,\bW)$ is close to zero if and only if $\mathsf{Regressor \ Fit}$ and $\mathsf{Gating \ Fit}$ is close to one.

\subsection{Real data}
\label{sec:real_data}
To highlight the generalizability of our algorithm, in \prettyref{app:app_real_data} of the supplement, we compare the performance of our algorithm to that of the standard approaches on a variety of real world datasets. Results from these experiments highlight the fact that in the real world scenario, where the underlying data is not generated according to a MoE model, our approach still learns a superior set of parameters as opposed to the existing algorithms. This fact is reflected in the lowest prediction errors obtained by our algorithm.

\section{Discussion}
\label{sec:discussion}
In this paper we provided the first provable and globally consistent algorithm that can learn the true parameters of a MoE model. We believe that ideas from \cite{SedghiA14a} can be naturally extended for the finite sample complexity analysis of the tensor decomposition to learn the regressors and similarly, techniques from \cite{bala17} can be extended to the finite sample EM convergence analysis for the gating parameters. While we have focused here on parameter recovery, however, there are no statistical bounds on output prediction error when the data is not generated from the model. MoE models are known to be capable of fitting general functions, and getting statistical guarantees on learning in such regimes is an interesting direction for future work.

\section*{Acknowledgements}
\label{sec:acknow}
This work is partly supported by NSF grants 1927712 and 1815535, NSF
awards CNS-1718270, 1651236, 1703403, and the Army Research Office
under grant W911NF1810332.

\bibliography{references}

\begin{thebibliography}{39}
\providecommand{\natexlab}[1]{#1}
\providecommand{\url}[1]{\texttt{#1}}
\expandafter\ifx\csname urlstyle\endcsname\relax
  \providecommand{\doi}[1]{doi: #1}\else
  \providecommand{\doi}{doi: \begingroup \urlstyle{rm}\Url}\fi

\bibitem[Anandkumar et~al.(2014)Anandkumar, Ge, Hsu, Kakade, and
  Telgarsky]{AFHK+12}
Anandkumar, A., Ge, R., Hsu, D., Kakade, S.~M., and Telgarsky, M.
\newblock Tensor decompositions for learning latent variable models.
\newblock \emph{J. Mach. Learn. Res.}, 15\penalty0 (1):\penalty0 2773--2832,
  January 2014.
\newblock ISSN 1532-4435.

\bibitem[Balakrishnan et~al.(2017)Balakrishnan, Wainwright, and Yu]{bala17}
Balakrishnan, S., Wainwright, M.~J., and Yu, B.
\newblock Statistical guarantees for the {EM} algorithm: From population to
  sample-based analysis.
\newblock \emph{The Annals of Statistics}, 45\penalty0 (1):\penalty0 77--120,
  2017.

\bibitem[Brooks et~al.(1989)Brooks, Pope, and Marcolini.]{airfoil}
Brooks, T., Pope, D., and Marcolini., A.
\newblock Airfoil self-noise and prediction.
\newblock Technical report, {NASA}, 1989.
\newblock URL \url{https://archive.ics.uci.edu/ml/datasets/Airfoil+Self-Noise}.

\bibitem[Chen et~al.(2014)Chen, Yi, and Caramanis]{chen2014convex}
Chen, Y., Yi, X., and Caramanis, C.
\newblock A convex formulation for mixed regression with two components:
  Minimax optimal rates.
\newblock In \emph{Conference on Learning Theory}, pp.\  560--604, 2014.

\bibitem[Chung et~al.(2014)Chung, G{\"{u}}l{\c{c}}ehre, Cho, and
  Bengio]{empiricalgru}
Chung, J., G{\"{u}}l{\c{c}}ehre, {\c{C}}., Cho, K., and Bengio, Y.
\newblock Empirical evaluation of gated recurrent neural networks on sequence
  modeling.
\newblock abs/1412.3555, 2014.

\bibitem[Collobert et~al.(2002)Collobert, Bengio, and Bengio]{svmmoe}
Collobert, R., Bengio, S., and Bengio, Y.
\newblock A parallel mixture of {SVM}s for very large scale problems.
\newblock \emph{Neural Computing}, 2002.

\bibitem[Du et~al.(2017)Du, Lee, Tian, Poczos, and Singh]{du2017gradient}
Du, S.~S., Lee, J.~D., Tian, Y., Poczos, B., and Singh, A.
\newblock Gradient descent learns one-hidden-layer cnn: Don't be afraid of
  spurious local minima.
\newblock \emph{arXiv preprint arXiv:1712.00779}, 2017.

\bibitem[Ge et~al.(2017)Ge, Lee, and Ma]{ge2017learning}
Ge, R., Lee, J.~D., and Ma, T.
\newblock Learning one-hidden-layer neural networks with landscape design.
\newblock \emph{arXiv preprint arXiv:1711.00501}, 2017.

\bibitem[Gross et~al.(2017)Gross, Szlam, et~al.]{gross2017hard}
Gross, S., Szlam, A., et~al.
\newblock Hard mixtures of experts for large scale weakly supervised vision.
\newblock In \emph{Computer Vision and Pattern Recognition (CVPR), 2017 IEEE
  Conference on}, pp.\  5085--5093. IEEE, 2017.

\bibitem[Jacobs et~al.(1991)Jacobs, Jordan, Nowlan, and Hinton]{JacJor}
Jacobs, R.~A., Jordan, M.~I., Nowlan, S.~J., and Hinton, G.~E.
\newblock Adaptive mixtures of local experts.
\newblock \emph{Neural Computation}, 1991.

\bibitem[Janzamin et~al.(2014)Janzamin, Sedghi, and Anandkumar]{scorebusiness}
Janzamin, M., Sedghi, H., and Anandkumar, A.
\newblock Score function features for discriminative learning: Matrix and
  tensor framework.
\newblock abs/1412.2863, 2014.
\newblock URL \url{http://arxiv.org/abs/1412.2863}.

\bibitem[Janzamin et~al.(2015)Janzamin, Sedghi, and
  Anandkumar]{janzamin2015beating}
Janzamin, M., Sedghi, H., and Anandkumar, A.
\newblock Beating the perils of non-convexity: Guaranteed training of neural
  networks using tensor methods.
\newblock \emph{arXiv preprint arXiv:1506.08473}, 2015.

\bibitem[Jin et~al.(2016)Jin, Zhang, Balakrishnan, Wainwright, and
  Jordan]{localem}
Jin, C., Zhang, Y., Balakrishnan, S., Wainwright, M.~J., and Jordan, M.
\newblock Local maxima in the likelihood of gaussian mixture models: Structural
  results and algorithmic consequences.
\newblock \emph{arXiv preprint arXiv:1609.00978}, 2016.

\bibitem[Jordan \& Jacobs(1994)Jordan and Jacobs]{JJ94}
Jordan, M.~I. and Jacobs, R.~A.
\newblock Hierarchical mixtures of experts and the {EM} algorithm.
\newblock \emph{Neural computation}, 6\penalty0 (2):\penalty0 181--214, 1994.

\bibitem[Jordan \& Xu(1995)Jordan and Xu]{emmoeanalysis}
Jordan, M.~I. and Xu, L.
\newblock Convergence results for the {EM} approach to mixtures of experts
  architectures.
\newblock \emph{Neural Networks}, 8\penalty0 (9):\penalty0 1409--1431, 1995.

\bibitem[Kruskal(1977)]{kruskal1977three}
Kruskal, J.~B.
\newblock Three-way arrays: rank and uniqueness of trilinear decompositions,
  with application to arithmetic complexity and statistics.
\newblock \emph{Linear algebra and its applications}, 18\penalty0 (2):\penalty0
  95--138, 1977.

\bibitem[Le et~al.(2016)Le, Dymetman, and Renders]{le2016lstm}
Le, P., Dymetman, M., and Renders, J.-M.
\newblock Lstm-based mixture-of-experts for knowledge-aware dialogues.
\newblock \emph{arXiv preprint arXiv:1605.01652}, 2016.

\bibitem[Li \& Yuan(2017)Li and Yuan]{li2017convergence}
Li, Y. and Yuan, Y.
\newblock Convergence analysis of two-layer neural networks with relu
  activation.
\newblock In \emph{Advances in Neural Information Processing Systems}, pp.\
  597--607, 2017.

\bibitem[Liu \& Yeh(2017)Liu and Yeh]{stock}
Liu, Y.-C. and Yeh, I.-C.
\newblock Using mixture design and neural networks to build stock selection
  decision support systems.
\newblock \emph{Neural Computing and Applications}, 28\penalty0 (3):\penalty0
  521--535, 2017.
\newblock \doi{10.1007/s00521-015-2090-x}.
\newblock URL
  \url{https://archive.ics.uci.edu/ml/datasets/Stock+portfolio+performance}.

\bibitem[Masoudnia \& Ebrahimpour(2014)Masoudnia and Ebrahimpour]{MaEb14}
Masoudnia, S. and Ebrahimpour, R.
\newblock Mixture of experts: a literature survey.
\newblock \emph{Artificial Intelligence Review}, 42\penalty0 (2):\penalty0 275,
  2014.

\bibitem[McLachlan \& Krishnan(2007)McLachlan and
  Krishnan]{mclachlan2007algorithm}
McLachlan, G. and Krishnan, T.
\newblock \emph{The EM algorithm and extensions}, volume 382.
\newblock John Wiley \& Sons, 2007.

\bibitem[Ng \& Deisenroth(2014)Ng and Deisenroth]{hmegp}
Ng, J.~W. and Deisenroth, M.~P.
\newblock Hierarchical mixture-of-experts model for large-scale gaussian
  process regression.
\newblock \emph{arXiv preprint arXiv:1412.3078}, 2014.

\bibitem[Safran \& Shamir(2017)Safran and Shamir]{safran2017spurious}
Safran, I. and Shamir, O.
\newblock Spurious local minima are common in two-layer relu neural networks.
\newblock \emph{arXiv preprint arXiv:1712.08968}, 2017.

\bibitem[Sedghi et~al.(2014)Sedghi, Janzamin, and Anandkumar]{SedghiA14a}
Sedghi, H., Janzamin, M., and Anandkumar, A.
\newblock Provable tensor methods for learning mixtures of classifiers.
\newblock \emph{arXiv preprint arXiv:1412.3046}, 2014.

\bibitem[Sharan \& Valiant(2017)Sharan and Valiant]{orthals}
Sharan, V. and Valiant, G.
\newblock Orthogonalized {ALS}: A theoretically principled tensor decomposition
  algorithm for practical use.
\newblock In \emph{Proceedings of the 34th International Conference on Machine
  Learning}, volume~70 of \emph{Proceedings of Machine Learning Research}, pp.\
   3095--3104, 06--11 Aug 2017.
\newblock URL \url{http://proceedings.mlr.press/v70/sharan17a.html}.

\bibitem[Shazeer et~al.(2017)Shazeer, Mirhoseini, Maziarz, Davis, Le, Hinton,
  and Dean]{SMMD+17}
Shazeer, N., Mirhoseini, A., Maziarz, K., Davis, A., Le, Q., Hinton, G., and
  Dean, J.
\newblock Outrageously large neural networks: The sparsely-gated
  mixture-of-experts layer.
\newblock \emph{arXiv preprint arXiv:1701.06538}, 2017.

\bibitem[Stein(1972)]{Ste72}
Stein, C.
\newblock A bound for the error in the normal approximation to the distribution
  of a sum of dependent random variables.
\newblock In \emph{Proceedings of the Sixth Berkeley Symposium on Mathematical
  Statistics and Probability}, volume~2, pp.\  583--602. University of
  California Press, 1972.

\bibitem[Sun et~al.(2017)Sun, Peng, Ren, and Xue]{sun2017human}
Sun, X., Peng, X., Ren, F., and Xue, Y.
\newblock Human-machine conversation based on hybrid neural network.
\newblock In \emph{Computational Science and Engineering (CSE) and Embedded and
  Ubiquitous Computing (EUC), 2017 IEEE International Conference on}, volume~1,
  pp.\  260--266. IEEE, 2017.

\bibitem[Sun et~al.(2014)Sun, Ioannidis, and Montanari]{SIM14}
Sun, Y., Ioannidis, S., and Montanari, A.
\newblock Learning mixtures of linear classifiers.
\newblock In \emph{Proceedings of the 31st International Conference on Machine
  Learning}, volume~32, pp.\  721--729, 2014.

\bibitem[Theis \& Bethge(2015)Theis and Bethge]{theis}
Theis, L. and Bethge, M.
\newblock Generative image modeling using spatial lstms.
\newblock In \emph{Proceedings of the 28th International Conference on Neural
  Information Processing Systems - Volume 2}, NIPS'15, pp.\  1927--1935,
  Cambridge, MA, USA, 2015. MIT Press.

\bibitem[Tresp(2001)]{gpmoe}
Tresp, V.
\newblock Mixtures of gaussian processes.
\newblock NIPS, 2001.

\bibitem[Vaswani et~al.(2017)Vaswani, Shazeer, Parmar, Uszkoreit, Jones, Gomez,
  Kaiser, and Polosukhin]{vaswani2017attention}
Vaswani, A., Shazeer, N., Parmar, N., Uszkoreit, J., Jones, L., Gomez, A.~N.,
  Kaiser, {\L}., and Polosukhin, I.
\newblock Attention is all you need.
\newblock In \emph{Advances in Neural Information Processing Systems}, pp.\
  5998--6008, 2017.

\bibitem[Wang et~al.(2018)Wang, Yu, Wang, Ma, Mirhoseini, Darrell, and
  Gonzalez]{wang2018deep}
Wang, X., Yu, F., Wang, R., Ma, Y.-A., Mirhoseini, A., Darrell, T., and
  Gonzalez, J.~E.
\newblock Deep mixture of experts via shallow embedding.
\newblock \emph{arXiv preprint arXiv:1806.01531}, 2018.

\bibitem[Yeh(1998)]{concrete}
Yeh, I.-C.
\newblock Modeling of strength of high performance concrete using artificial
  neural networks.
\newblock \emph{Cement and Concrete Research}, 28\penalty0 (12):\penalty0
  1797--1808, 1998.
\newblock URL
  \url{https://archive.ics.uci.edu/ml/datasets/Concrete+Compressive+Strength}.

\bibitem[Yi et~al.(2014)Yi, Caramanis, and Sanghavi]{yi2014alternating}
Yi, X., Caramanis, C., and Sanghavi, S.
\newblock Alternating minimization for mixed linear regression.
\newblock In \emph{International Conference on Machine Learning}, pp.\
  613--621, 2014.

\bibitem[Yi et~al.(2016)Yi, Caramanis, and Sanghavi]{YCS16}
Yi, X., Caramanis, C., and Sanghavi, S.
\newblock Solving a mixture of many random linear equations by tensor
  decomposition and alternating minimization.
\newblock \emph{arXiv preprint arXiv:1608.05749}, 2016.

\bibitem[Yuksel et~al.(2012)Yuksel, Wilson, and Gader]{YWG12}
Yuksel, S.~E., Wilson, J.~N., and Gader, P.~D.
\newblock Twenty years of mixture of experts.
\newblock \emph{IEEE Transactions on Neural Networks and Learning Systems},
  23\penalty0 (8):\penalty0 1177--1193, 2012.

\bibitem[Zhong et~al.(2016)Zhong, Jain, and Dhillon]{zhong16glm}
Zhong, K., Jain, P., and Dhillon, I.~S.
\newblock Mixed linear regression with multiple components.
\newblock pp.\  2190--2198. 2016.

\bibitem[Zhong et~al.(2017)Zhong, Song, Jain, Bartlett, and
  Dhillon]{2017recovery}
Zhong, K., Song, Z., Jain, P., Bartlett, P.~L., and Dhillon, I.~S.
\newblock Recovery guarantees for one-hidden-layer neural networks.
\newblock \emph{arXiv preprint arXiv:1706.03175}, 2017.

\end{thebibliography}
\bibliographystyle{icml2019}

%
%
%

\appendix
\onecolumn

\textbf{Organization.} The appendix is organized as follows:
\begin{itemize}
\item \prettyref{app:reviewtensor} and \prettyref{app:reviewem} contain the requisite material for method of moments and the convergence analysis of EM respectively. 

\item \prettyref{app:nonlinear} details the class of non-linearities for which our results hold.

\item  \prettyref{app:twoproofs} contains all the proofs of \prettyref{sec:guaranteestwomoe}. Two technical lemmas needed to prove \prettyref{thm:twoexpertsEM} are relegated to \prettyref{app:twoexpertsperturbation} and \prettyref{app:proofcontraction}.

\item \prettyref{app:twoexpertsgradEM} provides convergence guarantees for Gradient EM.


\item \prettyref{app:additionalexp} contains additional experiments for the comparison of joint-EM and our algorithm for the synthetic data.

\end{itemize}

\section{Toolbox for method of moments}
\label{app:reviewtensor}
In this section, we introduce the key techniques that are useful in parameter estimation of mixture models via the method of moments. 

Stein's identity (Stein's lemma) is a well-known result in probability and statistics and is widely used in estimation and inference taks. A refined version of the Stein's lemma \cite{Ste72} for higher-order moments is the key to parameter estimation in mixture of generalized linear models. We utilize this machinery in proving \prettyref{thm:twoexpertspopulation}. We first recall the Stein's lemma.

\begin{lemma}[Stein's lemma \cite{Ste72} ]
\label{lmm:stein} Let $\bx \sim \calN(0,I_d)$ and $g:\reals^d \to \reals$ be a function such that both $\Expect[\nabla_{\bx} g(\bx)]$ and $\Expect[g(\bx)\cdot \bx]$ exist and are finite. Then
\begin{align*}
\Expect[g(\bx) \cdot \bx]= \Expect[ \nabla_{\bx} g(\bx)].
\end{align*}
\end{lemma}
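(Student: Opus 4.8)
The plan is to prove the stated vector identity componentwise and to reduce each component to a one-dimensional integration by parts against the Gaussian density. Writing $\phi(\bx)=(2\pi)^{-d/2}e^{-\norm{\bx}^2/2}$ for the density of $\bx\sim\calN(0,I_d)$, the $i$-th coordinate of the claimed identity reads $\Expect[g(\bx)x_i]=\Expect[\partial_i g(\bx)]$, so it suffices to establish this scalar equation for each $i\in[d]$. The key algebraic fact driving everything is that the standard Gaussian density satisfies $\partial_i\phi(\bx)=-x_i\,\phi(\bx)$, so that multiplying $g$ by the coordinate $x_i$ is, up to a sign, the same as differentiating the density in the $i$-th direction.

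First I would fix a coordinate $i$ and exploit the product structure $\phi(\bx)=\prod_{j}\phi_1(x_j)$, where $\phi_1$ is the one-dimensional standard Gaussian density, together with Fubini's theorem (justified by the assumed finiteness of $\Expect[|g(\bx)x_i|]$) to write $\Expect[g(\bx)x_i]$ as the iterated integral $\int\big(\int g(\bx)\,x_i\phi_1(x_i)\,dx_i\big)\prod_{j\neq i}\phi_1(x_j)\,d\bx_{-i}$. On the inner integral I would substitute $x_i\phi_1(x_i)=-\phi_1'(x_i)$ and integrate by parts in $x_i$ (here I invoke the implicit regularity that $g$ is absolutely continuous along coordinate lines with derivative $\partial_i g$), producing $-[g(\bx)\phi_1(x_i)]_{-\infty}^{\infty}+\int\partial_i g(\bx)\phi_1(x_i)\,dx_i$. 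Reassembling the outer integral then yields exactly $\Expect[\partial_i g(\bx)]$, provided the boundary term vanishes for almost every fixed $\bx_{-i}$.

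The main obstacle, and the only nontrivial point, is showing that the boundary term $\lim_{x_i\to\pm\infty}g(\bx)\phi_1(x_i)$ is zero under the stated integrability hypotheses alone, without assuming $g$ bounded or compactly supported. For this I would argue that, for almost every $\bx_{-i}$, the fundamental theorem of calculus gives $g(\bx)\phi_1(x_i)=g(\bx_{-i},0)\phi_1(0)+\int_0^{x_i}\big(\partial_i g\cdot\phi_1-g\cdot t\,\phi_1\big)\,dt$, using $\phi_1'(t)=-t\phi_1(t)$. Both $\partial_i g\cdot\phi_1$ and $g\cdot t\,\phi_1$ are integrable in $t$ for a.e.\ $\bx_{-i}$ (these are precisely the two finiteness hypotheses, pushed through Fubini), so the limit as $x_i\to\pm\infty$ exists. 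If that limit were some $c\neq 0$, then $|g(\bx)x_i\phi_1(x_i)|\sim|c|\,|x_i|\to\infty$, contradicting the integrability of $g(\bx)x_i$ along that line; hence $c=0$.

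Finally, I would collect the $d$ coordinate identities into the single vector equation $\Expect[g(\bx)\cdot\bx]=\Expect[\nabla_{\bx}g(\bx)]$, completing the proof. The entire argument is elementary once the density identity $\nabla_{\bx}\phi=-\bx\,\phi$ and the coordinatewise reduction are in place; all the care is concentrated in the integrability bookkeeping of the boundary-term step, which is exactly what the two finiteness assumptions are designed to supply.
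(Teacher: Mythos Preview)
The paper does not actually prove this lemma; it merely states it with a citation to \cite{Ste72} as a classical result and uses it as a tool. Your proof is the standard integration-by-parts argument for Stein's identity, driven by the density relation $\nabla_{\bx}\phi(\bx)=-\bx\,\phi(\bx)$, and it is correct. In particular, your handling of the boundary term---showing that the limit of $g(\bx)\phi_1(x_i)$ exists from integrability of the derivative of the product, and then ruling out a nonzero limit because it would force $g(\bx)x_i\phi_1(x_i)$ to behave like $c\,x_i$ at infinity and hence fail to be integrable---is exactly the right way to extract the conclusion from the two finiteness hypotheses alone.
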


The following lemma, which can be viewed as an extension of Stein's lemma for higher-order moments, is the central technique behind parameter estimation in M-GLMs.
\begin{lemma}[\cite{SedghiA14a}]
\label{lmm:highstein}
Let $\bx \sim \calN(0,I_d)$ and $\calS_3(\bx)$ be as defined in \prettyref{eq:thirdscore} and let $\calS_2(\bx) \triangleq \bx \otimes \bx -I_d$. Then for any $g:\reals^d \to \reals$ satisfying some regularity conditions, we have
\begin{align*}
\Expect[g(\bx) \cdot \calS_2(\bx)]= \Expect[\nabla_{\bx}^{(2)} g(\bx)], \quad 
\Expect[g(\bx) \cdot \calS_3(\bx)]= \Expect[\nabla_{\bx}^{(3)} g(\bx)].
\end{align*}
\end{lemma}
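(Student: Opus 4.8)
The plan is to recognize that $\calS_2$ and $\calS_3$ are precisely the second- and third-order Hermite tensors of the standard Gaussian, and then to establish the identity by iterated integration by parts. The one-dimensional first-order case is exactly \prettyref{lmm:stein}, so the content here is the extension to orders two and three and the bookkeeping of the higher derivatives.

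First I would record the pointwise relation between the score functions and the derivatives of the density. Writing $p(\bx) = (2\pi)^{-d/2} e^{-\normsqr{\bx}/2}$ for the standard Gaussian density, the claim is
\[
\nabla_{\bx}^{(m)} p(\bx) = (-1)^m \, \calS_m(\bx) \, p(\bx), \qquad m=2,3.
\]
For $m=2$ this is a direct computation: $\partial_i p = -x_i p$ and $\partial_j \partial_i p = (x_i x_j - \delta_{ij})\, p$, so $\nabla_{\bx}^{(2)} p / p = \bx \otimes \bx - I_d = \calS_2(\bx)$. For $m=3$, differentiating once more yields $\partial_k \partial_j \partial_i p = (x_i x_j x_k - x_i \delta_{jk} - x_j \delta_{ik} - x_k \delta_{ij})\, p$, and the three correction terms are exactly (the symmetrization of) the components of $\sum_{l \in [d]} \bx \otimes \be_l \otimes \be_l$, so that $-\nabla_{\bx}^{(3)} p / p = \bx^{\otimes 3} - \sum_{l \in [d]} \sym(\bx \otimes \be_l \otimes \be_l) = \calS_3(\bx)$, matching the definition in \prettyref{eq:thirdscore}.

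Next I would rewrite the target moment as a density integral and integrate by parts. Using the relation above,
\[
\Expect[g(\bx) \cdot \calS_m(\bx)] = \int_{\reals^d} g(\bx)\, \calS_m(\bx)\, p(\bx)\, d\bx = (-1)^m \int_{\reals^d} g(\bx)\, \nabla_{\bx}^{(m)} p(\bx)\, d\bx.
\]
Working componentwise, for each fixed multi-index $(i_1,\dots,i_m)$ I would integrate by parts $m$ times, transferring one partial derivative at a time from $p$ onto $g$. Each step contributes a sign $(-1)$, so after $m$ steps the accumulated sign $(-1)^m$ cancels the prefactor, leaving $\int_{\reals^d} \partial_{i_1}\cdots\partial_{i_m} g(\bx)\, p(\bx)\, d\bx$. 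Reassembling the components identifies this with $\Expect[\nabla_{\bx}^{(m)} g(\bx)]$, which is the desired conclusion for both $m=2$ and $m=3$.

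The main obstacle is justifying the vanishing of the boundary terms, which is exactly what the unstated ``regularity conditions'' encode. Each integration by parts produces a boundary contribution of the form (a partial of $g$) times (a partial of $p$) on a large sphere, and these vanish only if $g$ and its partials up to order $m-1$ grow strictly slower than the Gaussian decay $p$ kills, so that the products tend to zero at infinity. I would therefore state an explicit sufficient condition: $g \in C^m(\reals^d)$ with $g$ and all its partials up to order $m$ dominated by $e^{c\normsqr{\bx}}$ for some $c < 1/2$ (polynomial growth being the typical case), which simultaneously guarantees that every boundary term vanishes and that $\nabla_{\bx}^{(m)} g \cdot p$ is absolutely integrable, so the final integral legitimately equals $\Expect[\nabla_{\bx}^{(m)} g(\bx)]$. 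I would also invoke Fubini and dominated convergence to justify handling the $d$-dimensional integration one coordinate at a time. Since the functions to which we apply the lemma in \prettyref{thm:twoexpertspopulation} (products of the affine maps $\inner{\ba_i^\ast}{\bx}$ with bounded or polynomially bounded factors) obey these growth bounds, the conditions hold in all our applications.
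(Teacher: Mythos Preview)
Your argument is correct: recognizing $\calS_m(\bx)=(-1)^m\nabla_{\bx}^{(m)}p(\bx)/p(\bx)$ for the standard Gaussian density and then integrating by parts $m$ times is exactly the standard route to this identity, and your handling of the boundary terms via growth conditions is appropriate. Note, however, that the paper does not give its own proof of this lemma at all --- it is quoted from \cite{SedghiA14a} as a known tool --- so there is no in-paper argument to compare against; your write-up simply supplies the details that the paper omits by citation.
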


%

\section{Toolbox for EM convergence analysis}
\label{app:reviewem}

Recall that the domain of our gating parameters is $\Omega= \{\bw:\norm{\bw} \leq 1\}$. Then the population EM for the mixture of experts consists of the following two steps:
\begin{itemize}
\item \textbf{E-step:} Using the current estimate $\bw_t$
 to compute the function $Q(\cdot|\bw_t)$.
\item \textbf{M-step:} $\bw_{t+1}= \mathrm{argmax}_{\norm{\bw}\leq 1}Q(\bw|\bw_t)$.
\end{itemize}
Thus the EM can be viewed as a deterministic procedure which maps $\bw_t \mapsto M(\bw_t)$ where
\begin{align*}
M(\bw)=\mathrm{argmax}_{ \bw' \in \Omega} Q(\bw'|\bw).
\end{align*}

Our convergence analysis relies on tools from \cite{bala17} where they provided local convergence results on both the EM and gradient EM algorithms.
In particular, they showed that if we initialize EM in a sufficiently small neighborhood around the true parameters, the EM iterates converge geometrically to the true parameters under some strong-concavity and gradient stability conditions.
We now formally state the assumptions in \cite{bala17} under which the convergence guarantees hold. We will show in the next section that these conditions hold \emph{globally} in our setting. 
\begin{assumption}[Convexity of the domain]
\label{assump:A1}
\normalfont $\Omega$ is convex.
\end{assumption}
\begin{assumption}[Strong-concavity]
\label{assump:A2}
\normalfont $Q(\cdot|\bw^\ast)$ is a $\lambda$-strongly concave function over a $r$-neighborhood of $\bw^\ast$, \ie $\calB(\bw^\ast,r)\triangleq \{\bw \in \Omega: \norm{\bw-\bw^\ast} \leq r\}$.
\end{assumption}
\begin{remark}\normalfont
An important point to note is that the true parameter $\bw^\ast$ is a fixed point for the EM algorithm, \ie $M(\bw^\ast)=\bw^\ast$. This is also known as \emph{self-consistency} of the EM algorithm. Hence it is reasonable to expect that in a sufficiently small neighborhood around $\bw^\ast$ there exists a unique maximizer for $Q(\cdot|\bw^\ast)$.
\end{remark}
\begin{assumption}[First-order stability condition]
\label{assump:A3}
\normalfont Assume that 
\begin{align*}
\norm{\nabla Q(M(\bw)|\bw^\ast) - \nabla Q(M(\bw)|\bw)} \leq \gamma \norm{\bw -\bw^\ast}, \quad \forall \bw \in \calB(\bw^\ast,r).
\end{align*}
\end{assumption}
\begin{remark}\normalfont
Intuitively, the gradient stability condition enforces the gradient maps $\nabla Q(\cdot|\bw)$ and $\nabla Q(\cdot|\bw^\ast)$ to be close whenever $\bw$ lies in a neighborhood of $\bw^\ast$. This will ensure that the mapped output $M(\bw)$ stays closer to $\bw^\ast$. 
\end{remark}
\begin{theorem}[Theorem 1, \cite{bala17}]
If the above assumptions are met for some radius $r>0$ and $0 \leq \gamma < \lambda$, then the map $\bw \mapsto M(\bw)$ is contractive over $\calB(\bw^\ast,r)$, \ie
\begin{align*}
\norm{M(\bw)-\bw^\ast} \leq \pth{\frac{\gamma}{\lambda}} \norm{\bw-\bw^\ast}, \quad \forall \bw \in \calB(\bw^\ast,r),
\end{align*}
and consequently, the EM iterates $\{\bw_t\}_{t \geq 0}$ converge geometrically to $\bw^\ast$, \ie
\begin{align*}
\norm{\bw_t-\bw^\ast} \leq \pth{\frac{\gamma}{\lambda}}^t \norm{\bw_0-\bw^\ast},
\end{align*}
whenever the initialization $\bw_0 \in \calB(\bw^\ast,r)$.
\end{theorem}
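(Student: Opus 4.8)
The plan is to reduce the geometric-convergence claim to a single \emph{one-step contraction} estimate, $\norm{M(\bw)-\bw^\ast} \le (\gamma/\lambda)\,\norm{\bw-\bw^\ast}$ for every $\bw \in \calB(\bw^\ast,r)$, after which the iterate bound follows by a trivial induction. The three ingredients I would combine are the two first-order (variational) optimality conditions attached to the constrained maximizations defining $\bw^\ast$ and $M(\bw)$, and the $\lambda$-strong concavity of $\phi \define Q(\cdot|\bw^\ast)$. Since $\Omega$ is convex (\prettyref{assump:A1}) and $M(\bw)$ maximizes $Q(\cdot|\bw)$ over $\Omega$, the correct optimality statement is the one-sided inequality $\inner{\nabla Q(M(\bw)|\bw)}{\bw'-M(\bw)} \le 0$ for all $\bw'\in\Omega$, and likewise self-consistency $M(\bw^\ast)=\bw^\ast$ gives $\inner{\nabla Q(\bw^\ast|\bw^\ast)}{\bw'-\bw^\ast}\le 0$; using these variational forms (rather than $\nabla Q=0$) is essential because the maximizers may sit on the boundary of $\Omega$.

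First I would exploit strong concavity of $\phi$ between the points $\bw^\ast$ and $M(\bw)$: \prettyref{assump:A2} yields $\inner{\nabla\phi(M(\bw))-\nabla\phi(\bw^\ast)}{M(\bw)-\bw^\ast}\le -\lambda\norm{M(\bw)-\bw^\ast}^2$, which I rearrange into
\[
\lambda\,\norm{M(\bw)-\bw^\ast}^2 \;\le\; \inner{\nabla\phi(\bw^\ast)-\nabla\phi(M(\bw))}{M(\bw)-\bw^\ast}.
\]
The term carrying $\nabla\phi(\bw^\ast)=\nabla Q(\bw^\ast|\bw^\ast)$ is nonpositive by the self-consistency optimality condition (applied with $\bw'=M(\bw)\in\Omega$), so it can be dropped, leaving $\lambda\norm{M(\bw)-\bw^\ast}^2 \le \inner{\nabla Q(M(\bw)|\bw^\ast)}{\bw^\ast-M(\bw)}$.

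Next I would inject the stability condition by splitting $\nabla Q(M(\bw)|\bw^\ast) = \big(\nabla Q(M(\bw)|\bw^\ast)-\nabla Q(M(\bw)|\bw)\big) + \nabla Q(M(\bw)|\bw)$. Pairing the second summand with $\bw^\ast-M(\bw)$ is nonpositive by the optimality of $M(\bw)$ for $Q(\cdot|\bw)$ (taking $\bw'=\bw^\ast$), and the first summand is controlled by Cauchy--Schwarz together with \prettyref{assump:A3}, giving a bound of $\gamma\,\norm{\bw-\bw^\ast}\,\norm{M(\bw)-\bw^\ast}$. Combining, $\lambda\norm{M(\bw)-\bw^\ast}^2 \le \gamma\norm{\bw-\bw^\ast}\norm{M(\bw)-\bw^\ast}$, and cancelling one factor of $\norm{M(\bw)-\bw^\ast}$ (the case $M(\bw)=\bw^\ast$ being trivial) produces the claimed contraction with ratio $\gamma/\lambda<1$. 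Geometric convergence then follows by iterating: because $\gamma/\lambda<1$, each step strictly shrinks the distance to $\bw^\ast$, so $\calB(\bw^\ast,r)$ is forward-invariant and the contraction applies at every step, yielding $\norm{\bw_t-\bw^\ast}\le (\gamma/\lambda)^t\norm{\bw_0-\bw^\ast}$ by induction on $t$.

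The step I expect to be the main obstacle is the legitimacy of applying the strong-concavity inequality \emph{between} $\bw^\ast$ and $M(\bw)$, since \prettyref{assump:A2} only guarantees $\lambda$-strong concavity of $Q(\cdot|\bw^\ast)$ on the ball $\calB(\bw^\ast,r)$; one must certify that $M(\bw)$ itself lands in that ball before the contraction has been proved. This apparent circularity is exactly what the forward-invariance argument resolves --- one establishes the one-step bound under the provisional membership $M(\bw)\in\calB(\bw^\ast,r)$ and then checks that the resulting shrinkage is self-reinforcing --- but it requires care to organize the induction so that the neighbourhood hypothesis needed at each iterate is verified rather than assumed a priori.
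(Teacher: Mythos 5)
Your one-step contraction argument is precisely the proof of Theorem 1 in \cite{bala17}; the paper itself never reproves this statement (it is quoted as an imported tool in \prettyref{app:reviewem}), so the right comparison is with that source, and your chain --- the gradient-monotonicity form of strong concavity between $\bw^\ast$ and $M(\bw)$, dropping the $\nabla Q(\bw^\ast|\bw^\ast)$ term via the variational optimality of the self-consistent point, splitting $\nabla Q(M(\bw)|\bw^\ast)$ into a stability term plus $\nabla Q(M(\bw)|\bw)$, killing the latter by optimality of $M(\bw)$ with $\bw'=\bw^\ast$, and finishing with Cauchy--Schwarz and \prettyref{assump:A3} --- is exactly that argument, including the correct use of one-sided (boundary-aware) optimality conditions rather than $\nabla Q=0$. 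Given the one-step bound, the induction to geometric convergence is immediate since $\gamma/\lambda<1$ makes $\calB(\bw^\ast,r)$ forward invariant.

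The genuine weak point is the one you flagged yourself, and your proposed repair does not work as stated. Forward invariance is a \emph{consequence} of the one-step contraction, so it cannot certify the hypothesis $M(\bw)\in\calB(\bw^\ast,r)$ that your application of \prettyref{assump:A2} between $\bw^\ast$ and $M(\bw)$ requires: ``establish the bound under provisional membership and check that it is self-reinforcing'' is circular, because if $M(\bw)$ fell outside the ball the provisional computation would be vacuous rather than self-correcting. There are two non-circular ways to close this. (a) In this paper's setting the issue is moot: the assumptions are verified \emph{globally} over $\Omega$ (see \prettyref{app:proofcontraction}, where $-\nabla^{2}Q(\bw|\bw^\ast)\succcurlyeq \lambda I$ is shown for all $\bw\in\Omega$), and $M(\bw)\in\Omega$ by construction, so the strong-concavity region can be taken to be all of the convex set $\Omega$. (b) If strong concavity really is only local, use that $Q(\cdot|\bw^\ast)$ is plainly concave on all of $\Omega$ and argue by contradiction: set $D=\norm{M(\bw)-\bw^\ast}$ and suppose $D>r$; the function $t\mapsto \inner{\nabla Q(\bw^\ast+t(M(\bw)-\bw^\ast)|\bw^\ast)}{M(\bw)-\bw^\ast}$ is non-increasing on $[0,1]$ by concavity, and applying strong concavity only on the sub-interval $[0,r/D]$ (where the segment stays inside the ball) gives $\inner{\nabla Q(M(\bw)|\bw^\ast)-\nabla Q(\bw^\ast|\bw^\ast)}{M(\bw)-\bw^\ast}\leq -\lambda r D$; feeding this into your same optimality-plus-stability chain yields $\lambda r D\leq \gamma\norm{\bw-\bw^\ast}D\leq \gamma r D$, i.e.\ $\lambda\leq\gamma$, contradicting $\gamma<\lambda$. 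Hence $M(\bw)$ must lie in the ball, and your main computation then applies verbatim.
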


\section{Class of non-linearities}
\label{app:nonlinear}
In this section, we characterize the class of non-linearities for which our theoretical results for the recovery of regressors hold. Let $Z \sim \calN(0,1)$ and $Y|Z \sim \calN(g(Z),\sigma^2)$, where $g: \reals \to \reals$. For $(\alpha, \beta, \gamma ) \in \reals^3$, define
\begin{align*}
\calP_3(y) &\triangleq Y^3+\alpha Y^2+\beta Y, \quad  \calS_3(Z)= \Expect[ \calP_3(y)|Z]=g(Z)^3+\alpha g(Z)^2+ g(Z)(\beta+3 \sigma^2)+\alpha \sigma^2,
\end{align*}
and
\begin{align*}
\calS_2(Y) &\triangleq Y^2+\gamma Y, \quad  \calS_2(Z)= \Expect[ \calS_2(Y)|Z]=g(Z)^2+ \gamma g(Z)+\sigma^2.
\end{align*}
\begin{condition}\normalfont
\label{cond:cond1}$\Expect[\calS_3'(Z)]=\Expect[\calS_3''(Z)]=0$ and $\Expect[\calS_3'''(Z)] \neq 0$.
\end{condition}
\begin{condition}\normalfont
\label{cond:cond2}$\Expect[\calS_2'(Z)]=0$ and $\Expect[\calS_2''(Z)] \neq 0$.
\end{condition}

We are now ready to define the $(\alpha,\beta,\gamma)$-valid class of non-linearities. 

\begin{definition}
\label{def:valid}
We say that the non-linearity $g$ is $(\alpha,\beta,\gamma)$-valid if there exists $(\alpha,\beta,\gamma) \in \reals^3$ such that both  \prettyref{cond:cond1} and \prettyref{cond:cond2} are satisfied.
\end{definition}
We have that
\begin{align*}
\calS_3'(Z) &= 3g(Z)^2 g'(Z) + 2 \alpha g(Z) g'(Z)+g'(Z)(\beta+3 \sigma^2)\\
&=2 \alpha g(Z) g'(Z) + \beta g'(Z) + 3g(Z)^2 g'(Z)+3g'(Z)\sigma^2,\\
\calS_3''(Z) &=2 \alpha \pth{g'(Z)^2+g(Z)g''(Z)} + \beta g''(Z)+3g''(Z)(g(Z)^2+\sigma^2)+6 g(Z) g'(Z)^2.
\end{align*}
Thus $\Expect[\calS_3'(Z)]=\Expect[\calS_3''(Z)]=0$ implies that
\begin{align*}
\begin{bmatrix}
2  \Expect(g(Z) g'(Z)) & \Expect( g'(Z))  \\
2 \Expect \pth{g'(Z)^2+g(Z)g''(Z)} & \Expect( g''(Z))
\end{bmatrix} \begin{bmatrix}
\alpha \\ \beta
\end{bmatrix} = \begin{bmatrix}
  -3\Expect(g(Z)^2 g'(Z)+g'(Z)\sigma^2 ) \\
  -3\Expect (g''(Z)(g(Z)^2+\sigma^2)+2 g(Z) g'(Z)^2)
\end{bmatrix}
\end{align*}
To ensure \prettyref{cond:cond1}, we need the pair $(\alpha,\beta)$ obtained by solving the above linear equation to satisfy $\Expect[\calS_3'''(Z)] \neq 0$. Similarly, $\Expect[\calS_2'(Z)]=0$ implies that
\begin{align*}
\gamma = \frac{-2 \Expect[g(Z)g'(Z)]}{\Expect[g'(Z)]}.
\end{align*}
Thus \prettyref{cond:cond2} stipulates that $\Expect[\calS_2''(Z)] \neq 0$ with this choice of $\gamma$. It turns out that these conditions hold for a wide class of non-linearities and in particular, when $g$ is either the identity function, or the sigmoid function, or the ReLU. For these three choices of popular non-linearities, the values of the tuple $(\alpha,\beta,\gamma)$ are  provided below (which are obtained by solving the linear equations mentioned above).
\begin{example}\normalfont
If $g$ is the identity mapping, then $\calP_3(y)=y^3-3y(1+\sigma^2)$ and $\calS_2(y)=y^2$.
\end{example}
\begin{example}\normalfont
If $g$ is the sigmoid function, \ie $g(z)=\frac{1}{1+e^{-z}}$, then $\alpha$ and $\beta$ can be obtained by solving the following linear equation:
\begin{align*}
\begin{bmatrix}
0.2066 & 0.2066 \\
0.0624 & -0.0001
\end{bmatrix} 
\begin{bmatrix}
\alpha \\ \beta
\end{bmatrix}= \begin{bmatrix}
-0.1755-0.6199 \sigma^2 \\ -0.0936
\end{bmatrix}
\end{align*}
The second-order transformation is given by $\calS_2(y)=y^2-y$ (since $\gamma=-1$ when $g$ is sigmoid).
\end{example}
\begin{example}\normalfont
If $g$ is the ReLU function, \ie $g(z)=\mathrm{max}\{0,z\}$, then $\alpha=-3\sqrt{\frac{2}{\pi}}, \beta=3\pth{\frac{4}{\pi}-\sigma^2-1}$ and $\gamma=-2 \sqrt{\frac{2}{\pi}}$.
\end{example}

\section{Proofs of \prettyref{sec:guaranteestwomoe}}
\label{app:twoproofs}

In this section, for the simplicity of the notation we denote the true parameters as $\bw_i$'s and $\ba_i$'s dropping the $\ast$ sign.


\subsection{Proof of \prettyref{thm:twoexpertspopulation} for $k=2$}
\label{app:regressorlinear}
\begin{proof}
Suppose that $g$ is the linear activation function. For $k=2$, \prettyref{eq:kmoe} implies that
\begin{align}
P_{y|\bx}= f(\bw^\top \bx) \cdot \calN(y|\ba_1^\top \bx, \sigma^2)+(1-f(\bw^\top \bx))\cdot\calN(y|\ba_2^\top \bx,\sigma^2), \quad \bx \sim \calN(0,I_d), \label{eq:twolinearmodel}
\end{align}   
where $f(\cdot)$ is the sigmoid function. Using the fact $\Expect[Z^3]=\mu^3+3\mu\sigma^2$ for any Gaussian random variable $Z \sim \calN(\mu,\sigma^2)$, we get
\begin{align*}
\Expect[y^3|\bx]=f(\bw^\top \bx) ((\ba_1^\top \bx)^3+3(\ba_1^\top \bx)\sigma^2)+(1-f(\bw^\top \bx))((\ba_1^\top \bx)^3+3(\ba_1^\top \bx)\sigma^2).
\end{align*}    
Moreover,
\begin{align*}
\Expect[y|\bx]=f(\bw^\top \bx)(\ba_1^\top \bx)+(1-f(\bw^\top \bx))(\ba_2^\top \bx).
\end{align*}    
Thus,
\begin{align*}
\Expect[y^3-3y(1+\sigma^2)|\bx]=f(\bw^\top \bx) ((\ba_1^\top \bx)^3-3(\ba_1^\top \bx))+(1-f(\bw^\top \bx))((\ba_1^\top \bx)^3-3(\ba_1^\top \bx)).
\end{align*}    
If we define $\calP_3(y)\triangleq y^3-3y(1+\sigma^2)$, in view of \prettyref{lmm:highstein} we get that
\begin{align}
\calT_3 = \Expect[\calP_3(y) \cdot \calS_3(\bx)] &= \Expect[(y^3-3y(1+\sigma^2)) \cdot \calS_3(\bx)] \nonumber \\
&= \Expect \qth{\pth{f(\bw^\top \bx) ((\ba_1^\top \bx)^3-3(\ba_1^\top \bx))}\cdot \calS_3(\bx)}+\Expect \qth{\pth{1-f(\bw^\top \bx) ((\ba_2^\top \bx)^3-3(\ba_2^\top \bx))}\cdot \calS_3(\bx)}\nonumber\\
&=\Expect\qth{\nabla_{\bx}^{(3)} \pth{{f(\bw^\top \bx) ((\ba_1^\top \bx)^3-3(\ba_1^\top \bx))}}}+\Expect\qth{\nabla_{\bx}^{(3)} \pth{{1-f(\bw^\top \bx) ((\ba_2^\top \bx)^3-3(\ba_2^\top \bx))}}}. \label{eq:twotermsfirst}
\end{align}
Using the chain rule for multi-derivatives, the first term simplifies to
\begin{align}
\Expect\qth{\nabla_{\bx}^{(3)} \pth{{f(\bw^\top \bx) ((\ba_1^\top \bx)^3-3(\ba_1^\top \bx))}}}=\Expect [f'''  ((\ba_1^\top \bx)^3-3(\ba_1^\top \bx))] \cdot \bw \otimes \bw \otimes \bw  +\Expect[f''(3(\ba_1^\top \bx)^2-3) ]\cdot \nonumber \\
(\bw \otimes \bw \otimes \ba_1+\bw \otimes \ba_1 \otimes \bw +\ba_1 \otimes \bw \otimes \bw) + \nonumber \\
\Expect[f' (6(\ba_1^\top \bx))] \cdot \pth{\ba_1 \otimes \ba_1 \otimes \bw+ \ba_1 \otimes \bw \otimes \ba_1 + \bw \otimes \ba_1 \otimes \ba_1}+ 6 \Expect[f ] \cdot \ba_1 \otimes \ba_1 \otimes \ba_1. \label{eq:oddevenfirst}
\end{align}
Since $f(z)=\frac{1}{1+e^{-z}}$, $f'(\cdot),f'''(\cdot)$ are even functions whereas $f''(\cdot)$ is an odd function. Furthermore, both $\bx \mapsto (\ba_1^\top \bx)^3-3(\ba_1^\top \bx)$ and $\bx \mapsto \ba_1^\top \bx$ are odd functions whereas $ \bx \mapsto 3(\ba_1^\top \bx)^2-3 $ is an even function. Since $\bx \sim \calN(0,I_d)$, $-\bx \stackrel{(d)}=  \bx$. Thus all the expectation terms in \prettyref{eq:oddevenfirst} equal zero except for the last term since $\Expect[f(\bw^\top \bx)]=\frac{1}{2} >0$. We have,
\begin{align*}
\Expect\qth{\nabla_{\bx}^{(3)} \pth{{f(\bw^\top \bx) ((\ba_1^\top \bx)^3-3(\ba_1^\top \bx))}}}= 3 \cdot \ba_1 \otimes \ba_1 \otimes \ba_1.
\end{align*}
Similarly,
\begin{align*}
\Expect\qth{\nabla_{\bx}^{(3)} \pth{{1-f(\bw^\top \bx) ((\ba_2^\top \bx)^3-3(\ba_2^\top \bx))}}}=3 \cdot \ba_2 \otimes \ba_2 \otimes \ba_2.
\end{align*}
Together, we have that
\begin{align*}
\calT_3 =3 \cdot \ba_1 \otimes \ba_1 \otimes \ba_1+3 \cdot \ba_2 \otimes \ba_2 \otimes \ba_2.
\end{align*}

Now consider an arbitrary link function $g$ belonging to the class of non-linearities described in \prettyref{app:nonlinear}. Then 
\begin{align*}
P_{y|\bx}= f(\bw^\top \bx) \cdot \calN(y|g(\ba_1^\top \bx), \sigma^2)+(1-f(\bw^\top \bx))\cdot\calN(y|g(\ba_2^\top \bx),\sigma^2), \quad \bx \sim \calN(0,I_d),
\end{align*}
implies that
\begin{align*}
\Expect[y^3|\bx]=f(\bw^\top \bx) (g(\ba_1^\top \bx)^3+3g(\ba_1^\top \bx)\sigma^2)+(1-f(\bw^\top \bx))(g(\ba_2^\top \bx)^3+3g(\ba_2^\top \bx)\sigma^2),
\end{align*} 
and
\begin{align*}
\Expect[y^2|\bx]&=f(\bw^\top \bx) (g(\ba_1^\top \bx)^2+\sigma^2)+(1-f(\bw^\top \bx))(g(\ba_2^\top \bx)^2+\sigma^2),\\
\Expect[y|\bx]&=f(\bw^\top \bx) g(\ba_1^\top \bx)+(1-f(\bw^\top \bx))g(\ba_2^\top \bx).
\end{align*}
If we define $\calP_3(y) \triangleq y^3+\alpha y^2+\beta y$, we have that
\begin{align*}
\calT_3 = \Expect[\calP_3(y)\cdot \calS_3(\bx)] &= \Expect[ \Expect[y^3+\alpha y^2+\beta y|\bx] \cdot \calS_3(\bx)]\\
&=\Expect \qth{ f(\bw^\top \bx) \pth{g(\ba_1^\top \bx)^3+\alpha  g(\ba_1^\top \bx)^2+g(\ba_1^\top \bx)(\beta+3 \sigma^2)} \cdot \calS_3(\bx)}+\\
&\Expect \qth{(1- f(\bw^\top \bx)) \pth{g(\ba_2^\top \bx)^3+\alpha  g(\ba_2^\top \bx)^2+g(\ba_2^\top \bx)(\beta+3 \sigma^2)} \cdot \calS_3(\bx)}\\
&= \Expect \qth{\nabla_{\bx}^{(3)} \pth{f(\bw^\top \bx) \pth{g(\ba_1^\top \bx)^3+\alpha  g(\ba_1^\top \bx)^2+g(\ba_1^\top \bx)(\beta+3 \sigma^2)}}}+\\
&\Expect \qth{\nabla_{\bx}^{(3)} \pth{f(\bw^\top \bx) \pth{g(\ba_2^\top \bx)^3+\alpha  g(\ba_2^\top \bx)^2+g(\ba_2^\top \bx)(\beta+3 \sigma^2)}}}\\
&\stackrel{(a)}=\Expect[f]\Expect \qth{\nabla_{\bx}^{(3)} \pth{g(\ba_1^\top \bx)^3+\alpha  g(\ba_1^\top \bx)^2+g(\ba_1^\top \bx)(\beta+3 \sigma^2)} }\cdot \ba_1 \otimes \ba_1 \otimes \ba_1 +\\
&\Expect[1-f]\Expect \qth{\nabla_{\bx}^{(3)} \pth{g(\ba_2^\top \bx)^3+\alpha  g(\ba_2^\top \bx)^2+g(\ba_2^\top \bx)(\beta+3 \sigma^2)} }\cdot \ba_2 \otimes \ba_2 \otimes \ba_2 \\
&=c_{g,\sigma}\pth{ \Expect[f]\cdot \ba_1 \otimes \ba_1 \otimes \ba_1 + \Expect[1-f] \cdot \ba_2 \otimes \ba_2 \otimes \ba_2},
\end{align*}
where $(a)$ follows from the choice of $\alpha$ and $\beta$ and the fact that $\bw \perp \{\ba_1,\ba_2\}$, and $c_{g,\sigma}\define \Expect \qth{ \pth{g(Z)^3+\alpha  g(Z)^2+g(Z)(\beta+3 \sigma^2)}''' }$ where $Z \sim \calN(0,1)$ .
The proof for $\calT_2$ is similar.

\end{proof}

\subsection{Proof of \prettyref{thm:twoexpertspopulation} for general $k$}
\label{app:generalregressorproof} 
\begin{proof}
The proof for general $k$ closely follows that of $k=2$, described in \prettyref{app:regressorlinear}. For the general $k$, we first prove the theorem when $g$ is the identity function, \ie
\begin{align*}
P_{y|\bx} &= \sum_{i \in [k]} P_{i|\bx} P_{y|\bx,i} =
 \sum_{i \in [k]} \frac{e^{\bw_i^\top \bx}}{\sum_{i \in [k]}e^{\bw_i^\top \bx}} \cdot \calN(y|\ba_i^\top \bx,\sigma^2),\quad  \bx \sim \calN(0,I_d).
\end{align*}
Denoting $P_{i|\bx}$ by $p_i(\bx)$, we have that
\begin{align*}
\Expect[y^3|\bx] &= \sum_{i \in [k]} p_i(\bx) \pth{(\ba_i^\top \bx)^3+3(\ba_i^\top \bx) \sigma^2},\\
\Expect[y|\bx]&= \sum_{i \in [k]} p_i(\bx) (\ba_i^\top \bx).
\end{align*}
Hence
\begin{align*}
\Expect[y^3-3y(1+\sigma^2)|\bx] &= \sum_{i \in [k]} p_i(\bx) \pth{(\ba_i^\top \bx)^3-3(\ba_i^\top \bx)}
\end{align*}
If we let $\calP_3(y) \triangleq y^3-3y(1+\sigma^2)$, we get
\begin{align*}
\Expect[ \calP_3(y) \cdot \calS_3(\bx)]= \sum_{i \in [k]} \Expect \qth{\nabla_{\bx}^{(3)}\pth{p_i(\bx) \pth{(\ba_i^\top \bx)^3-3(\ba_i^\top \bx)} }}
\end{align*}
Since $\bx \sim \calN(0,I_d)$ and $\ba_i \perp \mathrm{span}\{\bw_1,\ldots,\bw_{k-1}\}$, we have that $\ba_i^\top \bx \perp (\bw_1^\top \bx,\ldots, \bw_{k-1}^\top \bx)$. Moreover,  $\Expect[(\ba_i^\top \bx)^3-3(\ba_i^\top \bx)]=\Expect[(\ba_i^\top \bx)^2-1]=\Expect[\ba_i^\top \bx]=0$ for each $i$. Using the chain-rule for multi-derivatives, the above equation thus simplifies to
\begin{align*}
\Expect[ \calP_3(y) \cdot \calS_3(\bx)]= \sum_{i \in [k]} \Expect[p_i(\bx)]\cdot  \Expect \qth{ \nabla_{\bx}^{(3)}  \pth{(\ba_i^\top \bx)^3-3(\ba_i^\top \bx)} } =  \sum_{i \in [k]} 6 \Expect[p_i(\bx)] \cdot \ba_i \otimes \ba_i \times \ba_i. 
\end{align*}
For a generic $g: \reals \to \reals$ which is $(\alpha,\beta,\gamma)-$valid, let $\calP_3(y)=y^3+\alpha y^2 + \beta y$. Then it is easy to see that the same proof goes through except for a change in the coefficients of rank-$1$ terms, \ie
\begin{align*}
\Expect[\calP_3(y) \cdot \calS_3(\bx)]= \sum_{i \in [k]} \alpha_i \Expect[p_i(\bx)] \cdot \ba_i \otimes  \ba_i \otimes \ba_i,
\end{align*}
where $\alpha_i \triangleq  \Expect \qth{ \pth{g(Z)^3+\alpha  g(Z)^2+g(Z)(\beta+3 \sigma^2)}''' }$ where $Z \sim \calN(0,1)$ and $'''$ denotes the third-derivative with respect to $Z$. Note that \prettyref{cond:cond2} together with the fact that $\Expect[p_i(\bx)]>0$ ensures that $\alpha_i \neq 0$ and thus the coefficients of the rank-$1$ terms are non-zero. The proof for $\calT_2$ is similar.
\end{proof}

\subsection{Proof of \prettyref{thm:twoexpertsEM}}
\label{app:twoexpertsEM}

The following two lemmas are central to the proof of \prettyref{thm:twoexpertsEM}. Let $\bA^\top=[\ba_1|\ldots|\ba_k] \in \reals^{d\times k}$ denote the matrix of regressor parameters whereas $\bW^\top=[\bw_1|\ldots|\bw_{k-1}] \in \reals^{d\times (k-1)}$ denote the matrix of gating parameters. With a slight change of notation, when $\bA=\bA^\ast$, we denote the EM operator $M(\bW)$ as either $M(\bW,\bA^\ast)$ or $M(\bw)$, introduced in \prettyref{sec:guaranteestwomoe}. For the general case, we simply denote it by $M(\bW,\bA)$.  In the following lemmas, we use the norm $\| \bA \| =\max_{i \in [k]}\| \bA_i^\top \|_2$ where $\bA \in \reals^{k \times d}$ is a matrix of regressors, similarly for any matrix of classifiers $\bW \in \reals^{(k-1)\times d}$.

\begin{lemma}[Contraction of the EM operator]
\label{lmm:contraction}
Under the assumptions of \prettyref{thm:twoexpertsEM}, we have that
\begin{align*}
\|M(\bW,\bA^\ast)-\bW^\ast \| \leq \kappa_\sigma \|\bW-\bW^\ast\|.
\end{align*}
Moreover, $\bW=\bW^\ast$ is a fixed point for $M(\bW,\bA^\ast)$.
\end{lemma}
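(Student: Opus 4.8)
The plan is to instantiate the abstract EM convergence framework of \cite{bala17} (Theorem~1 in \prettyref{app:reviewem}) for the map $\bW\mapsto M(\bW,\bA^\ast)$, reading off $\kappa_\sigma=\gamma/\lambda$ from the strong-concavity constant $\lambda$ and the first-order stability constant $\gamma$. The essential point is that I must verify Assumptions~\prettyref{assump:A1}--\prettyref{assump:A3} \emph{globally} over $\Omega$, not merely in a small neighborhood of $\bW^\ast$, since the conclusion asserts contraction at every $\bW$. The fixed-point claim $M(\bW^\ast,\bA^\ast)=\bW^\ast$ is the easy half: with exact regressors the posteriors in \prettyref{eq:computep} are the true responsibilities, so $\bW^\ast$ maximizes $Q(\cdot|\bW^\ast)$, i.e.\ it is a stationary point of the M-step by the self-consistency property of EM \citep{mclachlan2007algorithm}.

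First I would dispatch \prettyref{assump:A1} and \prettyref{assump:A2}. Convexity of $\Omega$ is immediate since it is a product of Euclidean balls. For strong concavity I would compute $\nabla^2_{\bw}Q(\bw|\bW^\ast)$ from \prettyref{eq:Qdef}: the term $\sum_i p_{\bW^\ast}^{(i)}(\bw_i^\top\bx)$ is \emph{linear} in $\bw$ and so vanishes under the Hessian, leaving $\nabla^2_{\bw}Q(\bw|\bW^\ast)=-\Expect[H(\bw,\bx)]$, where $H(\bw,\bx)$ is the multinomial Fisher information of the softmax (blocks $(p_i\delta_{ij}-p_ip_j)\,\bx\bx^\top$), which is positive semidefinite. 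A convenient consequence is that $\lambda$ is \emph{independent of $\sigma$}. The remaining task is a uniform lower bound $\Expect[H(\bw,\bx)]\succeq\lambda I$ over all $\bw\in\Omega$: since $\norm{\bw_i}_2\le 1$, on a fixed set of $\bx$ of positive Gaussian measure the softmax weights stay bounded away from $0$ and $1$, making the Fisher blocks uniformly nondegenerate, and combined with $\Expect[\bx\bx^\top]=I$ this yields a dimension-independent $\lambda>0$.

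The heart of the proof is \prettyref{assump:A3}, and this is where I expect the main obstacle. The key simplification is that $\nabla_{\bw_i'}Q(\bw'|\bw)=\Expect[(p_{\bw}^{(i)}-p_i(\bw',\bx))\bx]$, so the $\bw'$-dependent softmax term cancels in the stability difference and leaves, independent of $M(\bW)$,
\begin{align*}
\nabla Q(M(\bW)|\bW^\ast)-\nabla Q(M(\bW)|\bW)=\pth{\Expect\qth{\pth{p_{\bW^\ast}^{(i)}-p_{\bW}^{(i)}}\bx}}_{i\in[k-1]}.
\end{align*}
Thus \prettyref{assump:A3} reduces to showing that the responsibilities $\bw\mapsto p_{\bw}^{(i)}$ are Lipschitz in $\bw$, in $\bx$-weighted expectation, with a constant $\gamma=\gamma(\sigma)$ that can be driven to $0$ as $\sigma\to0$. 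I would establish this by a mean-value argument in $\bw$, differentiating the posterior in \prettyref{eq:computep}; the resulting derivative is proportional to posterior covariance factors of the form $p_{\bw}^{(i)}(\delta_{il}-p_{\bw}^{(l)})$ times $\bx$. The mechanism making $\gamma$ small is that as $\sigma\to0$ the Gaussian likelihood $\calN(y|g(\ba_i^\top\bx),\sigma^2)$ is sharply peaked, so for samples from the model one expert's likelihood dominates the others by an exponential factor; there $p_{\bw}^{(i)}$ is pinned near $0$ or $1$, its covariance factors collapse, and it becomes essentially insensitive to the softmax prior, hence to $\bw$. Carrying this out cleanly requires splitting the expectation into the dominant region (where the posterior derivative is exponentially small) and the ambiguous region where two likelihoods are comparable (which carries only $O(e^{-c/\sigma^2})$ Gaussian mass), and controlling both contributions uniformly over the unbounded $\bx$ and over all $\bW\in\Omega$; this uniform control is the technical crux.

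Finally, assembling the pieces, \prettyref{assump:A1}--\prettyref{assump:A3} hold on all of $\Omega$ with a $\sigma$-independent $\lambda>0$ and $\gamma=\gamma(\sigma)\to0$. Choosing $\sigma_0$ so that $\gamma(\sigma)<\lambda$ for every $0<\sigma<\sigma_0$, Theorem~1 of \cite{bala17} gives $\norm{M(\bW,\bA^\ast)-\bW^\ast}\le(\gamma/\lambda)\norm{\bW-\bW^\ast}$, and setting $\kappa_\sigma\define\gamma/\lambda$ (which satisfies $\kappa_\sigma\xrightarrow{\sigma\to0}0$) yields the stated contraction, completing the proof.
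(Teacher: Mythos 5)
Your proposal follows essentially the same route as the paper's proof: it verifies \prettyref{assump:A1}--\prettyref{assump:A3} of the \cite{bala17} framework \emph{globally} over $\Omega$, gets the fixed point from EM self-consistency, obtains strong concavity from the multinomial Fisher-information structure of the Hessian (the term linear in $\bw$ drops out, so $\lambda$ is $\sigma$-independent), and reduces first-order stability to Lipschitz control of the posteriors via the cancellation of the softmax term plus a mean-value bound on the $\bw$-derivative of $p_{\bw}^{(i)}$, driven to zero as $\sigma\to 0$ by likelihood domination. This matches the paper's argument step for step, including the identification $\kappa_\sigma=\gamma_\sigma/\lambda$.

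One quantitative claim in your sketch is off, though it does not sink the argument: the ``ambiguous region'' where two likelihoods are comparable, i.e.\ where $|(y-g(\ba_1^\top\bx))^2-(y-g(\ba_2^\top\bx))^2|=O(\sigma^2)$, is essentially the slab $\{|\inner{\ba_1-\ba_2}{\bx}|\lesssim \sigma\}$, whose Gaussian mass is \emph{polynomial} in $\sigma$ (order $\sigma/\norm{\ba_1-\ba_2}$), not $O(e^{-c/\sigma^2})$. Consistently, the paper's own bound yields $\gamma_\sigma=O\pth{(\sigma/\norm{\ba_1-\ba_2})^{1/4}}$, a polynomial rate, obtained not by explicit region-splitting but by bounding the posterior-covariance factor by $\max\pth{\tfrac{f}{1-f},\tfrac{1-f}{f}}e^{-|\Delta_1-\Delta_2|/2\sigma^2}$, applying Cauchy--Schwarz against the uniformly bounded $\Expect[e^{4\bw_u^\top\bx}]$, and computing the remaining Gaussian expectation exactly. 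Since any decay of $\gamma(\sigma)$ to zero suffices to make $\kappa_\sigma<1$ for $\sigma<\sigma_0$, your conclusion stands; only the claimed exponential rate would need to be corrected if you carried the estimate out.
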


\begin{lemma}[Robustness of the EM operator]
\label{lmm:robustness}
Let the matrix of regressors $\bA$ be such that $\max_{i \in [k]}\| \bA_i^\top - (\bA^\ast_i)^\top \|_2 = \sigma^2 \varepsilon_1 $. Then for any $\bW \in \Omega$, we have that
\begin{align*}
\|M(\bW,\bA)-M(\bW,\bA^\ast) \| \leq \kappa \varepsilon_1,
\end{align*}
where $\kappa$ is a constant depending on $g,k$ and $\sigma$. In particular, $\kappa \leq  (k-1)\frac{\sqrt{6(2+\sigma^2)}}{2}$ for $g=$linear, sigmoid and ReLU.
\end{lemma}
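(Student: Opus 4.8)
The plan is to treat $M(\bW,\bA)$ and $M(\bW,\bA^\ast)$ as the maximizers of the two strongly concave functionals $Q(\cdot\,|\,\bW,\bA)$ and $Q(\cdot\,|\,\bW,\bA^\ast)$, which by \prettyref{eq:Qdef} differ \emph{only} through the posterior responsibilities $p^{(i)}$ entering the linear-in-$\bw$ term. A convenient structural fact is that the regressors appear only in $\sum_i p^{(i)}\,(\bw_i^\top\bx)$, which is affine in the optimization variable, so the Hessian $\nabla^2_{\bw}Q(\cdot\,|\,\bW,\bA)=-\Expect[\nabla^2_{\bw}\log(1+\sum_i e^{\bw_i^\top\bx})]$ does not depend on $\bA$ at all; hence both functionals are $\lambda$-strongly concave with the \emph{same} $\lambda$ already established in \prettyref{lmm:contraction}. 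Writing $\bw_{\bA}=M(\bW,\bA)$ and $\bw_{\bA^\ast}=M(\bW,\bA^\ast)$ and using the first-order optimality conditions at both maximizers together with strong concavity, the standard maximizer-stability estimate gives
\begin{align*}
\norm{M(\bW,\bA)-M(\bW,\bA^\ast)}\leq \frac{1}{\lambda}\,\norm{\nabla_{\bw}Q(\bw_{\bA^\ast}\,|\,\bW,\bA)-\nabla_{\bw}Q(\bw_{\bA^\ast}\,|\,\bW,\bA^\ast)},
\end{align*}
reducing the problem to bounding the gradient gap at a single point.

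Since the softmax term of $Q$ is $\bA$-free, differentiating \prettyref{eq:Qdef} coordinatewise yields the clean expression
\begin{align*}
\nabla_{\bw_i}Q(\bw\,|\,\bW,\bA)-\nabla_{\bw_i}Q(\bw\,|\,\bW,\bA^\ast)=\Expect\qth{\pth{p^{(i)}_{\bW,\bA}-p^{(i)}_{\bW,\bA^\ast}}\bx},
\end{align*}
where $p^{(i)}_{\bW,\bA}$ is the posterior of \prettyref{eq:computep} computed with regressors $\bA$, so everything hinges on how a perturbation of the regressors ``echoes'' into the posterior responsibilities. The crux -- and the step I expect to be the main obstacle -- is to bound this gap in a way that is simultaneously free of any $1/\sigma^2$ blow-up and free of dimension dependence. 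For the first, note each posterior depends on $\bA$ only through the means $\mu_j=g(\ba_j^\top\bx)$, with logit sensitivity $\partial\ell_j/\partial\mu_j=(y-\mu_j)/\sigma^2$; combined with $\abs{\mu_j-\mu_j^\ast}\leq L\,\abs{(\ba_j-\ba_j^\ast)^\top\bx}= L\sigma^2\varepsilon_1\,\abs{\hat u_j^\top\bx}$ for the $L$-Lipschitz activations considered and $\hat u_j$ the unit vector along $\ba_j-\ba_j^\ast$, the $\sigma^2$ in the perturbation exactly cancels the $1/\sigma^2$ from the Gaussian exponent, leaving
\begin{align*}
\abs{p^{(i)}_{\bW,\bA}-p^{(i)}_{\bW,\bA^\ast}}\leq L\varepsilon_1\sum_{j}\abs{y-\mu_j}\,\abs{\hat u_j^\top\bx},
\end{align*}
using that the softmax Jacobian entries $p^{(i)}(\delta_{ij}-p^{(j)})$ are bounded by one.

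For the dimension-free bound I would avoid the lossy estimate $\Expect[\abs{\cdot}\,\norm{\bx}]\sim\sqrt d$ and instead test the gradient gap against an arbitrary unit vector $\bv$: by Cauchy--Schwarz,
\begin{align*}
\abs{\Expect\qth{\pth{p^{(i)}_{\bW,\bA}-p^{(i)}_{\bW,\bA^\ast}}\bv^\top\bx}}\leq L\varepsilon_1\sum_{j}\sqrt{\Expect\qth{(y-\mu_j)^2(\hat u_j^\top\bx)^2}}\,\sqrt{\Expect\qth{(\bv^\top\bx)^2}},
\end{align*}
where $\Expect[(\bv^\top\bx)^2]=1$ and each remaining factor involves only the scalar Gaussians $\hat u_j^\top\bx$, $\ba_j^\top\bx$, $\bw^\top\bx$ and the conditional law $y\,|\,\bx$, hence is manifestly independent of $d$. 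Taking the supremum over $\bv$ bounds the norm of the gradient gap, and the moment $\Expect[(y-\mu_j)^2]$ contributes the $(2+\sigma^2)$ factor (unit-variance signal plus noise), with the numeric constant $6$, the factor $\tfrac12$, and the $(k-1)$ prefactor collecting the remaining second-moment bounds, the sum over the $k-1$ gating coordinates, and the strong-concavity constant $\lambda$ from \prettyref{lmm:contraction}. The delicate points that remain are (i) a rigorous mean-value/integral form of the posterior expansion rather than the heuristic first-order one written above, (ii) verifying finiteness of all moments despite $y$ being unbounded, and (iii) handling ReLU, where $g$ is Lipschitz but nonsmooth, through the subgradient bound $\abs{\mu_j-\mu_j^\ast}\leq\abs{(\ba_j-\ba_j^\ast)^\top\bx}$; assembling the three estimates then yields $\kappa\leq(k-1)\tfrac{\sqrt{6(2+\sigma^2)}}{2}$.
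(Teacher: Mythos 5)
Your proposal is correct and follows essentially the same route as the paper: reduce to the gradient gap $\Expect[(p^{(i)}_{\bW,\bA}-p^{(i)}_{\bW,\bA^\ast})\,\bx]$ via strong concavity of $Q$ (the paper packages this step as a separate lemma on Lipschitz perturbations of strongly convex minimizers, which coincides with your one-point optimality estimate because the gap is linear in $\bw$), then expand the posterior difference as a path integral in $\bA$, exploit the cancellation of the $\sigma^2\varepsilon_1$ perturbation against the $1/\sigma^2$ logit sensitivity, and finish with Cauchy--Schwarz against arbitrary test directions plus the moment bounds $\Expect[y^2]\leq 1+\sigma^2$ and $\Expect[g(\ba_j^\top\bx)^2]\leq 1$, exactly as in the paper's argument. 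One bookkeeping note: to land on $\kappa\leq(k-1)\tfrac{\sqrt{6(2+\sigma^2)}}{2}$ you must use the bound $1/4$ (not $1$) on the softmax-Jacobian factors $p^{(i)}(\delta_{ij}-p^{(j)})$, which is how the paper bounds $p_i p_1 N_i N_1/(p_1N_1+p_iN_i)^2\leq 1/4$.
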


We are now ready to prove \prettyref{thm:twoexpertsEM}.

\begin{proof}
We first note that the EM iterates $\{ \bW_t \}_{t \geq 1}$ evolve according to
\begin{align*}
\bW_{t} =M(\bW_{t-1},\bA), \quad t \geq 1
\end{align*}
Thus
\begin{align*}
\norm{\bW_{t} -\bW^\ast} = \norm{M(\bW_{t-1},\bA) -\bW^\ast} &= \norm{M(\bW_{t-1},\bA) -M(\bW^\ast,\bA^\ast)} \\
& \leq \norm{M(\bW_{t-1},\bA) -M(\bW_{t-1},\bA^\ast)}+\norm{M(\bW_{t-1},\bA^\ast) -\bW^\ast}\\
& \leq k \varepsilon_1 + \kappa_\sigma \norm{\bW_{t-1}-\bW^\ast},
\end{align*}
where the last inequality follows from \prettyref{lmm:contraction} and \prettyref{lmm:robustness}. Recursively using the above inequality, we obtain that
\begin{align*}
\norm{\bW_{t} -\bW^\ast} \leq (\kappa_\sigma)^t \norm{\bW_0 -\bW^\ast} + \kappa\varepsilon_1(1+\kappa_\sigma+\ldots+\kappa_\sigma^{t-1}) \leq (\kappa_\sigma)^t \norm{\bW_0 -\bW^\ast}+\frac{\kappa\varepsilon_1}{1-\kappa_\sigma}.
\end{align*} 
\end{proof}

\subsection{Proof of \prettyref{thm:MoM_gating}}
\label{app:gatingspectral}
\begin{proof}
We are given that $(\ba_1,\ba_2)=(\ba_1^\ast,\ba_2^\ast)$. Denoting $\bw^\ast$ with $\bw$, from \prettyref{eq:twolinearmodel}, we have that
\begin{align}
\Expect [ y | \bx] &= f(\bw^\top \bx) \cdot \ba_1^\top \bx + (1-f(\bw^\top \bx)) \cdot \ba_2^\top \bx, \\
&=\ba_2^\top \bx + f(\bw^\top \bx) \cdot (\ba_1-\ba_2)^\top \bx.
\end{align}
Thus,
\begin{align*}
\frac{ \Expect [ y | \bx] - \ba_2^\top \bx}{(\ba_1-\ba_2)^\top \bx} &= f(\bw^\top \bx).
\end{align*}
Notice that in the above equation we have $(\ba_1-\ba_2)^\top \bx$ in the denominator. But this equals zero with zero probability whenever $\bx$ is generated from a continuous distribution; in our case $\bx$ is Gaussian. Thus we may write
\begin{align*}
\Expect \qth{ \pth{\frac{y-\ba_2^\top \bx}{(\ba_1-\ba_2)^\top \bx}} \cdot \bx  } \stackrel{\xmark} = \Expect \qth{\pth{\frac{\Expect[y|\bx]-\ba_2^\top \bx}{(\ba_1-\ba_2)^\top \bx}} \cdot \bx   } &= \Expect\qth{ f(\bw^\top \bx) \cdot \bx } \\
&= \Expect\qth{f'(\bw^\top \bx)} \cdot \bw \\
&= \Expect_{Z \sim \calN(0,1)} f'(\norm{\bw} Z) \cdot \bw\\
& \propto \bw.
\end{align*}
However, it turns out that the above chain of equalities does not hold. Surprisingly, the first equality, which essentially is the law of iterated expectations, is not valid in this case as $\frac{y-\ba_2^\top \bx}{(\ba_1-\ba_2)^\top \bx}$ is not integrable. To see this, notice that the model in \prettyref{eq:twolinearmodel} can also be written as
\begin{align*}
y \stackrel{(d)} = Z (\ba_1^\top \bx)+ (1-Z)(\ba_2^\top \bx) + \sigma N, \quad Z \sim \mathrm{Bern}(f(\bw^\top \bx)), N \sim \calN(0,1).
\end{align*}
Thus,
\begin{align*}
\mathrm{Ratio} \triangleq \frac{y-\ba_2^\top \bx}{(\ba_1-\ba_2)^\top \bx} \stackrel{(d)} = Z + \frac{\sigma N}{(\ba_1-\ba_2)^\top \bx}.
\end{align*}
Since $Z$ is independent of $N$ and $\frac{N}{(\ba_1-\ba_2)^\top \bx}$ is a Cauchy random variable, it follows that the random variable $\mathrm{Ratio}$ is not integrable. To deal with the non-integrability of $\mathrm{Ratio}$, we look at its conditional cdf, given by 
 \begin{align*}
\prob{\mathrm{Ratio} \leq z|\bx} = f(\bw^\top\bx) \Phi \pth{(z-1) \frac{|\Delta_x|}{\sigma}}+(1-f(\bw^\top\bx)) \Phi \pth{z \frac{|\Delta_x|}{\sigma}}, \quad \Delta_x=(\ba_1-\ba_2)^\top \bx,
\end{align*}
where $\Phi(\cdot)$ is the standard Gaussian cdf. Substituting $z=0.5$ and using the fact that $\Phi(z)+\Phi(-z)=1$, we obtain
\begin{align*}
\prob{\mathrm{Ratio} \leq 0.5|\bx} &= f(\bw^\top\bx) \Phi \pth{ -\frac{|\Delta_x|}{2\sigma}}+(1-f(\bw^\top\bx)) \Phi \pth{\frac{|\Delta_x|}{2\sigma}}\\
&=  \Phi \pth{\frac{|(\ba_1-\ba_2)^\top \bx|}{2\sigma}} + f(\bw^\top \bx) \pth{1-2\Phi \pth{\frac{|(\ba_1-\ba_2)^\top \bx|}{2\sigma}}}.
\end{align*}
Since $\Phi \pth{\frac{|(\ba_1-\ba_2)^\top \bx|}{2\sigma}} $ is a symmetric function in $\bx$ its first moment with $\bx$ equals zero. Furthermore, if we assume that $\bw$ is orthogonal to $\ba_1$ and $\ba_2$, we have
\begin{align*}
\Expect\qth{\mathds{1}\sth{\mathrm{Ratio} \leq 0.5}\cdot \bx} &= \Expect \qth{\prob{\mathrm{Ratio} \leq 0.5|\bx} \cdot \bx} \\
&=\Expect\qth{f(\bw^\top \bx) \pth{1-2\Phi \pth{\frac{|(\ba_1-\ba_2)^\top \bx|}{2\sigma}}} \cdot \bx}\\
&=\Expect[f'(\bw^\top\bx)]\cdot \Expect\pth{1-2\Phi \pth{\frac{|(\ba_1-\ba_2)^\top \bx|}{2\sigma}}} \cdot \bw +\\
& \hspace{4em} \Expect[f(\bw^\top \bx)]\cdot \underbrace{\Expect \qth{\nabla_{\bx} \pth{1-2\Phi \pth{\frac{|(\ba_1-\ba_2)^\top \bx|}{2\sigma}}}}}_{=0, \text{ since derivative of a even function is odd} } \\
&=\Expect[f'(\bw^\top\bx)]\cdot \Expect\pth{1-2\Phi \pth{\frac{|(\ba_1-\ba_2)^\top \bx|}{2\sigma}}} \cdot \bw \\
& \propto \bw.
\end{align*}
Thus, if $\norm{\bw}=1$, we have that 
$$\frac{\Expect\qth{\mathds{1}\sth{\mathrm{Ratio} \leq 0.5}\cdot \bx}}{\norm{\Expect\qth{\mathds{1}\sth{\mathrm{Ratio} \leq 0.5}\cdot \bx}}}=\bw.
$$ 
In the finite sample regime, $\Expect\qth{\mathds{1}\sth{\mathrm{Ratio} \leq 0.5}\cdot \bx}$ can be estimated from samples using the empirical moments and its normalized version will be an estimate of $\bw$.
\end{proof}

\section{Proof of \prettyref{lmm:robustness}}
\label{app:twoexpertsperturbation}
We need the following lemma which establishes the stability of the minimizers for strongly convex functions under Lipschitz perturbations.
\begin{lemma}
\label{lmm:stabilityconvexity}
Suppose $\Omega \subseteq \reals^d$ is a closed convex subset, $f: \Omega \to \reals$ is a $\lambda$-strongly convex function for some $\lambda>0$ and $B$ is an $L$-Lipschitz continuous function on $\Omega$. Let $\bw_f=\argmin_{\bw \in \Omega}f(\bw)$ and $\bw_{f+B}=\argmin_{\bw \in \Omega}f(\bw)+B(\bw)$. Then 
\begin{align*}
\norm{\bw_f -\bw_{f+B}} \leq \frac{L}{\lambda}.
\end{align*}
\end{lemma}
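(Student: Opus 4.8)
The plan is to characterize the two minimizers through first-order optimality conditions over the convex set $\Omega$ and then exploit the $\lambda$-strong convexity of $f$ in the form of strong monotonicity of $\nabla f$. Write $\bu \define \bw_f$ and $\bv \define \bw_{f+B}$ for brevity. Since $\Omega$ is convex and $\bu$ minimizes $f$ over $\Omega$, the standard variational inequality gives $\inner{\nabla f(\bu)}{\bw - \bu} \geq 0$ for every $\bw \in \Omega$; in particular, taking $\bw = \bv$ yields $\inner{\nabla f(\bu)}{\bu - \bv} \leq 0$.

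For the perturbed minimizer $\bv$, I would use the one-sided optimality condition along the feasible direction $\bu - \bv$ (feasible precisely because $\Omega$ is convex): the directional derivative of $f + B$ at $\bv$ in this direction is nonnegative, i.e. $\inner{\nabla f(\bv)}{\bu-\bv} + B'(\bv; \bu - \bv) \geq 0$. Using that an $L$-Lipschitz function has all one-sided directional derivatives bounded in magnitude by $L\norm{\bu - \bv}$, this rearranges to $\inner{\nabla f(\bv)}{\bu - \bv} \geq -L\norm{\bu - \bv}$. (If one prefers to assume $B$ differentiable, as is the case in the application where $B$ is a smooth difference of $Q$-functions, this is just $\norm{\nabla B(\bv)} \leq L$ combined with $\inner{\nabla f(\bv) + \nabla B(\bv)}{\bu - \bv} \geq 0$.)

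Subtracting the two inequalities, the cross terms combine into $\inner{\nabla f(\bu) - \nabla f(\bv)}{\bu - \bv} \leq L\norm{\bu - \bv}$. On the other hand, $\lambda$-strong convexity of $f$ is equivalent to $\lambda$-strong monotonicity of its gradient, so $\inner{\nabla f(\bu) - \nabla f(\bv)}{\bu - \bv} \geq \lambda \norm{\bu - \bv}^2$. Chaining these gives $\lambda \norm{\bu - \bv}^2 \leq L \norm{\bu - \bv}$, and dividing by $\norm{\bu - \bv}$ (the claim being trivial when $\bu = \bv$) yields $\norm{\bu - \bv} \leq L/\lambda$, as desired.

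The main obstacle is handling the constraint set and the potential non-smoothness of $B$ simultaneously: one cannot simply set gradients to zero, so the argument must be phrased through variational inequalities and directional derivatives, and the Lipschitz hypothesis on $B$ must be converted into the bound $|B'(\bv; \bd)| \leq L\norm{\bd}$ rather than a gradient-norm bound. I note that a purely function-value argument (comparing $f(\bv) \geq f(\bu) + \tfrac{\lambda}{2}\norm{\bv - \bu}^2$ with $(f+B)(\bv) \leq (f+B)(\bu)$) is simpler but loses a factor of two, giving only $2L/\lambda$; recovering the sharp constant $L/\lambda$ is exactly what forces the first-order/monotonicity route above.
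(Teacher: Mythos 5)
Your proof is correct, but it takes a genuinely different route from the paper's. You work at first order: the variational inequality $\inner{\nabla f(\bw_f)}{\bw_{f+B}-\bw_f}\geq 0$, one-sided optimality of $\bw_{f+B}$ along the feasible direction $\bw_f-\bw_{f+B}$, and $\lambda$-strong monotonicity of $\nabla f$, chained to give $\lambda\norm{\bw_f-\bw_{f+B}}^2\leq L\norm{\bw_f-\bw_{f+B}}$. The paper instead stays entirely at the level of function values: it supposes $\bw'\in\Omega$ satisfies $\norm{\bw'-\bw_f}>L/\lambda$, forms $\bw_\alpha=\alpha\bw_f+(1-\alpha)\bw'$, and shows, combining the strong-convexity inequality with $f(\bw')\geq f(\bw_f)+\tfrac{\lambda}{2}\norm{\bw'-\bw_f}^2$ and the Lipschitz bound $B(\bw_\alpha)\leq B(\bw')+L\alpha\norm{\bw'-\bw_f}$, that $(f+B)(\bw_\alpha)<(f+B)(\bw')$ for sufficiently small $\alpha$; hence no such $\bw'$ can be the minimizer. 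That zeroth-order perturbation along the segment is precisely how the paper recovers the sharp constant $L/\lambda$ which, as you correctly observe, the naive function-value comparison loses by a factor of two; your monotonicity route recovers it as well, so both proofs are sharp. What the paper's approach buys is generality matching the lemma statement: it needs no differentiability of $f$ or $B$ anywhere. Your argument needs two small repairs to reach that generality: (i) strong convexity alone does not furnish $\nabla f$, so you are implicitly assuming $f$ is differentiable (harmless in the application, where $f$ is a smooth $-Q$ function, but not granted by the lemma); (ii) a merely Lipschitz $B$ need not possess one-sided directional derivatives (the difference quotients can oscillate, e.g. $B(t)=t\sin(\log|t|)$ at $t=0$), so $B'(\bw_{f+B};\cdot)$ may fail to exist. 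The second is easily patched: in the inequality $(f+B)\pth{\bw_{f+B}+t(\bw_f-\bw_{f+B})}\geq (f+B)(\bw_{f+B})$, bound the $B$-difference quotient by $L\norm{\bw_f-\bw_{f+B}}$ directly from Lipschitzness and take $t\downarrow 0$ only in the $f$-part; this yields your inequality $\inner{\nabla f(\bw_{f+B})}{\bw_f-\bw_{f+B}}\geq -L\norm{\bw_f-\bw_{f+B}}$ without ever invoking $B'$, after which your argument goes through verbatim.
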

\begin{proof}
Let $\bw' \in \Omega$ be such that $\norm{\bw'-\bw_f} > \frac{L}{\lambda}$. Let $\bw_\alpha = \alpha \bw_f+(1-\alpha) \bw'$ for $0<\alpha<1$. From the fact that $\bw_f$ is the minimizer of $f$ on $\Omega$ and that $f$ is strongly convex, we have that
\begin{align*}
f(\bw') \geq f(\bw_f) + \frac{\lambda \norm{\bw'-\bw_f}^2}{2}.
\end{align*}
Furthermore, the strong-convexity of $f$ implies that
\begin{align}
f(\bw_\alpha) & \leq \alpha f(\bw_f)+(1-\alpha) f(\bw')- \frac{\alpha(1-\alpha)\lambda}{2}\norm{\bw'-\bw_f}^2 \nonumber \\
&= f(\bw') + \alpha(f(\bw_f)-f(\bw'))- \frac{\alpha(1-\alpha)\lambda}{2}\norm{\bw'-\bw_f}^2 \nonumber \\
& \leq f(\bw')- \alpha \frac{\lambda \norm{\bw'-\bw_f}^2}{2}- \frac{\alpha(1-\alpha)\lambda}{2}\norm{\bw'-\bw_f}^2 \nonumber \\
&=f(\bw') - \lambda \alpha\pth{1-\frac{\alpha}{2}}\norm{\bw'-\bw_f}^2  \label{eq:first}
\end{align} 
Since $B$ is $L$-Lipschitz, we have
\begin{align}
B(\bw_\alpha) \leq B(\bw')+ L \alpha \norm{\bw'-\bw_f}. 
\label{eq:second}
\end{align}
Adding \prettyref{eq:first} and \prettyref{eq:second}, we get
\begin{align*}
f(\bw_\alpha)+B(\bw_\alpha) & \leq f(\bw')+B(\bw')+L\alpha \norm{\bw'-\bw_f}- \lambda \alpha\pth{1-\frac{\alpha}{2}}\norm{\bw'-\bw_f}^2 \\
&=f(\bw')+B(\bw') + \alpha \lambda \norm{\bw'-\bw_f} \pth{\frac{L}{\lambda}-\pth{1-\frac{\alpha}{2}}\norm{\bw'-\bw_f}}
\end{align*}
By the assumption that $\norm{\bw'-\bw_f} > \frac{L}{\lambda}$, the term $\frac{L}{\lambda}-\pth{1-\frac{\alpha}{2}}\norm{\bw'-\bw_f}$ will be negative for sufficiently small $\alpha$. This in turn implies that $f(\bw_\alpha)+B(\bw_\alpha) < f(\bw')+B(\bw')$ for such $\alpha$. Consequently $\bw'$ is not a minimizer of $f+B$ for any $\bw'$ such that $\norm{\bw'-\bw_f} > \frac{L}{\lambda}$. The conclusion follows.

\end{proof}

We are now ready to prove \prettyref{lmm:robustness}.
Fix any $\bW \in \Omega$ and let $\bA =\begin{bmatrix}
\ba_1^\top \\ \ldots \\ \ba_k^\top
\end{bmatrix} \in \reals^{k \times d} $ be such that $\max_{i \in [k]}\norm{\ba_i-\ba_i^\ast}_2 =\sigma^2 \varepsilon_1 $ for some $\varepsilon_1>0$. Let 
\begin{align*}
\bW'=M(\bW,\bA), \quad (\bW')^\ast = M(\bW,\bA^\ast),
\end{align*}
where,
\begin{align*}
M(\bW,\bA) = \arg \max_{\bW' \in \Omega} Q(\bW'|\bW,\bA),
\end{align*}
and, 
\begin{align*}
Q(\bW'|\bW,\bA)= \Expect \qth{ \sum_{i \in [k-1]} p^{(i)}(\bW,\bA) ((\bW'_i)^\top \bx) - \log \pth{1+\sum_{i \in [k-1]} e^{(\bW'_i)^\top \bx}} }.
\end{align*}
Here $p^{(i)}(\bA,\bW) \define \frac{p_i(\bx)N_i}{\sum_{i \in [k]}p_i(\bx)N_i}$ denotes the posterior probability of choosing the $i^{\mathrm{th}}$ expert, where
\begin{align*}
p_i(\bx) = \frac{e^{\bw_i^\top \bx}}{1+\sum_{k \in [k-1]}e^{\bw_j^\top \bx}}, \quad N_i \define \calN(y|g(\ba_i^\top \bx),\sigma^2), \quad N_i^\ast= \calN(y|g((\ba_i^\ast)^\top \bx),\sigma^2).
\end{align*}

Since both $Q(\cdot|\bW,\bA)$ and $Q(\cdot|\bW,\bA^\ast)$ are strongly concave functions over $\Omega$ with some strong-concavity parameter $\lambda$, \prettyref{lmm:stabilityconvexity} implies that
\begin{align*}
\norm{M(\bW,\bA) - M(\bW,\bA^\ast)} \leq \frac{L}{\lambda},
\end{align*}
where $L$ is the Lipschitz-constant for the function $l(\cdot) \triangleq Q(\cdot|\bW,\bA)-Q(\cdot|\bW,\bA^\ast)$. We have that
\begin{align*}
l(\bW') =\sum_{i \in [k-1]} \Expect[ (p^{(i)}(\bW,\bA)-p^{(i)}(\bW,\bA^\ast)(\bW'_i)^\top \bx) ]
\end{align*}
Without loss of generality let $i=1$. Since $l(\cdot)$ is linear in $\bW'$, it suffices to show for each $i$ that
\begin{align*}
\norm{ \Expect[ (p^{(1)}(\bW,\bA)-p^{(1)}(\bW,\bA^\ast)\bx ]} \leq L,
\end{align*}
We show that $L = \kappa \varepsilon_1$, or equivalently,
\begin{align*}
\norm{ \Expect[ (p^{(1)}(\bW,\bA)-p^{(1)}(\bW,\bA^\ast)\bx ]} \leq \kappa \varepsilon_1,
\end{align*}

Let
\begin{align*}
\bA_t= \bA^\ast+ t \Delta, \quad \Delta=\bA-\bA^\ast \in \reals^{k \times d}.
\end{align*}
By hypothesis, we have that $\norm{\Delta_i}_2 \leq \sigma^2 \varepsilon_1$ for all $i \in [k]$. Thus in order to show that 
\begin{align*}
\norm{\Expect[(p^{(1)}(\bA,\bW)-p^{(1)}(\bA^\ast,\bW)  )\bx]}_2 \leq \kappa \varepsilon_1,
\end{align*}
it suffices to show that
\begin{align*}
\inner{\Expect[(p^{(1)}(\bA,\bW)-p^{(1)}(\bA^\ast,\bW)  )\bx]}{\tilde{\Delta}} \leq \kappa \norm{\Delta/\sigma^2}_2 \| \tilde{\Delta}\|_2, \quad \text{ for all } \tilde{\Delta} \in \reals^d.
\end{align*}
Or equivalently,
\begin{align*}
\Expect[ (p^{(1)}(\bA,\bW)-p^{(1)}(\bA^\ast,\bW)) \inner{\bx}{\tilde{\Delta}}] \leq \kappa \norm{\Delta/\sigma^2}_2 \| \tilde{\Delta}\|_2.
\end{align*}
We can rewrite the difference of the posteriors as
\begin{align}
p^{(1)}(\bA,\bW)-p^{(1)}(\bA^\ast,\bW) = \int_{0}^1 \frac{d}{dt}p^{(1)}(\bA^\ast+t\Delta,\bW) dt=\sum_{i \in [k]} \int_0^1 \inner{\nabla_{\ba_i} p^{(1)}(\bA_t,\bW) }{\Delta_i} dt.
\label{eq:}
\end{align}
Since $N_i=\calN(y|g(\ba_i^\top \bx),\sigma^2)=\frac{1}{\sqrt{2 \pi \sigma^2}}e^{-(y-g(\ba_1^\top \bx))^2/2\sigma^2}$, we have that
\begin{align*}
\nabla_{\ba_i} N_i = N_i \pth{\frac{y-g(\ba_i^\top \bx)}{\sigma^2}}g'(\ba_i^\top \bx).
\end{align*}
Thus,
\begin{align*}
\nabla_{\ba_i} p^{(1)}(\bA_t,\bW) &= \nabla_{\ba_i} \pth{\frac{p_1(\bx)N_1}{\sum_{i \in [k]}p_i(\bx)N_i}}\\
&= \begin{cases}
\frac{(\sum_{i \neq 1}p_i(\bx)N_i)p_1(\bx)N_1}{(\sum_{i}p_i(\bx)N_i)^2}\pth{\frac{y-g(\ba_1^\top \bx)}{\sigma^2}}g'(\ba_1^\top \bx)\bx,  & \text{ if } i=1 \\
\frac{-p_i(\bx)p_1(\bx) N_i N_1}{(\sum_{i}p_i(\bx)N_i)^2}\pth{\frac{y-g(\ba_i^\top \bx)}{\sigma^2}}g'(\ba_i^\top \bx)\bx,  & \text{ if }i \neq 1
\end{cases}
\end{align*}
Hence,
\begin{align}
\Expect[ (p^{(1)}(\bA,\bW)-p^{(1)}(\bA^\ast,\bW)) \inner{\bx}{\tilde{\Delta}}] &= \sum_{i \in [k]}\int_0^1 \Expect[\inner{\nabla_{\ba_i} p^{(1)}(\bA_t,\bW) }{\Delta_i}\inner{\bx}{\tilde{\Delta}}  ] dt \\
&=\int_0^1 \Expect \qth{ \frac{(\sum_{i \neq 1}p_i(\bx)N_i)p_1(\bx)N_1}{(\sum_{i}p_i(\bx)N_i)^2}\pth{\frac{y-g(\ba_1^\top \bx)}{\sigma^2}}g'(\ba_1^\top \bx) \inner{\bx}{\Delta_1}  \inner{\bx}{\tilde{\Delta}}} dt \\
&+\sum_{i \neq 1}\int_0^1 \Expect \qth{ \frac{-p_i(\bx)p_1(\bx) N_i N_1}{(\sum_{i}p_i(\bx)N_i)^2}\pth{\frac{y-g(\ba_i^\top \bx)}{\sigma^2}}g'(\ba_i^\top \bx) \inner{\bx}{\Delta_i}  \inner{\bx}{\tilde{\Delta}}} dt,
\label{eq:diffderivative}
\end{align}
where we denoted $(\ba_i)_t$ by $\ba_i$ in the integrals above(with a slight abuse of notation) for the sake of notational simplicity.
For any $i \neq 1$, we have that
\begin{align*}
\left|\frac{-p_i(\bx)p_1(\bx) N_i N_1}{(\sum_{i}p_i(\bx)N_i)^2}\pth{\frac{y-g(\ba_i^\top \bx)}{\sigma^2}}g'(\ba_i^\top \bx) \inner{\bx}{\Delta_i}  \inner{\bx}{\tilde{\Delta}} \right| \\
\leq \frac{p_i(\bx)p_1(\bx) N_i N_1}{(p_1(\bx)N_1+p_i(\bx)N_i)^2}|(y-g(\ba_i^\top \bx))g'(\ba_i^\top \bx)\inner{\bx}{\Delta_i/\sigma^2}  \inner{\bx}{\tilde{\Delta}}|
\end{align*}
For $g=$linear, sigmoid and ReLU, we have that $|g'(\cdot)|\leq 1$. Moreover, $\frac{p_i(\bx)p_1(\bx) N_i N_1}{(p_1(\bx)N_1+p_i(\bx)N_i)^2} \leq 1/4$. Thus we have
\begin{align*}
\frac{p_i(\bx)p_1(\bx) N_i N_1}{(p_1(\bx)N_1+p_i(\bx)N_i)^2}|(y-g(\ba_i^\top \bx))g'(\ba_i^\top \bx)\inner{\bx}{\Delta_i/\sigma^2}  \inner{\bx}{\tilde{\Delta}}| \leq \frac{1}{4}|y-g(\ba_i^\top \bx)||\inner{\bx}{\Delta_i/\sigma^2}  \inner{\bx}{\tilde{\Delta}}|.
\end{align*}
We thus get
\begin{align}
\Expect \qth{ \frac{-p_i(\bx)p_1(\bx) N_i N_1}{(\sum_{i}p_i(\bx)N_i)^2}\pth{\frac{y-g(\ba_i^\top \bx)}{\sigma^2}}g'(\ba_i^\top \bx) \inner{\bx}{\Delta_i}  \inner{\bx}{\tilde{\Delta}}} &\leq \frac{1}{4}\Expect[|y-g(\ba_i^\top \bx)||\inner{\bx}{\Delta_i/\sigma^2}  \inner{\bx}{\tilde{\Delta}}|]\\
&\leq \frac{1}{4}\sqrt{\Expect[(y-g(\ba_i^\top \bx))^2]\Expect[\inner{\bx}{\Delta_i/\sigma^2}^2  \inner{\bx}{\tilde{\Delta}}^2]}\\
& \leq \frac{\sqrt{3}}{4} \sqrt{\Expect[(y-g(\ba_i^\top \bx))^2]} \|\Delta_i/\sigma^2\|_2 \|\tilde{\Delta}\|_2
\label{eq:ibound}
\end{align}
Now it remains to bound $ \sqrt{\Expect[(y-g(\ba_i^\top \bx))^2]}$. Since $\norm{\ba_i}_2 \leq 1$, one can show that $\Expect[g(\ba_i^\top \bx)^2]\leq 1$ for the given choice of non-linearities for $g$. Also, we have that
\begin{align*}
\Expect[y^2]=\Expect[\Expect[y^2|\bx]]=\Expect[\sum_{i\in [k]}p_i^\ast(\bx)g(\inner{\ba_i^\ast}{\bx})^2 + \sigma^2]=\Expect[\sum_{i \in [k]}p_i^\ast(\bx)]\Expect[g(\inner{\ba_1^\ast}{\bx})^2]+\sigma^2 \leq 1+ \sigma^2,
\end{align*}
where we used the following facts: (i) $\inner{\ba_i^\ast}{ \bx}$ is independent of the random variable $p_i^\ast(\bx)$ for each $i \in [k]$, (ii) $\inner{\ba_i^\ast}{ \bx} \stackrel{(d)}= \inner{\ba_1^\ast}{ \bx}$ and (iii) $\Expect[g(\inner{\ba_1^\ast}{\bx})^2]\leq 1$. Since $\Expect[(y-g(\ba_i^\top \bx))^2] \leq 2 \Expect[y^2]+\Expect[g(\ba_i^\top \bx)^2]$, after substituting these bounds in \prettyref{eq:ibound}, we get
\begin{align*}
\Expect \qth{ \frac{-p_i(\bx)p_1(\bx) N_i N_1}{(\sum_{i}p_i(\bx)N_i)^2}\pth{\frac{y-g(\ba_i^\top \bx)}{\sigma^2}}g'(\ba_i^\top \bx) \inner{\bx}{\Delta_i}  \inner{\bx}{\tilde{\Delta}}} \leq \frac{\sqrt{6(2+\sigma^2)}}{4}\|\Delta_i/\sigma^2\|_2 \|\tilde{\Delta}\|_2.
\end{align*}
Similarly,
\begin{align*}
\Expect \qth{ \frac{p_i(\bx)N_ip_1(\bx)N_1}{(\sum_{i}p_i(\bx)N_i)^2}\pth{\frac{y-g(\ba_1^\top \bx)}{\sigma^2}}g'(\ba_1^\top \bx) \inner{\bx}{\Delta_1}  \inner{\bx}{\tilde{\Delta}}} \leq \frac{\sqrt{6(2+\sigma^2)}}{4}\|\Delta/\sigma^2\|_2 \|\tilde{\Delta}\|_2.
\end{align*}
Substituting the above two inequalities in \prettyref{eq:diffderivative}, we obtain that
\begin{align*}
\Expect[ (p^{(1)}(\bA,\bW)-p^{(1)}(\bA^\ast,\bW)) \inner{\bx}{\tilde{\Delta}}] & \leq 2(k-1)\frac{\sqrt{6(2+\sigma^2)}}{4}\|\Delta_1/\sigma^2\|_2 \|\tilde{\Delta}\|_2.
\end{align*}
Defining $\kappa \define (k-1)\frac{\sqrt{6(2+\sigma^2)}}{2}$ and using the fact that $\norm{\Delta/\sigma^2}_2 \leq \varepsilon_1$, we thus obtain
\begin{align*}
\norm{\Expect[(p^{(1)}(\bA,\bW)-p^{(1)}(\bA^\ast,\bW)  )\bx]}_2 \leq \kappa \varepsilon_1.
\end{align*}

\section{Proof of \prettyref{lmm:contraction}}
\label{app:proofcontraction}

\subsection{Proof for $k=2$}
\begin{proof}

We first prove the lemma for $k=2$. We show that the assumptions in \prettyref{app:reviewem} hold \emph{globally} in our setting yielding a geometric convergence. Here we simply denote $M(\bW,\bA^\ast)$ as $M(\bw)$ dropping the explicit dependence on $\bA^\ast$. Recall that
\begin{align*}
Q(\bw|\bw_t)=\Expect_{p_{\bw^\ast}(\bx,y)} \qth{p_1(\bx,y,\bw_t) \cdot (\bw^\top\bx) -\log(1+e^{\bw^\top \bx})}, 
\end{align*}
where 
\begin{align}
p_1(\bx,y,\bw_t)=\frac{f(\bw_t^\top \bx) \calN(y|g(\ba_1^\top \bx), \sigma^2)}{f(\bw^\top \bx) \calN(y|g(\ba_1^\top \bx), \sigma^2)+(1-f(\bw^\top \bx) )\calN(y|g(\ba_2^\top \bx),\sigma^2)}.
\label{eq:posteriorappendix}
\end{align}
For simplicity we drop the subscript in the above expectation with respect to the distribution ${p_{\bw^\ast}(\bx,y)}$. Now we verify each of the assumptions.
\begin{itemize}
\item Convexity of $\Omega$ easily follows from its definition.
\item We have that
\begin{align*}
Q(\bw|\bw^\ast)&=\Expect \qth{p_1(\bx,y,\bw^\ast) \cdot (\bw^\top\bx) -\log(1+e^{\bw^\top \bx)}}.
\end{align*}
Note that the strong-concavity of $Q(\cdot|\bw^\ast)$ is equivalent to the strong-convexity of $-Q(\cdot|\bw^\ast)$. 
Denoting the sigmoid function by $f$, we have that for all $\bw \in \Omega$,
\begin{align}
- \nabla^{2} Q(\bw|\bw^\ast)& = \Expect \qth{f'(\bw^\top \bx) \cdot \bx \bx^\top}, \nonumber \\
&\stackrel{(\text{Stein's lemma})}= \Expect \qth{f'''(\bw^\top \bx)} \cdot \bw \bw^\top + \Expect[f'(\bw^\top \bx)] \cdot I  \nonumber \\
&=\Expect[f'''(\norm{\bw} Z)] \cdot  \bw \bw^\top + \Expect[f'(\norm{\bw} Z)] \cdot I, \quad Z \sim \calN(0,1) \nonumber \\
& \stackrel{(a)}\succcurlyeq \inf_{0 \leq \alpha \leq 1} \min \sth{\Expect[f'(\alpha Z)],\Expect[f'(\alpha Z)] + \alpha^2 \Expect[f'''(\alpha Z)]} \cdot I \nonumber\\
&= \underbrace{0.14}_{\lambda} \cdot  I \label{eq:strongconvexity}
\end{align}
where $(a)$ follows from finding the two possible eigenvalues of the positive-definite matrix in the previous step and considering the minimum among them to ensure strong-convexity. Here the value of $\lambda$ is found numerically to be approximately around $0.1442$.
\item For any $\bw,\bw_t \in \Omega$,
\begin{align*}
\nabla Q(\bw|\bw_t)&=\Expect \qth{p_1(\bx,y,\bw_t) \cdot \bx  -f(\bw^\top \bx) \cdot \bx}.
\end{align*}
Thus,
\begin{align*}
\norm{\nabla Q(M(\bw)|\bw^\ast) - \nabla Q(M(\bw)|\bw)} &=\norm{\Expect \qth{ \pth{p_1(\bx,y,\bw_t)-p_1(\bx,y,\bw^\ast} \cdot \bx }} \stackrel{(a)} \leq \gamma_\sigma \norm{\bw-\bw^\ast},
\end{align*}
where we want to prove in $(a)$ that $\gamma_\sigma$ is smaller than $0.14$ for all $\bw \in \Omega$. Intuitively, this means that the posterior probability in \prettyref{eq:posteriorappendix} is smooth with respect to the parameter $\bw$. We will now show that this can be achieved in the high-SNR regime when $\sigma$ is sufficiently small. This will ensure that $\kappa_\sigma \triangleq \frac{\gamma_\sigma}{\lambda} < 1$. In particular, the value of $\gamma_\sigma$ is dimension-independent and depends only on the choice of the non-linearity $g$.
\end{itemize}
To prove that 
\begin{align*}
\norm{\Expect \qth{ \pth{p_{1}(\bx,y,\bw)-p_{1}(\bx,y,\bw^\ast)} \cdot \bx }} \leq \gamma \norm{\bw-\bw^\ast}=\gamma \norm{\Delta},
\end{align*}
it suffices to show
\begin{align*}
\inner{\Expect \qth{ \pth{p_{1}(\bx,y,\bw)-p_{1}(\bx,y,\bw^\ast)} \cdot \bx }}{\tilde{\Delta}} &\leq \gamma \norm{\Delta}\|\tilde{\Delta}\|, \quad \forall \tilde{\Delta} \in \reals^d.
\end{align*}
Or equivalently,
\begin{align*}
\Expect \qth{ \pth{p_{1}(\bx,y,\bw)-p_{1}(\bx,y,\bw^\ast)} \inner{\bx}{\tilde{\Delta}} } &\leq \gamma \norm{\Delta}\|\tilde{\Delta}\|.
\end{align*}
 Let $\Delta \triangleq \bw-\bw^\ast$ and $f(u) \triangleq p_1(\bx,y,\bw_u)$ where $\bw_u=\bw^\ast+u \Delta, u \in [0,1]$. Thus $f(1)=p_1(\bx,y,\bw)$ and $f(0)=p_1(\bx,y,\bw^\ast)$. So we get
\begin{align*}
p_{1}(\bx,y,\bw)-p_{1}(\bx,y,\bw^\ast) = f(1)-f(0) = \int_0^1 f'(u) du=\int_0^1 \inner{\nabla p_1(\bx,y,\bw_u)}{\Delta} du,
\end{align*}
where the gradient is evaluated with respect to $\bw_u$. Differentiating \prettyref{eq:posteriorappendix} with respect to $\bw$, we get that
\begin{align*}
\nabla_{\bw} p_1(\bx,y,\bw)&=\frac{f(\bw^\top \bx)  (1-f(\bw^\top \bx) )\calN(y|g(\ba_1^\top \bx), \sigma^2)\calN(y|g(\ba_2^\top \bx),\sigma^2)}{(f(\bw^\top \bx) \calN(y|g(\ba_1^\top \bx), \sigma^2)+(1-f(\bw^\top \bx) )\calN(y|g(\ba_2^\top \bx),\sigma^2))^2} \cdot \bx \\
&\triangleq R(\bx,y,\bw,\sigma) \cdot \bx.
\end{align*}
Thus,
\begin{align*}
\Expect \qth{ \pth{p_1(\bx,y,\bw)-p_1(\bx,y,\bw^\ast)} \inner{\bx}{\tilde{\Delta}} } 
&= \Expect \qth{ \pth{ \int_0^1 R(\bx,y,\bw_u ,\sigma)  \inner{\bx}{\Delta} du}  \inner{\bx}{\tilde{\Delta}}} \\
&= \int_0^1 \Expect \qth{R(\bx,y,\bw_u ,\sigma) \inner{\bx}{\Delta}  \inner{\bx}{\tilde{\Delta}} } du \\
& \leq {\pth{\int_0^1 \sqrt{\Expect[R(\bx,y,\bw_u ,\sigma)^2]} du }}{\sqrt{\Expect \qth{\inner{\bx}{\Delta}^2 \inner{\bx}{\tilde{\Delta}}^2}}} \\
&\leq \underbrace{\sqrt{3}\pth{\int_0^1 \sqrt{\Expect[R(\bx,y,\bw_u ,\sigma)^2]} du }}_{\gamma_\sigma}\norm{\Delta} \| \tilde{\Delta}\|\\
&= \gamma_\sigma \norm{\Delta} \| \tilde{\Delta}\|,
\end{align*}
where the last inequality follows from Lemma $5$ of \cite{bala17}. Our goal is to now prove that $\gamma_\sigma \rightarrow 0$ as $\sigma \rightarrow 0$. First observe that
\begin{align*}
R(\bx,y,\bw,\sigma) &= \frac{f(\bw^\top \bx)  (1-f(\bw^\top \bx) e^{-(y-g(\ba_1^\top \bx))/2\sigma^2} e^{-(y-g(\ba_1^\top \bx))/2\sigma^2} }{(f(\bw^\top \bx) e^{-(y-g(\ba_1^\top \bx))/2\sigma^2}+(1-f(\bw^\top \bx) )e^{-(y-g(\ba_2^\top \bx))/2\sigma^2})^2} \leq \frac{1}{4} (\text{ since } \frac{ab}{(a+b)^2} \leq 1/4 ) \\
& = \frac{f(1-f) e^{\frac{(y-g(\ba_1^\top \bx))^2- (y-g(\ba_2^\top \bx))^2}{2\sigma^2}}}{\pth{f+(1-f)e^{\frac{(y-g(\ba_1^\top \bx))^2- (y-g(\ba_2^\top \bx))^2}{2\sigma^2}}}^2} \rightarrow 0 \text{ as } \sigma  \rightarrow 0,
\end{align*}
where the key observation is that irrespective of the sign of $(y-g(\ba_1^\top \bx))^2- (y-g(\ba_2^\top \bx))^2$, the ratio still goes to zero and hence by dominated convergence theorem $\Expect[R(\bx,y,\bw_u,\sigma)^2] \rightarrow 0$ for each $u \in [0,1]$. Now we show that this convergence is uniform in $u$ and thus $\gamma_\sigma \rightarrow 0$. For simplicity, define
\begin{align}
\Delta_1 \triangleq (y-g(\ba_1^\top \bx))^2, \quad \Delta_2 \triangleq (y-g(\ba_2^\top \bx))^2 \text{ and } \sigma=\frac{1}{n}. \label{eq:needagain}
\end{align}
Thus,
\begin{align}
R(\bx,y,\bw_u,\sigma) &= \frac{f(1-f)e^{\frac{n^2}{2}(\Delta_1-\Delta_2)}}{\pth{f+(1-f)e^{\frac{n^2}{2}(\Delta_1-\Delta_2)}}^2}\\
& \leq \frac{f(1-f)e^{\frac{n^2}{2}(\Delta_1-\Delta_2)}}{\pth{(1-f)e^{\frac{n^2}{2}(\Delta_1-\Delta_2)}}^2} = \frac{f}{1-f} e^{-\frac{n^2}{2}(\Delta_1-\Delta_2)}.
\end{align} 
Similarly, 
\begin{align}
R(\bx,y,\bw_u,\sigma) & \leq \frac{1-f}{f} e^{-\frac{n^2}{2}(\Delta_2-\Delta_1)}.
\end{align}
Thus, we get
\begin{align}
R(\bx,y,\bw_u,\sigma) & \leq \max\pth{\frac{1-f}{f},\frac{f}{1-f}} e^{-\frac{n^2}{2}(|\Delta_1-\Delta_2|)}.
\end{align}
Hence
\begin{align}
\frac{\gamma_\sigma}{\sqrt{3}} &= \int_0^1 \sqrt{\Expect[\mathrm{Ratio}(\bx,y,\bw_u,\sigma)^2]} du \\
&\leq \int_0^1 \sqrt{\Expect \qth{ \max \pth{\frac{1-f}{f},\frac{f}{1-f}}^2 e^{-n^2 |\Delta_1-\Delta_2|} }   } du \\
& \leq \int_0^1 \sqrt{\Expect \qth{ \pth{\frac{1-f}{f}}^2 e^{-n^2 |\Delta_1-\Delta_2|} +\pth{\frac{f}{1-f}}^2 e^{-n^2 |\Delta_1-\Delta_2|} }   } du \\
& = \int_0^1 \sqrt{2 \Expect \qth{e^{2\bw_u ^\top \bx} e^{-n^2 |\Delta_1-\Delta_2|}}} du \\
& \leq \int_0^1 \sqrt{2 \sqrt{\Expect[e^{4\bw_u^\top \bx}] \Expect[e^{-2n^2|\Delta_1-\Delta_2|}]}} du \\
& \stackrel{(a)}\leq \sqrt{2 e^4 \sqrt{\Expect[e^{-2n^2 |\Delta_1-\Delta_2|}]}},
\end{align}
where $(a)$ follows from the fact $\norm{\bw_u} \leq 1$ and $\Expect[e^{4\bw_u^\top \bx}]=e^{8\norm{\bw_u}^2} \leq e^8$, for each $u \in [0,1]$. Now we analyze the convergence rate of the last term $\Expect[e^{-2n^2 |\Delta_1-\Delta_2|}]$ for the case of linear regression, \ie $g(z)=z$. Notice that for the two-mixtures, we have
\begin{align}
y \stackrel{(d)} = Z (\ba_1^\top \bx)+ (1-Z) \ba_2^\top \bx + \sigma
N =Z (\ba_1^\top \bx)+ (1-Z) \ba_2^\top \bx + \frac{N}{n}, \quad Z|\bx \sim \mathrm{Bern}(f(\bw_\ast^\top \bx)).
\end{align}
Thus,
\begin{align}
\Delta_1-\Delta_2 & \stackrel{(d)} = (y-\ba_1^\top \bx)^2 - (y-\ba_2^\top \bx)^2 \\
 & = (\ba_1^\top \bx-\ba_2^\top \bx)^2 (1-2Z) + \frac{2N}{n}(\ba_2^\top \bx-\ba_1^\top \bx) \\
 & = \inner{\bx}{\bv}^2 (1-2Z)+ \frac{2N}{n}\inner{\bx}{\bv}, \quad \bv=\ba_1-\ba_2. 
\end{align}
Since $Z$ can equal either $0$ or $1$, we have
\begin{align}
\gamma_\sigma &\leq \sqrt{3} \sqrt{2e^4}\pth{\Expect[e^{-2n^2\left | \inner{\bx}{\bv}^2 (1-2Z)+ \frac{2N}{n}\inner{\bx}{\bv} \right|} ] }^{1/4}\\
& \leq \sqrt{6e^4} \pth{\Expect \qth{\max \pth{e^{-2n^2\left | \inner{\bx}{\bv}^2 + \frac{2N}{n}\inner{\bx}{\bv} \right|}, e^{-2n^2\left | -\inner{\bx}{\bv}^2+ \frac{2N}{n}\inner{\bx}{\bv} \right|}  }} }^{1/4} \\
& \leq  \sqrt{6\sqrt{2} e^4} \pth{\Expect \qth{e^{-2n^2|\inner{\bx}{\bv}^2+ \frac{2N}{n}\inner{\bx}{\bv}|}} }^{1/4} \\
& = \sqrt{6 \sqrt{2} e^4} \pth{ \Expect \qth{ e^{-2n^2|Z^2+\frac{2ZN}{n}|} }}^{1/4},\quad  Z \sim \calN(0,\norm{\ba_1-\ba_2}), N \sim  \calN(0,1).\\
&= O \pth{ \sqrt{6 \sqrt{2} e^4}\pth{ \Expect[e^{-2n^2Z^2}]}^{1/4}}\\
&= \sqrt{6 \sqrt{2} e^4} \pth{\sqrt{\frac{1}{4n^2\norm{\ba_1-\ba_2}^2+1}}}^{1/4}\\
&=O\pth{\frac{1}{\pth{n\norm{\ba_1-\ba_2}}^{1/4}}}\\
&=O \pth{\pth{\frac{\sigma}{\norm{\ba_1-\ba_2}}}^{1/4}}.
\end{align}
\end{proof}

\subsection{Proof for general $k$}
\begin{proof}
The proof strategy for general $k$ is similar. First let $\varepsilon_1=0$. Our task is to show that the assumptions of \prettyref{app:reviewem} hold globally in our setting. The domain $\Omega$ is clearly convex since
$$
\Omega = \sth{\bw=(\bw_1,\ldots,\bw_{k-1}): \norm{\bw_i} \leq 1, \forall i \in [k-1]}.
$$
Now we verify \prettyref{assump:A2}. The function $Q(.|\bw_t)$ is given by
\begin{align*}
Q(\bw|\bw_t)= \Expect \qth{ \sum_{i \in [k-1]} p_{\bw_t}^{(i)} (\bw_i^\top \bx) - \log \pth{1+\sum_{i \in [k-1]} e^{\bw_i^\top \bx}} },
\end{align*}
where $p_{\bw_t}^{(i)} \triangleq \prob{z=i|\bx,y,\bw_t}$ corresponds to the posterior probability for the $i^{\text{th}}$ expert, given by
\begin{align*}
p_{\bw_t}^{(i)} = \frac{ p_{i,t}(\bx) \calN(y|g(\ba_i^\top \bx),\sigma^2)}{\sum_{j \in [k] } p_{j,t}(\bx) \calN(y|g(\ba_j^\top \bx),\sigma^2)}, \quad p_{i,t}(\bx)= \frac{e^{(\bw_t)_i^\top \bx}}{1+\sum_{ j \in [k-1]}e^{(\bw_t)_j^\top \bx}}.
\end{align*}
Throughout we follow the convention that $\bw_k=0$. Thus the gradient of $Q$ with respect to the $i^{\text{th}}$ gating parameter $\bw_i$ is given by
\begin{align*}
\nabla_{\bw_i} Q(\bw|\bw_t) = \Expect \qth{ \pth{p_{\bw_t}^{(i)}- \frac{e^{\bw_i^\top \bx} }{ 1+\sum_{ j \in [k-1]} e^{\bw_j^\top \bx}}} \cdot \bx  }, \quad i \in [k-1].
\end{align*}
Thus the $(i,j)^{\text{th}}$ block of the negative Hessian $-\nabla^{(2)}_{\bw} Q(\bw|\bw^\ast) \in \reals^{d(k-1) \times d (k-1)}$ is given by
\begin{align}
-\nabla_{\bw_i,\bw_j} Q(\bw|\bw^\ast) = \begin{cases}
\Expect[ p_i(\bx) (1-p_i(\bx)) \cdot \bx \bx^\top], & j=i \\
\Expect[ -p_i(\bx) p_j(\bx) \cdot \bx \bx^\top], & j \neq i
\end{cases}, \label{eq:blockhessian}
\end{align}
where $p_i(\bx)=\frac{e^{\bw_i^\top \bx}}{1+\sum_{j \in [k-1] e^{\bw_j^\top \bx} }}$. It is clear from \prettyref{eq:blockhessian} that $-\nabla^{(2)}_{\bw} Q(\bw|\bw^\ast)$ is positive semi-definite. Since we are interested in the strong convexity of $-Q(\cdot|\bw^\ast)$ which is equivalent to positive definiteness of the negative Hessian, it suffices to show that
\begin{align*}
\lambda \triangleq \inf_{w \in \Omega} \lambda_{\min}\pth{-\nabla^{(2)}_{\bw} Q(\bw|\bw^\ast)} > 0.
\end{align*}
Since the Hessian is continuous with respect to $\bw$ and consequently the minimum eigenvalue of it, there exists a $\bw' \in \Omega$ such that
\begin{align*}
\lambda = \lambda_{\min}\pth{-\nabla^{(2)}_{\bw'} Q(\bw'|\bw^\ast)}=\inf_{\norm{\ba}=1} \ba^\top \pth{-\nabla^{(2)}_{\bw'} Q(\bw'|\bw^\ast)} \ba,
\end{align*}
where $\ba=(\ba_1^\top,\ldots,\ba_{k-1}^\top)^\top \in \reals^{d(k-1)}$. In view of \prettyref{eq:blockhessian}, the above equation can be further simplified to
\begin{align}
\lambda= \inf_{\norm{\ba}=1} \Expect[\ba_{\bx}^\top M_{\bx} \ba_{\bx}],\label{eq:infattain}
\end{align}
where $\ba_{\bx}=(\ba_1^\top \bx,\ldots,\ba_{k-1}^\top \bx)^\top \in \reals^{k-1}$ and $M_{\bx}$ is given by
\begin{align*}
M_{\bx}(i,j) = \begin{cases}
p_i(\bx)(1-p_i(\bx)),  \quad i=j \\
-p_i(\bx)p_j(\bx), \quad i \neq j
\end{cases}
\end{align*}
Let the infimum in \prettyref{eq:infattain} is attained by $\ba^\ast$, \ie $\lambda= \Expect[(\ba_{\bx}^\ast)^\top M_{\bx} \ba_{\bx}^\ast ]$. For each $\bx$, $M_{\bx}$ is strictly diagonally dominant since $|M_{\bx}(i,i)|=p_i(\bx)(1-p_i(\bx))=p_i(\bx)\pth{\sum_{j \neq i, j \in [k]}p_j(\bx)}> p_i(\bx)\pth{\sum_{j \neq i, j \in [k-1]}p_j(\bx)}=\sum_{j \neq i}M(i,j)$. Thus $M_{\bx}$ is positive-definite and $(\ba_{\bx}^\ast)^\top M_{\bx} \ba_{\bx}^\ast >0$ whenever $\ba^\ast_{\bx} \neq 0$. Since $x$ follows a continuous distribution it follows that $\ba^\ast_{\bx} \neq 0$ with probability $1$ and thus $\lambda= \Expect[(\ba_{\bx}^\ast)^\top M_{\bx} \ba_{\bx}^\ast ] >0$.

Now it remains to show that \prettyref{assump:A3} too holds, \ie
\begin{align*}
\norm{\nabla Q(M(\bw)|\bw^\ast) - \nabla Q(M(\bw)|\bw)} \leq \gamma \norm{\bw -\bw^\ast}.
\end{align*}
Note that $\bw=(\bw_1^\top,\ldots,\bw_{k-1}^\top)^\top \in \reals^{d(k-1)}$. We will show that
\begin{align*}
\norm{(\nabla Q(M(\bw)|\bw^\ast))_i - ( \nabla Q(M(\bw)|\bw))_i} \leq \gamma_{\sigma} \norm{\bw -\bw^\ast}, \quad i \in [k-1],
\end{align*}
where $( \nabla Q(M(\bw)|\bw))_i \in \reals^{d}$ refers to the $i^{\text{th}}$ block of the gradient and $\gamma_\sigma \rightarrow 0$. Observe that
\begin{align*}
(\nabla Q(M(\bw)|\bw^\ast))_i - ( \nabla Q(M(\bw)|\bw))_i = \Expect \qth{ (p_{\bw}^{(i)}- p_{\bw^\ast}^{(i)}) \cdot \bx  }
\end{align*}
Let $\Delta=\bw-\bw^\ast$ and correspondingly $\Delta=(\Delta_1^\top,\ldots,\Delta_{k-1}^\top)^\top$ where $\Delta_i=\bw_i-\bw_i^\ast$. Thus it suffices to show that
\begin{align*}
\norm{ \Expect [(p_{\bw}^{(i)}- p_{\bw^\ast}^{(i)}) \cdot \bx ] } \leq \gamma_\sigma \norm{\Delta}.
\end{align*}
Or equivalently,
\begin{align*}
\Expect [(p_{\bw}^{(i)}- p_{\bw^\ast}^{(i)}) \inner{ \bx}{\tilde{\Delta}} ] \leq \gamma_\sigma \norm{\Delta} \|\tilde{\Delta}\|, \quad \forall \tilde{\Delta} \in \reals^d. 
\end{align*}
We consider the case $i=1$. The proof for the other cases is similar. Recall that 
\begin{align*}
p^{(1)}_{\bw} =  \frac{ p_{1}(\bx) \calN(y|g(\ba_1^\top \bx),\sigma^2)}{\sum_{j \in [k] } p_{j}(\bx) \calN(y|g(\ba_j^\top \bx),\sigma^2)}, \quad p_{i}(\bx)= \frac{e^{\bw_i^\top \bx}}{1+\sum_{ j \in [k-1]}e^{\bw_j^\top \bx}}, \quad i \in [k-1].
\end{align*}
For simplicity we define $N_i=\calN(y|g(\ba_1^\top \bx),\sigma^2)$. It is straightforward to verify that
\begin{align*}
\nabla_{\bw_j} p_i(\bx) = \begin{cases}
p_i(\bx)(1-p_i(\bx)) \cdot \bx, & j=i \\
-p_i(\bx) p_j(\bx) \cdot \bx, & j \neq i
\end{cases}
\end{align*}
Thus 
\begin{align*}
\nabla_{\bw_1} (p^{(1)}_{\bw}) &= \nabla_{\bw_1} \pth{\frac{p_1(\bx)N_1}{\sum_{i=1}^N p_i(\bx) N_i}} \\
&=\frac{\pth{\sum_{i=1}^N p_i(\bx) N_i} p_1(\bx)(1-p_1(\bx))N_1 - p_1(\bx)N_1 \pth{-\sum_{j \neq 1} p_j(\bx)p_1(\bx) N_j +p_1(\bx)(1-p_1(\bx))N_1}}{\pth{\sum_{i=1}^N p_i(\bx) N_i}^2}\cdot \bx\\
&=\frac{p_1(\bx) N_1 \pth{\sum_{j \geq 2} p_j(\bx)N_j}}{\pth{\sum_{i=1}^N p_i(\bx) N_i}^2} \cdot \bx \\
& \triangleq R_1(\bx,y,\bw,\sigma) \cdot \bx
\end{align*}
Similarly,
\begin{align*}
\nabla_{\bw_i} (p^{(1)}_{\bw}) & =\frac{p_1(\bx) p_i(\bx) N_1 N_i}{\pth{\sum_{i=1}^N p_i(\bx) N_i}^2} \cdot \bx, \quad i \neq 1, \\
& \triangleq  R_i(\bx,y,\bw,\sigma) \cdot \bx.
\end{align*}
Let $\bw_u \triangleq \bw^\ast+ u \Delta, u \in [0,1]$ and $f(u) \triangleq p_{\bw_u}^{(1)}$. Thus 
\begin{align*}
p_{\bw}^{(1)}-p_{\bw^\ast}^{(1)}=f(1)-f(0)&=\int_{0}^1 f'(u) du \\
&= \int_0^1 \pth{\sum_{i \in [k-1]} \inner{\nabla_{\bw_i} (p_{\bw_u}^{(1)}) }{\Delta_i}  } du  \\
&=\sum_{i \in [k-1]} \int_0^1 R_i(\bx,y,\bw,\sigma) \inner{\bx}{\Delta_i} du .
\end{align*}
So we get
\begin{align*}
\Expect [(p_{\bw}^{(1)}- p_{\bw^\ast}^{(1)}) \inner{ \bx}{\tilde{\Delta}} ] &= \sum_{i \in [k-1]} \int_0^1 \Expect[ R_i(\bx,y,\bw_u,\sigma) \inner{\bx}{\Delta_i} \inner{\bx}{\tilde{\Delta}}] du \\
& \leq \sum_{i \in [k-1]} \int_0^1 \sqrt{\Expect[R_i(\bx,y,\bw_u,\sigma)^2] \Expect[\inner{\bx}{\Delta_i}^2 \inner{\bx}{\tilde{\Delta}}^2]} du\\
&\leq \sum_{i \in [k-1]} \int_0^1 \sqrt{\Expect[R_i(\bx,y,\bw_u,\sigma)^2]} \pth{\sqrt{3}\norm{\Delta_i}\|\tilde{\Delta}\|} du \\
&\leq \sum_{i \in [k-1]} \int_0^1 \sqrt{\Expect[R_i(\bx,y,\bw_u,\sigma)^2]} \pth{\sqrt{3}\norm{\Delta}\|\tilde{\Delta}\|} du \\
&=\underbrace{\pth{\sum_{i \in [k-1]} \int_0^1 \sqrt{\Expect[R_i(\bx,y,\bw_u,\sigma)^2]} du  }}_{\gamma^{(1)}_\sigma}\pth{\sqrt{3}\norm{\Delta}\|\tilde{\Delta}\|}
\end{align*}
Now our goal is to show that $\Expect[R_i(\bx,y,\bw_u,\sigma)^2] \rightarrow 0$ as $\sigma \rightarrow 0$. For $i=1$, we have
\begin{align*}
R_1(\bx,y,\bw_u,\sigma)^2 = \pth{\frac{\sum_{j \geq 2} p_1(\bx)  p_j(\bx)N_1 N_j}{\pth{\sum_{i=1}^N p_i(\bx) N_i}^2}}^2 \leq k \sum_{j \geq 2} \pth{\frac{ p_1(\bx)  p_j(\bx)N_1 N_j}{\pth{\sum_{i=1}^N p_i(\bx) N_i}^2}}^2 \leq k \sum_{j \geq 2} \pth{\frac{ p_1(\bx)  p_j(\bx)N_1 N_j}{(p_1(\bx)N_1 +p_j(\bx)N_j)^2}}^2
\end{align*}
Similarly,
\begin{align*}
R_i(\bx,y,\bw_u,\sigma)^2 \leq \pth{\frac{ p_1(\bx)  p_i(\bx)N_1 N_i}{(p_1(\bx)N_1 +p_i(\bx)N_i )^2}}^2, \quad \forall i \neq 1, i \in [k-1].
\end{align*}
For $\bw=\bw_u$ and $i \neq 1$, we have that
\begin{align*}
\frac{ p_1(\bx)  p_i(\bx)N_1 N_i}{(p_1(\bx)N_1 +p_i(\bx)N_i )^2} &= \frac{e^{\bw_1^\top \bx}e^{\bw_i^\top \bx} e^{- \frac{(y-g(\ba_1^\top \bx))^2}{2\sigma^2}} e^{- \frac{(y-g(\ba_i^\top \bx))^2}{2\sigma^2}}}{\pth{e^{\bw_1^\top \bx}e^{- \frac{(y-g(\ba_1^\top \bx))^2}{2\sigma^2}}+e^{\bw_i^\top \bx}e^{- \frac{(y-g(\ba_i^\top \bx))^2}{2\sigma^2}} }^2} \leq \frac{1}{4}\\
&=\frac{e^{\bw_1^\top \bx}e^{\bw_i^\top \bx} e^{ \frac{(y-g(\ba_1^\top \bx))^2-(y-g(\ba_i^\top \bx))^2}{2\sigma^2}}}{\pth{e^{\bw_1^\top \bx}+e^{\bw_i^\top \bx}  e^{ \frac{(y-g(\ba_1^\top \bx))^2-(y-g(\ba_i^\top \bx))^2}{2\sigma^2}} }^2}\\
& \xrightarrow{\sigma \rightarrow 0} 0.
\end{align*}
Thus, by Dominated Convergence Theorem, $\Expect[R_i(\bx,y,\bw_u,\sigma)^2 ] \rightarrow 0$ for each $u \in [0,1]$. To show that $\int_0^1 \Expect[R_i(\bx,y,\bw_u,\sigma)^2 ] du \rightarrow 0$, we can now follow the same analysis as in the proof of \prettyref{thm:twoexpertsEM} from \prettyref{eq:needagain} on-wards (replacing $\bw$ there with $\bw_1-\bw_i$) which ensures that $\gamma^{(1)}_\sigma$ in our case converges to zero. Similarly for other $i \in [k-1]$, we get that $\gamma^{(i)} \rightarrow 0$. Taking $\gamma_\sigma=\gamma^{(1)}_\sigma+\ldots+\gamma^{(k-1)}_\sigma$ and $\kappa_\sigma=\frac{\gamma_\sigma}{\lambda}$ completes the proof.

\end{proof}
\section{Gradient EM algorithm}
\label{app:twoexpertsgradEM}
In this section, we provide the convergence guarantees for the gradient EM algorithm. For simplicity, we prove the results for $k=2$ and $(\ba_1,\ba_2)=(\ba_1^\ast,\ba_2^\ast)$. Thus we want to learn the gating parameter $\bw^\ast$ in this setting. The results for the general case follow essentially the same proof as that of \prettyref{thm:twoexpertsEM}. In particular, our \prettyref{thm:twoexpertsgradEM} can be viewed as a generalization of \prettyref{lmm:contraction}. Together with \prettyref{lmm:robustness}, extension to general $k$ is straightforward.

Note that in the M-step of the EM algorithm, instead of maximizing $Q(\cdot|\bw_t)$, we can chose an iterate so that it increases the $Q$ value instead of fully maximizing it, \ie 
$Q(\bw_{t+1}|\bw_t) \geq Q(\bw_t|\bw_t)$. Such a procedure is termed as \emph{generalized EM}. \emph{Gradient EM} is an example of generalized EM in which we take an ascent step in the direction of the gradient of $Q(\cdot|\bw_t)$ to produce the next iterate, \ie
\begin{align*}
\bw_{t+1} = \bw_t+ \alpha \nabla Q(\bw_t|\bw_t),
\end{align*}
where $\alpha >0$ is a suitably chosen step size and the gradient is with respect to the first argument. To account for the constrained optimization, we can include a projection step. Mathematically,
\begin{align*}
\bw_{t+1}= G(\bw_t), \quad G(\bw)=\Pi_{\Omega}(\bw+\alpha \nabla Q(\bw|)\bw),
\end{align*}
where $\Pi_\Omega$ refers to the projection operator. Our next result establishes that the iterates of the gradient EM algorithm too converge geometrically for an appropriately chosen step size $\alpha$. 
\begin{theorem}
\label{thm:twoexpertsgradEM}
Suppose that the domain $\Omega=\{\bw\in \reals^d: \|\bw\|_2 \leq 1\}$ and $(\ba_1,\ba_2)=(\ba_1^\ast,\ba_2^\ast)$. Then there exist constants $\alpha_0>0$ and $\sigma_0 >0$ such that for any step size $0<\alpha \leq \alpha_0$ and noise variance $\sigma<\sigma_0$, the gradient EM updates on the gating parameter $\{\bw\}_{t \geq 0}$ converge geometrically to the true parameter $\bw^\ast$, \ie
\begin{align*}
\norm{\bw_t-\bw^\ast} \leq \pth{\rho_\sigma}^t \norm{\bw_0 -\bw^\ast},
\end{align*}
where $\rho_\sigma$ is a dimension-independent constant depending on $g$ and $\sigma$.
\end{theorem}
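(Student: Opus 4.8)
The plan is to show that the gradient-EM map $G(\bw) = \Pi_\Omega(\bw + \alpha \nabla Q(\bw|\bw))$ is a global contraction toward $\bw^\ast$ once $\alpha$ is small and $\sigma < \sigma_0$, and then iterate to obtain the geometric rate. Two global estimates already produced inside the proof of \prettyref{lmm:contraction} do the heavy lifting: the strong-concavity bound $-\nabla^2 Q(\bw|\bw^\ast) \succcurlyeq \lambda I$ valid for every $\bw \in \Omega$ (with $\lambda \approx 0.14$, see \prettyref{eq:strongconvexity}), and the first-order stability bound $\norm{\nabla Q(\bw|\bw) - \nabla Q(\bw|\bw^\ast)} \leq \gamma_\sigma \norm{\bw - \bw^\ast}$ for all $\bw \in \Omega$, with $\gamma_\sigma \to 0$ as $\sigma \to 0$. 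The latter is exactly the quantity bounded there, since $\nabla Q(\bw'|\bw) - \nabla Q(\bw'|\bw^\ast) = \Expect[(p_1(\bx,y,\bw) - p_1(\bx,y,\bw^\ast)) \cdot \bx]$ does not depend on the first argument $\bw'$, so the stability estimate transfers verbatim to the self-map.

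First I would record that $\bw^\ast$ is a fixed point of $G$ for every $\alpha > 0$: since $M(\bw^\ast) = \bw^\ast$ maximizes $Q(\cdot|\bw^\ast)$ over the convex set $\Omega$, the projected-gradient stationarity condition gives $\bw^\ast = \Pi_\Omega(\bw^\ast + \alpha \nabla Q(\bw^\ast|\bw^\ast))$, whether or not $\bw^\ast$ lies on the boundary (this lets me avoid assuming $\nabla Q(\bw^\ast|\bw^\ast) = 0$). Using non-expansiveness of $\Pi_\Omega$ and writing $\Delta = \bw - \bw^\ast$,
\begin{align*}
\norm{G(\bw) - \bw^\ast} &\leq \norm{\Delta + \alpha\pth{\nabla Q(\bw|\bw) - \nabla Q(\bw^\ast|\bw^\ast)}}.
\end{align*}
I then split the gradient difference as $[\nabla Q(\bw|\bw) - \nabla Q(\bw|\bw^\ast)] + [\nabla Q(\bw|\bw^\ast) - \nabla Q(\bw^\ast|\bw^\ast)]$. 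The first bracket has norm $\leq \gamma_\sigma\norm{\Delta}$ by stability; the second equals $-\bar{H}\Delta$, where $\bar{H} = \int_0^1 \pth{-\nabla^2 Q(\bw^\ast + u\Delta\,|\,\bw^\ast)}\, du$ is symmetric positive definite with spectrum in $[\lambda, L]$ along the segment, which stays in $\Omega$ by convexity. Here $L$ is a finite smoothness constant read off from the Stein-diagonalized Hessian $-\nabla^2 Q(\bw|\bw^\ast) = \Expect[f'''(\norm{\bw}Z)]\,\bw\bw^\top + \Expect[f'(\norm{\bw}Z)]\,I$, whose eigenvalues are uniformly bounded because $f'$ and $f'''$ are bounded and $\norm{\bw} \leq 1$.

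Putting the pieces together, $\norm{G(\bw) - \bw^\ast}$ is controlled by $\norm{(I - \alpha\bar{H})\Delta} + \alpha\gamma_\sigma\norm{\Delta}$. Choosing $\alpha_0 = 1/L$, for any $0 < \alpha \leq \alpha_0$ the symmetric matrix $I - \alpha\bar{H}$ has all eigenvalues in $[1-\alpha L,\, 1-\alpha\lambda]\subseteq[0,\,1-\alpha\lambda]$, hence operator norm $1 - \alpha\lambda$, giving $\norm{G(\bw) - \bw^\ast} \leq \pth{1 - \alpha(\lambda - \gamma_\sigma)}\norm{\Delta}$. Picking $\sigma_0$ so that $\gamma_\sigma < \lambda$ for $\sigma < \sigma_0$ makes $\rho_\sigma \triangleq 1 - \alpha(\lambda - \gamma_\sigma) \in [0,1)$, and since every estimate used holds for all $\bw \in \Omega$ the contraction is global; iterating yields $\norm{\bw_t - \bw^\ast} \leq \rho_\sigma^t \norm{\bw_0 - \bw^\ast}$. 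The main obstacle I anticipate is bookkeeping rather than conceptual: one must confirm that the stability constant $\gamma_\sigma$ from \prettyref{lmm:contraction} applies with the moving first argument $\bw$ (handled by the argument-independence noted above) and pin the uniform smoothness bound $L$ so that the step-size threshold $\alpha_0$ is genuinely $\sigma$-independent; the contraction factor then inherits $\gamma_\sigma \to 0$, so $\rho_\sigma$ can be driven arbitrarily close to $1-\alpha\lambda<1$ by shrinking $\sigma$.
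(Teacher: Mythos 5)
Your proposal is correct, but it takes a more self-contained route than the paper. The paper's proof uses the same three global ingredients you do --- strong concavity $\lambda \approx 0.14$ from \prettyref{eq:strongconvexity}, a smoothness bound on $-\nabla^2 Q(\cdot|\bw^\ast)$ obtained from the same Stein-diagonalized Hessian you invoke (the paper computes $\mu = 0.25$ via $\sup_{0\le\alpha\le1}$ of the eigenvalues), and the stability constant $\gamma_\sigma \to 0$ from the proof of \prettyref{lmm:contraction} --- but then it simply cites Theorem 3 of \cite{bala17} as a black box, with step size $\alpha_0 = \tfrac{2}{\mu+\lambda}$ and contraction factor $\rho_\sigma = 1 - \tfrac{2\lambda+2\gamma_\sigma}{\mu+\lambda}$ (the ``$+$'' there is evidently a sign slip; your form $1-\alpha(\lambda-\gamma_\sigma)$ has the correct monotonicity in $\gamma_\sigma$). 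You instead re-derive the contraction inline: fixed-point of $G$ at $\bw^\ast$ via first-order optimality over the convex set (correctly avoiding the assumption $\nabla Q(\bw^\ast|\bw^\ast)=0$, which matters if $\bw^\ast$ sits on the boundary of $\Omega$), non-expansiveness of $\Pi_\Omega$, a mean-value Hessian $\bar H$ with spectrum in $[\lambda, L]$, and the stability bound for the remaining bracket. Your explicit observation that $\nabla Q(\bw'|\bw) - \nabla Q(\bw'|\bw^\ast) = \Expect\qth{\pth{p_1(\bx,y,\bw)-p_1(\bx,y,\bw^\ast)}\cdot\bx}$ is independent of the first argument $\bw'$ is a point the paper leaves implicit (its \prettyref{assump:A3} is phrased with first argument $M(\bw)$, while gradient EM needs it at the current iterate), so your bookkeeping here is warranted and closes a gap the paper glosses over. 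What each approach buys: the paper's proof is shorter and inherits the sharper step size $\tfrac{2}{\mu+\lambda}$ and rate from the cited theorem; yours is self-contained, makes the projection and boundary issues explicit, and exposes exactly which global estimates drive the contraction, at the cost of a slightly more conservative threshold $\alpha_0 = 1/L$.
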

\begin{remark}\normalfont
The condition $\sigma<\sigma_0$ ensures that the Lipschitz constant $\rho_\sigma$ for the map $G$ is strictly less than $1$. The constant $\alpha_0$ depends only on two universal constants which are nothing but the strong-concavity and the smoothness parameters for the function $Q(\cdot|\bw^\ast)$.
\end{remark}
\begin{proof}
In addition to the assumptions of \prettyref{app:reviewem}, if we can ensure that the map $-Q(\cdot|\bw^\ast)$ is $\mu$-smooth, then the proof follows from Theorem $3$ of \cite{bala17} if we choose $\alpha_0=\frac{2}{\mu+\lambda}$ where $\lambda$ is the strong-convexity parameter of $-Q(\cdot|\bw^\ast)$. The strong-convexity is already established in \prettyref{app:twoexpertsEM}. To find the smoothness parameter, note that
\begin{align*}
- \nabla^{2} Q(\bw|\bw^\ast)& = \Expect \qth{f'(\bw^\top \bx) \cdot \bx \bx^\top}, \\
&= \Expect \qth{f'''(\bw^\top \bx)} \cdot \bw \bw^\top + \Expect[f'(\bw^\top \bx)] \cdot I \\
&=\Expect[f'''(\norm{\bw} Z)] \cdot  \bw \bw^\top + \Expect[f'(\norm{\bw} Z)] \cdot I, \quad Z \sim \calN(0,1) \\
& \preceq \sup_{0 \leq \alpha \leq 1} \min \sth{\Expect[f'(\alpha Z)],\Expect[f'(\alpha Z)] + \alpha^2 \Expect[f'''(\alpha Z)]} \cdot I\\
&= \underbrace{0.25}_{\mu} \cdot  I.
\end{align*}
The contraction parameter is then given by 
\begin{align*}
\rho_\sigma= 1- \frac{2 \lambda+2\gamma_\sigma}{\mu+\lambda}.
\end{align*}
Since $\gamma_\sigma \xrightarrow{\sigma \rightarrow 0} 0$, $\rho_\sigma<1$ whenever $\sigma< \sigma_0$ for a constant $\sigma_0$. 
\end{proof}
\section{Additional experiments}
\label{app:additionalexp}

\subsection{Synthetic data}
\label{app:app_synthetic_data}

\begin{figure*}[h]
\begin{subfigure}{0.33\textwidth}
\includegraphics[width=\linewidth]{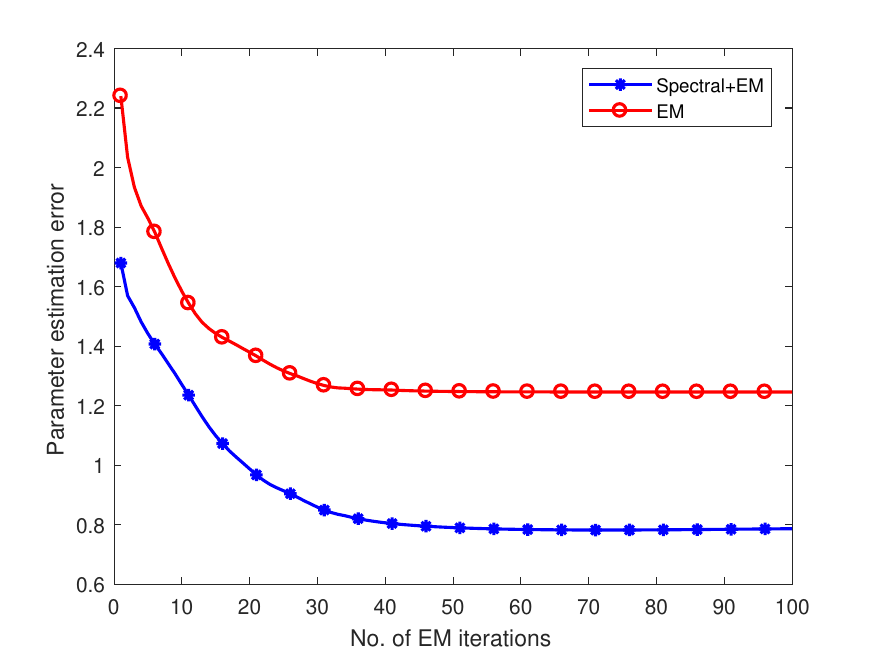}
\caption{} \label{fig:2a}
\end{subfigure}
\begin{subfigure}{0.33\textwidth}
\includegraphics[width=\linewidth]{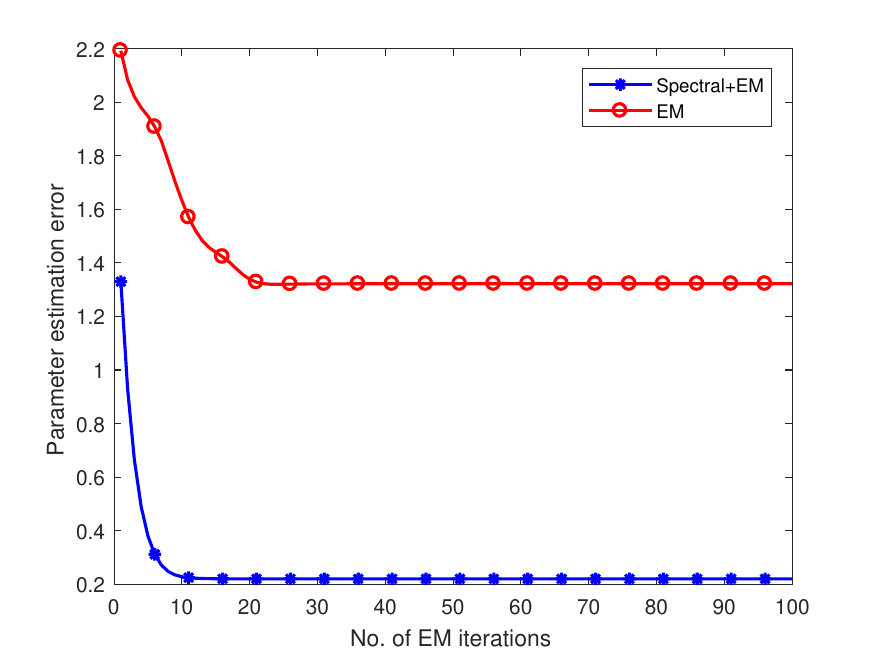}
\caption{} \label{fig:2b}
\end{subfigure}
\begin{subfigure}{0.33\textwidth}
\includegraphics[width=\linewidth]{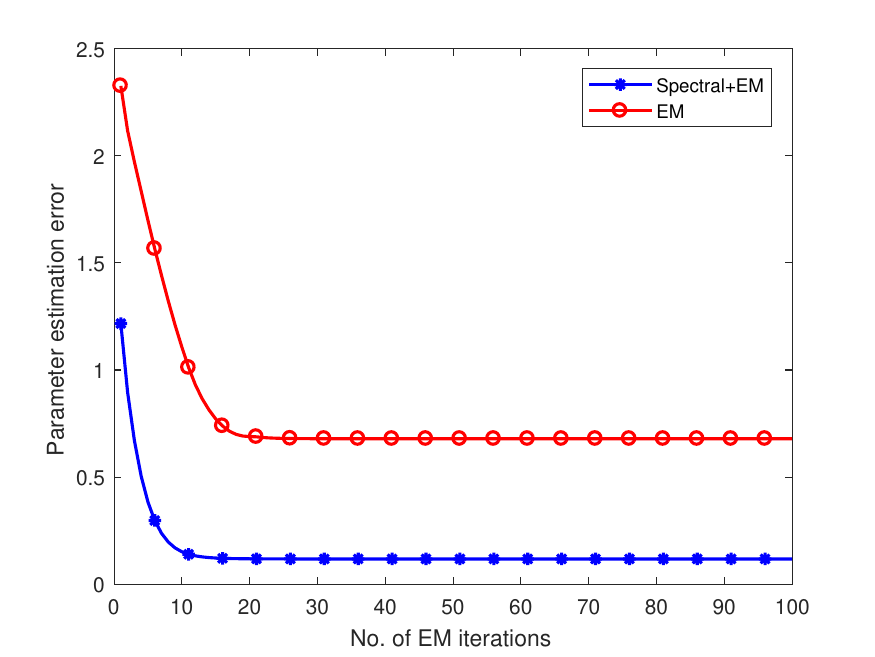}
\caption{} \label{fig:2c}
\end{subfigure}
\caption{Plot of parameter estimation error with varying number of samples($n$): (a) $n=1000$ (b) $n=5000$. (c) $n=10000$.} \label{fig:variablen}
\end{figure*}

In \prettyref{fig:variablen}, we varied the number of samples our data set and fixed the other set
of parameters to $k = 3, d = 5, \sigma = 0.5$.

\begin{figure*}[h]
\begin{subfigure}{0.45\textwidth}
\includegraphics[width=\linewidth]{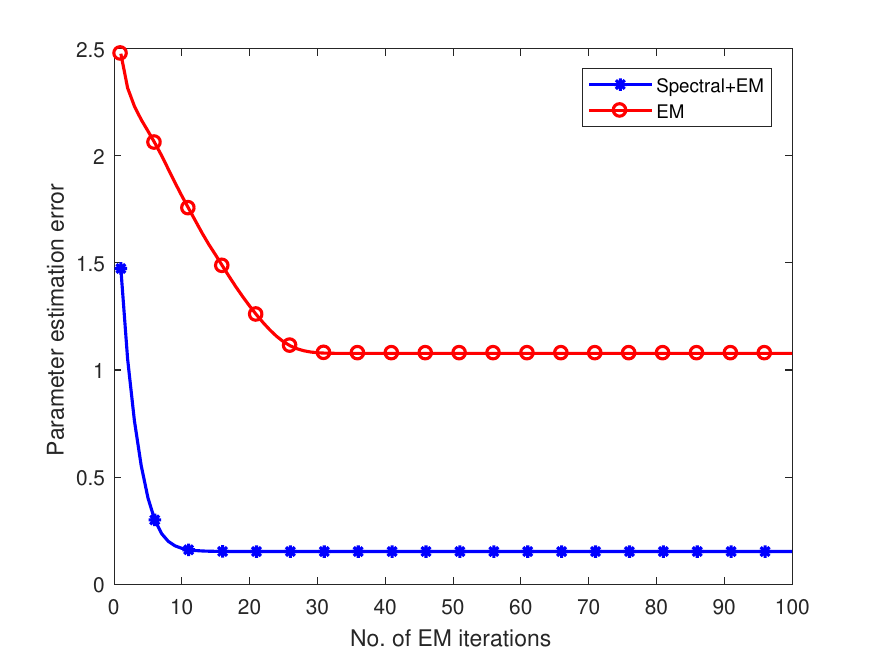}
\caption{} \label{fig:1a}
\end{subfigure}
\hspace*{\fill} 
\begin{subfigure}{0.45\textwidth}
\includegraphics[width=\linewidth]{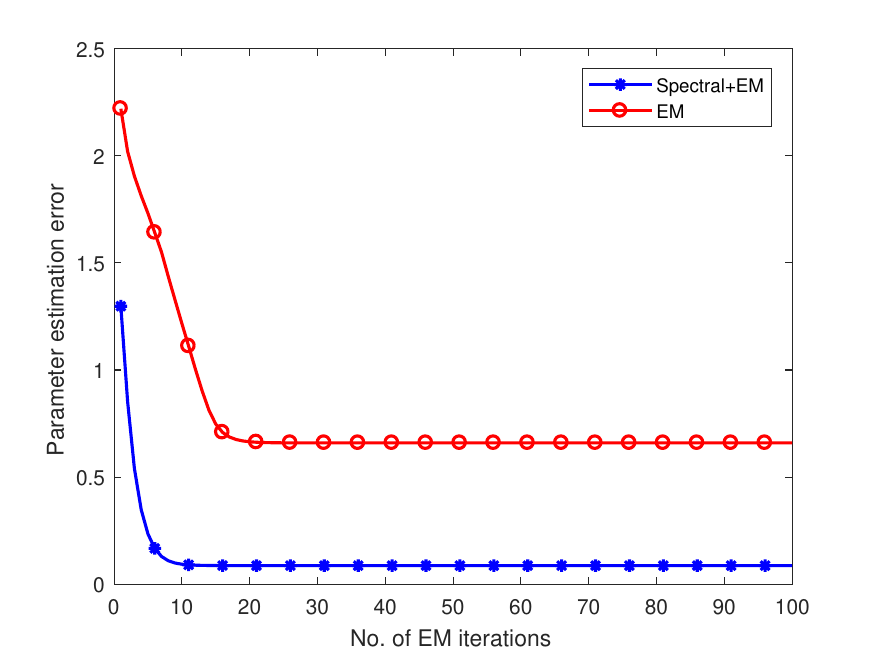}
\caption{} \label{fig:1b}
\end{subfigure}
\caption{Parameter estimation error for the sigmoid and ReLU nonlinearities respectively.} \label{fig:variablenonlinear}
\end{figure*}

In \prettyref{fig:variablenonlinear} we repeated our experiments for the choice of $n=10000, d=5, k=3$ for two different popular choices of non-linearities: sigmoid and ReLU. The same conclusion as in the linear setting holds in this case too with our algorithm outperforming the EM consistently.

\subsection{Real data}
\label{app:app_real_data}
%


\begin{figure*}[t]
\centering
\includegraphics[width=.3\textwidth]{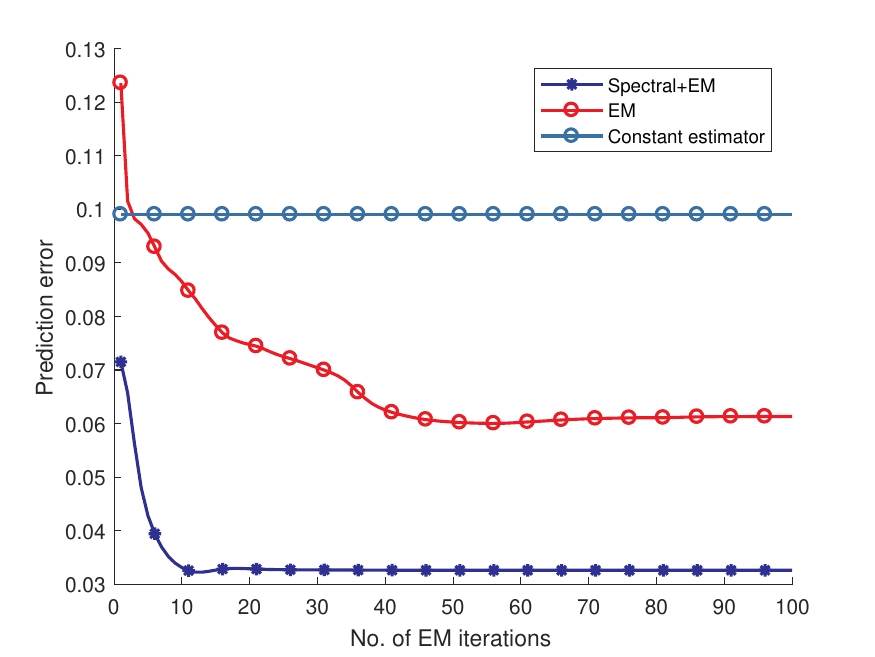}\quad
\includegraphics[width=.3\textwidth]{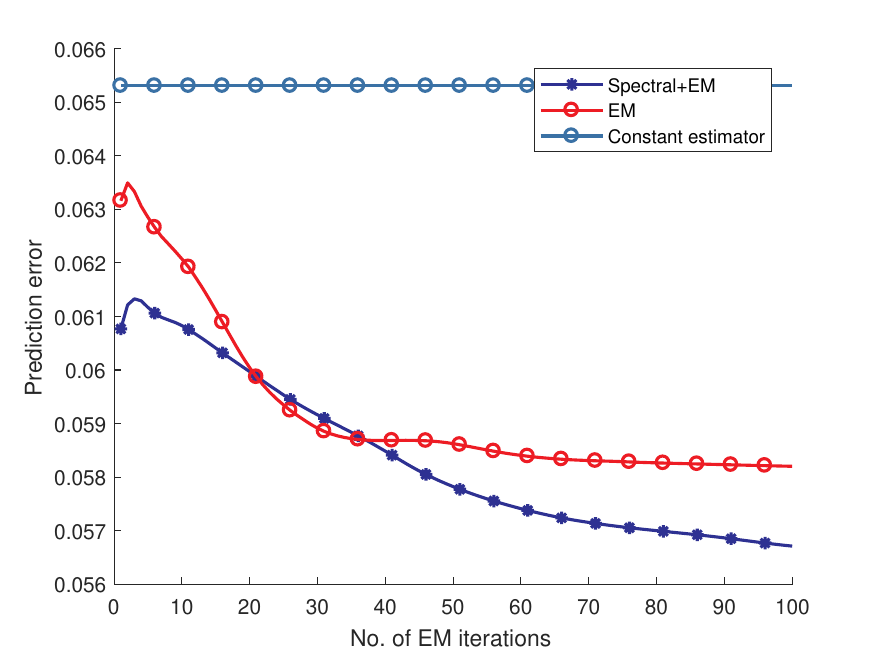}\quad
\includegraphics[width=.3\textwidth]{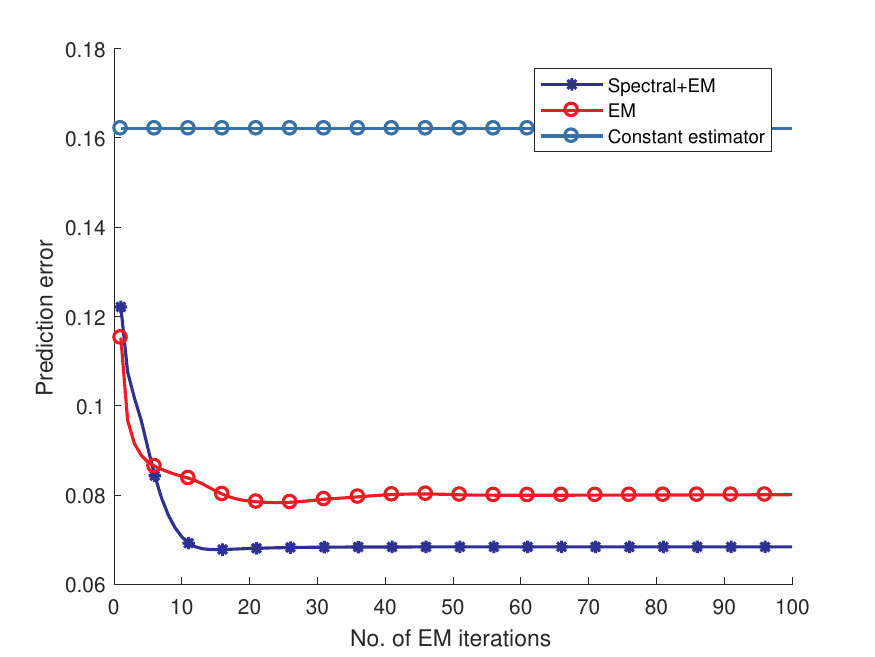}
%
%

\caption{Prediction error for the concrete, stock portfolio and the airfoil data sets respectively.  }
\label{fig:final}
\end{figure*}

For real data experiments, we choose the $3$ standard regression data sets from the UCI Machine Learning Repository: Concrete Compressive Strength Data Set, Stock portfolio performance Data Set, and Airfoil Self-Noise Data Set \cite{concrete,stock,airfoil}. In all the three tasks, the goal is to predict the outcome or the response $y$ for each input $\bx$, which typically contains some task specific attributes. For example, in the concrete compressive strength, the task is to predict the compressive strength of the concrete given its various attributes such as the component of cement, water, age, etc. For this data, the input $\bx \in \reals^8$ corresponds to $8$ different attributes of the concrete and the output $y \in \reals$ corresponds to its concrete strength. Similarly, for the stock portfolio data set the input $\bx \in \reals^6$ contains the weights of several stock-picking concepts such as  weight of the Large S/P concept,  weight of the Small systematic Risk concept, etc,. and the output $y$ is the corresponding excess return. The airfoil data set is obtained from a series of aerodynamic and acoustic tests of two and three-dimensional airfoil blade sections and the goal is predict the scaled sound pressure level (in dB) given the frequency, angle of attack, etc,. For all the tasks, we pre-processed the data by whitening the input and scaling the output to lie in $(-1,1)$. We randomly allotted $75\%$ of the data samples for training and the rest for testing. Our evaluation metric is the prediction error on the test set $(\bx_i,y_i)_{i=1}^n$ defined as 
\begin{align*}
\calE =  \frac{1}{n} \sum_{i=1}^n (\hat{y}_i-y_i)^2,
\end{align*}
where $\hat{y}_i$ corresponds to the predicted output response using the learned parameters. In other words, 
$$\hat{y}=  \sum_{i \in [k]} \frac{e^{\hat{\bw}_i^\top \bx}}{\sum_{j \in [k]}e^{\hat{\bw}_j^\top \bx}} \cdot g(\hat{\ba}_i^\top \bx).
$$
We ran the joint-EM algorithm (with $10$ different trails) on these tasks with various choices for $k \in \{2,\ldots,10\}, \sigma \in \{0.1,0.4,0.8,1 \}, g \in \{\text{linear}, \text{sigmoid},\text{ReLU} \}$ and found the best hyper-parameters to be $(k=3,\sigma=0.1$ and $ g = \text{linear})$, $(k=3,\sigma=0.4,g=\text{sigmoid})$ and $(k=3,\sigma=0.1,g=\text{linear})$ for the three datasets respectively. For this choice of best hyper-parameters found for joint-EM, we ran our algorithm. \prettyref{fig:final} highlights the predictive performance of our algorithm as compared to that of the EM. We also plotted the variance of the test data for reference and to gauge the performance of our algorithm. In all the settings our algorithm is able to obtain a better set of parameters resulting in smaller prediction error.

\end{document}